\def\tdotoggle{0} 
\newcommand{\secref}[1]{\hyperref[#1]{\S\ref{#1}}}
\providecommand{\tdotoggle}{1}
\newcommand{\mytodo}[1]{\ifnum\tdotoggle=1{#1}\fi}
\newcommand{\improve}[1]{\mytodo{\todo[linecolor=myBlue,backgroundcolor=myBlue!25,bordercolor=myBlue]{#1}}}
\newcommand{\tableoftodos}{\ifnum\tdotoggle=1 \listoftodos[Comments/To Do's] \fi}
\newcommand{\natinote}[1]{\mytodo{\todo[linecolor=myPurple,backgroundcolor=myPurple!25,bordercolor=myPurple]{#1}}}
\newcommand{\removed}[1]{}
\definecolor{Gred}{RGB}{219, 50, 54}
\definecolor{Ggreen}{RGB}{60, 186, 84}
\definecolor{Gblue}{RGB}{72, 133, 237}
\definecolor{Gyellow}{RGB}{247, 178, 16}
\definecolor{ToCgreen}{RGB}{0, 128, 0}
\definecolor{myGold}{RGB}{231,141,20}
\definecolor{myBlue}{rgb}{0.19,0.41,.65}
\definecolor{myPurple}{RGB}{175,0,124}
\DeclareMathOperator{\Ex}{\mathbb{E}}
\newcommand{\what}[1]{\widehat{#1}}
\def\eps{\varepsilon}
\newtheorem{theorem}{Theorem}
\newtheorem{lemma}{Lemma}
\newtheorem{corollary}{Corollary}
\newtheorem{definition}{Definition}
\newtheorem*{theorem*}{Theorem}
\newtheorem*{lemma*}{Lemma}
\newtheorem{fact}{Fact}
\newtheorem*{fact*}{Fact}
\newtheorem*{corollary*}{Corollary}
\theoremstyle{definition}
\newcommand{\set}[1]{\left \{ #1 \right \}}
\newcommand{\bit}{\{0,1\}}
\newcommand{\inabs}[1]{\left | #1 \right |}
\newcommand{\inparen}[1]{\left ( #1 \right )}
\newcommand{\insquare}[1]{\left [ #1 \right ]}
\newcommand{\inangle}[1]{\left \langle #1 \right \rangle}
\newcommand{\norm}[1]{\left\lVert #1 \right\rVert}
\newcommand{\ceil}[1]{\left \lceil #1 \right \rceil}
\newcommand{\bw}{{\bm w}}
\newcommand{\bbN}{\mathbb{N}}
\newcommand{\bbR}{\mathbb{R}}
\newcommand{\bbZ}{\mathbb{Z}}
\newcommand{\calA}{\mathcal{A}}
\newcommand{\calD}{\mathcal{D}}
\newcommand{\calL}{\mathcal{L}}
\newcommand{\calP}{\mathcal{P}}
\newcommand{\calW}{\mathcal{W}}
\newcommand{\calX}{\mathcal{X}}
\newcommand{\calY}{\mathcal{Y}}
\newcommand{\calZ}{\mathcal{Z}}
\newcommand{\PAC}{{\mathsf{PAC}}}
\newcommand{\PACTM}{{\mathsf{PAC_{TM}}}}
\newcommand{\SQ}{{\mathsf{SQ}}}
\newcommand{\SQA}{{\mathsf{SQ^{0/1}}}}
\newcommand{\SQTM}{{\mathsf{SQ_{TM}}}}
\newcommand{\bSQ}{{{\mathsf{bSQ}}}}
\newcommand{\bSQA}{{{\mathsf{bSQ^{0/1}}}}}
\newcommand{\bSQF}{{{\mathsf{bSQ^{\text{fgt}}}}}}
\newcommand{\bSQTM}{{{\mathsf{bSQ_{TM}}}}}
\newcommand{\bSQTMA}{{{\mathsf{bSQ_{TM}^{0/1}}}}}
\newcommand{\bSGD}{{\mathsf{bSGD}}}
\newcommand{\bSGDNNsigma}{{\mathsf{bSGD^\sigma_{NN}}}}
\newcommand{\fbSQ}{{{\mathsf{fbSQ}}}}
\newcommand{\fbSQA}{{{\mathsf{fbSQ^{0/1}}}}}
\newcommand{\fbSQTM}{{{\mathsf{fbSQ_{TM}}}}}
\newcommand{\fbSQTMA}{{{\mathsf{fbSQ_{TM}^{0/1}}}}}
\newcommand{\fbGD}{{\mathsf{fbGD}}}
\newcommand{\fbGDNN}{{\mathsf{fbGD_{NN}}}}
\newcommand{\fbGDNNsigma}{{\mathsf{fbGD^\sigma_{NN}}}}
\newcommand{\ERM}{\mathsf{ERM}}
\newcommand{\err}{\mathrm{err}}
\newcommand{\ovnabla}{\overline{\nabla}}
\newenvironment{thmA}
{}
{}
\newenvironment{thmB}
{\addtocounter{theorem}{-1}}
{}
\newenvironment{thmC}
{\addtocounter{theorem}{-1}}
{}
\newenvironment{thmD}
{\addtocounter{theorem}{-1}}
{}
\newenvironment{lemA}
{}
{}
\newenvironment{lemB}
{\addtocounter{lemma}{-1}}
{}
\newenvironment{lemC}
{\addtocounter{lemma}{-1}}
{}
\newenvironment{lemD}
{\addtocounter{lemma}{-1}}
{}
\newcommand{\sqloss}{\ell_{\mathrm{sq}}}
\newcommand{\prodset}{{\text{\LARGE $\times$}}}
\newcommand{\mR}{\mathbb{R}} 
\newcommand{\mN}{\mathbb{N}}
\newcommand{\mZ}{\mathbb{Z}}
\newcommand{\E}{\mathbb{E}}
\newcommand{\X}{\mathcal{X}}
\newcommand{\Y}{\mathcal{Y}}
\newcommand{\D}{\mathcal{D}}
\newcommand{\C}{\mathcal{C}}
\newcommand{\A}{\mathcal{A}}
\newcommand{\poly}{\mathrm{poly}}
\newcommand{\TIME}{\textsc{Time}}
\newcommand{\round}[2]{\left\langle#1\right\rangle_{#2}}
\begin{document}

\mytodo{
\thispagestyle{empty}
\tableoftodos
\tableofcontents
\newpage
\setcounter{page}{1}
}

\vspace*{-5mm}
\begin{center}
\hrule height 2.5pt
\vspace{4mm}{\LARGE On the Power of Differentiable Learning\\[2mm] versus PAC and SQ Learning}\\[3mm]
\hrule height 1.25pt
\vspace*{4mm}
\newcommand{\HUJI}{{\footnotesize Hebrew University of Jerusalem (HUJI)}}
\newcommand{\EPFL}{{\footnotesize \'Ecole polytechnique f\'ed\'erale de Lausanne (EPFL)}}
\newcommand{\TTIC}{{\footnotesize Toyota Technological Institute at Chicago (TTIC)}}
\newcommand{\MIT} {{\footnotesize Massachusetts Institute of Technology (MIT)}}
\newcommand{\Google}{{\footnotesize Google Research}}
\begin{tabular*}{0.93\textwidth}{ll @{\extracolsep{\fill}} r}
{\bf Emmanuel Abbe} & \EPFL & {\tt emmanuel.abbe@epfl.ch} \\
{\bf Pritish Kamath} & \Google  & {\tt pritish@ttic.edu} \\
{\bf Eran Malach} & \HUJI & {\tt eran.malach@mail.huji.ac.il} \\
{\bf Colin Sandon} & \MIT & {\tt csandon@mit.edu} \\
{\bf Nathan Srebro} & \TTIC & {\tt nati@ttic.edu}\\[4mm]
\multicolumn{3}{c}{{Collaboration on the Theoretical Foundations of Deep Learning} (\href{https://deepfoundations.ai/}{deepfoundations.ai})}
\end{tabular*}
\vspace{2mm}
\end{center}

\begin{abstract}
We study the power of learning via mini-batch stochastic gradient descent (SGD) on the population loss, and batch Gradient Descent (GD) on the empirical loss, of a differentiable model or neural network, and ask what learning problems can be learnt using these paradigms.
We show that SGD and GD can always simulate learning with statistical queries (SQ), but their ability to go beyond that depends on the precision $\rho$ of the gradient calculations relative to the minibatch size $b$ (for SGD) and sample size $m$ (for GD).
With fine enough precision relative to minibatch size, namely when $b \rho$ is small enough, SGD can go beyond SQ learning and simulate any sample-based learning algorithm and thus its learning power is equivalent to that of PAC learning; this extends prior work that achieved this result for $b=1$.  Similarly, with fine enough precision relative to the sample size $m$, GD can also simulate any sample-based learning algorithm based on $m$ samples.
In particular, with polynomially many bits of precision (i.e. when $\rho$ is exponentially small), SGD and GD can both simulate PAC learning regardless of the mini-batch size.
On the other hand, when $b \rho^2$ is large enough, the power of SGD is equivalent to that of SQ learning.
\end{abstract}

\section{Introduction}\label{sec:intro}

A leading paradigm that has become the predominant approach to learning is that of \emph{differentiable learning}, namely using a parametric function class $f_{\bw}(x)$, and learning by performing mini-{\bf b}atch {\bf S}tochastic {\bf G}radient {\bf D}escent ($\bSGD$) updates (using gradients of the loss on a mini-batch of $b$ independent samples per iteration) or {\bf f}ull-{\bf b}atch {\bf G}radient {\bf D}escent ($\fbGD$) updates (full gradient descent on the empirical loss, using the same $m$ samples in all iterations).  Feed-forward neural networks are a particularly popular choice for the parametric model $f_{\bw}$.  One approach to understanding differentiable learning is to think of it as a method for minimizing the empirical error of $f_{\bw}$ with respect to $\bw$, i.e.~as an empirical risk minimization ($\ERM$).  $\ERM$ is indeed well understood, and is in a sense a universal learning rule, in that any hypothesis class that is $\PAC$ learnable is also learnable using $\ERM$.  Furthermore, since poly-sized feed-forward neural networks can represent any poly-time computable function, we can conclude  that $\ERM$ on neural networks can efficiently learn any tractable problem. But this view of differentiable learning ignores two things.

Firstly, many $\ERM$ problems, including $\ERM$ on any non-trivial neural network, are highly non-convex and (stochastic) gradient descent might not find the global minimizer.  As we just discussed, pretending that $\bSGD$ or $\fbGD$ do find the global minimizer would mean we can learn all poly-time computable functions, which is known to be impossible\footnote{Subject to mild cryptographic assumptions, such as the existence of one-way functions, i.e.~the existence of cryptography itself, and where we are referring to $\bSGD/\fbGD$ running for polynomially many steps.} \cite[e.g.][]{kearns1994cryptographic, klivans2009cryptographic}.  In fact, even minimizing the empirical risk on a neural net with two hidden units, and even if we assume data is exactly labeled by such a network, is already NP-hard, and cannot be done efficiently with $\bSGD$/$\fbGD$ \citep{blum93training}.  We already see that $\bSGD$ and $\fbGD$ are not the same as $\ERM$, and asking ``what can be learned by $\bSGD$/$\fbGD$'' is quite different from asking ``what can be learned by $\ERM$''.  

Furthermore, $\bSGD$ and $\fbGD$ might also be more powerful than $\ERM$.  Consider using a highly overparametrized function class $f_{\bw}$, with many more parameters than the number of training examples, as is often the case in modern deep learning.  In such a situation there would typically be many empirical risk minimizers (zero training error solutions), and most of them would generalize horribly, and so the $\ERM$ principal on its own is not sufficient for learning \citep{neyshabur15insearch}.  Yet we now understand how $\fbGD$ and $\bSGD$ incorporate intricate implicit bias that leads to particular empirical risk minimizers which might well ensure learning \citep{soudry18implicit,nacson19stochastic}.

All this justifies studying differentiable learning as a different and distinct paradigm from empirical risk minimization. The question we ask is therefore:
\vspace{-1mm}%
\begin{center}%
\textbf{What can be learned by performing mini-batch stochastic gradient descent, or full-batch gradient descent,\\%
on some parametric function class $f_{\bw}$, and in particular on a feed-forward neural network?}\vspace{-1mm}%
\end{center}%
Answering this question is important not only for understanding the limits of what we could possibly expect from differentiable learning, but even more so in guiding us as to how we should study differentiable learning, and ask questions, e.g., about its power relative to kernel methods \cite[e.g.][]{yehudai19power,allenzhu19resnets,allenzhu20backward,li20beyondntk,daniely20parities,ghorbani19linearized,ghorbani20when,malach21quantifying}, or the theoretical benefits of different architectural innovations \cite[e.g.][]{malach2020computational}.

A significant, and perhaps surprising, advance toward answering this question was recently presented by \citet{abbe20universality}, who showed that $\bSGD$ with a single example per iteration (i.e.~a minibatch of size $1$) can simulate any poly-time learning algorithm, and hence is as powerful as $\PAC$ learning. On the other hand, they showed training using population gradients (infinite batch size, or even very large batch sizes) and  low-precision (polynomial accuracy, i.e.~a logarithmic number of bits of precision), is no more powerful than learning with Statistical Queries ($\SQ$), which is known to be strictly less powerful than $\PAC$ learning  \citep{kearns1998efficient, blum2003noise}\removed{ (see, e.g.,~\citep{reyzin20statistical} for a comprehensive survey about the $\SQ$ paradigm)}.  This seems to suggest non-stochastic $\fbGD$, or even $\bSGD$ with large batch sizes, is not universal in the same way as single-example $\bSGD$.  As we will see below, it turns out this negative result depends crucially on the allowed precision, and its relationship to the batch, or sample, size.

\paragraph{Our Contributions.} In this paper, we take a more refined view of this dichotomy, and consider learning with $\bSGD$ with larger mini-batch sizes $b>1$, as is more typically done in practice, as well as with $\fbGD$.  We ask whether the ability to simulate $\PAC$ learning is indeed preserved also with larger mini-batch sizes, or even full batch GD?  Does this universality rest on using single examples, or perhaps very small mini-batches, or can we simulate any learning algorithm also without such extreme stochasticity?  We discover that this depends on the relationship of the batch size $b$ and the precision $\rho$ of the gradient calculations.  That is, to understand the power of differentiable learning, we need to also explicitly consider the numeric precision used in the gradient calculations, where $\rho$ is an arbitrary additive error we allow. 

We first show that regardless of the mini-batch size, $\bSGD$ is always able to simulate any $\SQ$ method, and so $\bSGD$ is at least as powerful as $\SQ$ learning.  When the mini-batch size $b$ is large relative to the precision, namely $b =\omega(\log(n)/\rho^2)$, where $n$ is the input dimension and we assume the model size and number of iterations are polynomial in $n$, $\bSGD$ is not any more powerful than $\SQ$.  But when $b < 1/(8\rho)$, or in other words with fine enough precision $\rho<1/(8 b)$, $\bSGD$ can again simulate any sample-based learning method, and is as powerful as $\PAC$ learning (the number of SGD iterations and size of the model used depend polynomially on the sample complexity of the method being simulated, but the mini-batch size $b$ and precision $\rho$ do not, and only need to satisfy $b\rho<1/8$---see formal results in \cref{sec:main-results}). We show a similar result for $\fbGD$, with a dependence on the sample size $m$: with low precision (large $\rho$) relative to the sample size $m$, $\fbGD$ is no more powerful than $\SQ$\removed{\natinote{I removed the exact statement of precision, since it involves $T$ and $p$ which we didn't yet introduce}}. But with fine enough precision relative to the sample size, namely when $\rho<1/(8 m)$, $\fbGD$ can again simulate any sample-based learning method based on $m$ samples (see formal results in \cref{sec:fbGD}).

We see then, that with fine enough precision, differentiable learning with any mini-batch size, or even using full-batch gradients (no stochasticity in the updates), is as powerful as any sample-based learning method.  The required precision does depend on the mini-batch or sample size, but only linearly.  That is, the number of {\em bits of precision} required is only logarithmic in the mini-batch or sample sizes.  And with a linear (or even super-logarithmic) number of bits of precision (i.e.~$\rho=2^{-n}$), $\bSGD$ and $\fbGD$ can simulate arbitrary sample-based methods with any polynomial mini-batch size; this is also the case if we ignore issue of precision and assume exact computation (corresponding to $\rho=0$).

On the other hand, with low precision (high $\rho$, i.e.~only a few bits of precision, which is frequently the case when training deep networks), the mini-batch size $b$ plays an important role, and simulating arbitrary sample based methods is provably {\em not} possible using $\fbGD$, or with $\bSGD$ with a mini-batch size that is too large, namely $b=\omega(\log(n)/\rho^2)$.  Overall, except for an intermediate regime between $1/\rho$ and $\log(n)/\rho^2$, we can precisely capture the power of $\bSGD$.

\paragraph{Computationally Bounded and Unbounded Learning.} Another difference versus the work of \citeauthor{abbe20universality} is that we discuss both computationally tractable and intractable learning.  We show that poly-time $\PAC$ and $\SQ$ are related, in the sense described above, to $\bSGD$ and $\fbGD$ on a poly-sized neural network, whereas computationally unbounded $\PAC$ and $\SQ$ (i.e.~limited only by the number of samples or number of statistical queries, but not runtime) are similarly related to $\bSGD$ and $\fbGD$ on an arbitrary differentiable model $f_{\bw}$.  In fact, to simulate $\PAC$ and $\SQ$, we first construct an arbitrary $f_{\bw}$, and then observe that if the $\PAC$ or $\SQ$ method is poly-time computable, then the computations within it can be expressed as poly-size circuits, which can in turn be simulated as poly-size sub-networks, allowing us to implement $f_{\bw}$ as a neural net.

\paragraph{\boldmath Answering $\SQ$s using Samples.} Our analysis relies on introduction of a variant of $\SQ$ learning which we refer to as mini-{\bf b}atch {\bf S}tatistical {\bf Q}ueries ($\bSQ$, and we similarly introduce a full-batch variant, $\fbSQ$).  In this variant, which is related to the $\mathsf{Honest}$-$\SQ$ model \citep{yang01learning,yang05new}, statistical queries are answered using a mini-batch of samples drawn from the source distribution, up to some precision.  We first show that $\bSQ$ methods can always be simulated by $\bSGD$, by constructing a differentiable model where at each step the derivatives with respect to some of the parameters contain the answers to the statistical queries.  We then relate $\bSQ$ to $\SQ$ and $\PAC$, based on the relationship between the mini-batch size and precision.  In order to simulate $\PAC$ using $\bSQ$, we develop a novel ``sample extraction'' that uses mini-batch statistical queries (on independently sampled mini-batches) to extract a single sample drawn from the source distribution. This procedure might be of independent interest, perhaps also in studying privacy, where such an extraction is not desirable. Our study of the relationship of $\bSQ$ to $\SQ$ and $\PAC$, summarized in \cref{sec:bSQ}, also sheds light on how well the $\SQ$ framework captures learning by answering queries using empirical averages on a sample, which is arguably one of the main motivations for the inverse-polynomial tolerance parameter in the SQ framework. \removed{For the analogous results involving $\fbGD$, we introduce another variant of $\SQ$ learning, namely {\bf f}ull-{\bf b}atch {\bf S}tatistical {\bf Q}ueries ($\fbSQ$) where statistical queries are answered using the same batch of samples (see \cref{sec:fbGD}).\improve{decide whether we want to mention $\fbSQ$ here.}}

\section{Learning Paradigms}\label{sec:learn-defs}

We consider learning a predictor $f:\calX \to \bbR$ over an {\em input space} $\calX$, so as to minimize its {\em population loss} $\calL_\D(f) := \Ex_{(x,y) \sim \D}\, \ell(f(x), y)$ with respect to a {\em source distribution} $\D$ over $\calX \times \calY$, where $\ell:\bbR \times \calY \to \bbR_{\ge 0}$ is a loss function. Unless noted otherwise, we take the loss function to be the square-loss $\sqloss(\hat{y},y) = \frac{1}{2}(\hat{y}-y)^2$. For concreteness, we always take $\calY = \bit$ and $\calX = \bit^n$.

\paragraph{Differentiable Learning.} We study learning by (approximate) gradient descent on a differentiable model. Formally, a {\em differentiable model} of size $p$ is a mapping $f:\bbR^p \times \calX \to \bbR$, denoted $f_{\bw}(x)$, where $\bw \in \bbR^p$ are the parameters, or ``weights'', of the model, and for every $x \in \X$ there exists a gradient $\nabla_\bw f_\bw(x)$ for almost every $\bw \in \mR^p$ (i.e, outside of a measure-zero set of weights).\footnote{Allowing $f_\bw(x)$ to be non-differentiable on a measure zero set might seem overly generous, but it's simpler to state, and a more conservative restriction wouldn't affect our results.  Our simulations use either everywhere differentiable models, or neural networks with piece-wise linear activation, resulting in a piece-wise linear models with a finite number of pieces.}

A notable special case of differentiable models is that of neural networks, defined by a directed acyclic graph with a single output node (i.e. with zero out-degree) and $n+1$ input nodes (i.e. with zero in-degree) corresponding to the $n$ bits of $\calX$ and the constant $1$. Every edge $e$ corresponds to a weight parameter $w_e$. Computation proceeds recursively with each vertex $v$ returning the value $o_v$ obtained by applying an activation function $\sigma : \mR \to \mR$ on the linear combination of the values computed at its predecessors as specified by the edge weights $\bw = (w_e)_e$, that is,
$o_v := \sigma\inparen{\sum_{e\,=\,(u \to v)} w_{e} o_u}$.
The final output of the model on input $x$ is the value returned by the output node.

For a differentiable model $f_{\bw}(x)$,
an initialization distribution $\calW$ over $\mR^p$,
mini-batch size $b$,
gradient precision $\rho$,
and stepsize\footnote{The stepsize doesn't play an important role in our analysis. We can also allow variable or adaptive stepsize sequences without changing the results---for simplicity of presentation we stick with a fixed stepsize, and in fact in our constructions just use a fixed constant (not dependent on other parameters) stepsize of $\gamma=1$ or $\gamma=2$.} $\gamma$,
the mini-{\bf b}atch {\bf S}tochastic {\bf G}radient {\bf D}escent ($\bSGD$) method
operates by computing iterates of the form:
\begin{align}
    \bw^{(0)} & ~\sim~  \calW \notag\\
    \bw^{(t+1)} &~\gets~ \bw^{(t)} - \gamma g_{t} \label{eq:gd}
\end{align}
where $g_t$ is a {\bf\em $\rho$-approximate rounding} of the mini-batch (empirical) {\em clipped} gradient
\begin{equation}
\ovnabla \calL_{S_t}(f_{\bw^{(t)}}) := \frac{1}{b}\sum_{i=1}^b
[\nabla \ell(f_{\bw^{(t)}}(x_{t,i}),y_{t,i})]_1
\end{equation}
where the mini-batches $S_t = \left((x_{t,1},y_{t,1}),\ldots,(x_{t,b},y_{t,b})\right)\sim \D^b$ containing $b$ samples each are sampled independently at each iteration,
and $[\alpha]_1$ denotes entry-wise clipping of $\alpha$ to $[-1,1]$. We say that $u \in [-1,1]^p$ is a $\rho$-approximate rounding of $v \in [-1,1]^p$ if $u\in \rho \cdot \bbZ^p$ (namely, each entry of $u$ is an integral multiple of $\rho$), and $\norm{u-v}_\infty \le 3\rho/4$. Thus, overall $g_t$ is a valid gradient estimate if it satisfies
\begin{equation}\label{eq:grad-valid}
    g_t \in \rho \cdot \bbZ^p \quad\text{ and }\quad \|g_t - \ovnabla \calL_{S_t}(f_{\bw^{(t)}})\|_\infty \le 3\rho/4
\end{equation}

\paragraph{Precision, Rounding and Clipping.} The clipping and rounding we use captures using $d=-\log \rho$ bits of precision, and indeed we generally consider $\rho=2^{-d}$ where $d\in\mN$.

We consider {\em clipped} gradients because the precision $\rho$ makes sense only when considered relative to the scale of the gradients.  Clipping is an easy way to ensure we do not ``cheat'' by using very large magnitude gradients to circumvent the precision limit, and can be thought of as a way of handling overflow.  We note however that in all our simulation constructions, we always have $\norm{\nabla \ell(f_{\bw^{(t)}}(x),y)}_{\infty}\leq 1$ for all $\bw^{(t)}$ and all $(x,y)$, and hence clipping plays no role.  We could have alternatively said the method ``fails'' or is ``invalid'' if a larger magnitude gradient is encountered.

In our {\em rounding error} model, for any integer $q$, values in $(q\rho - \frac\rho4, q\rho+\frac\rho4)$ get rounded to $q\rho$, whereas, values in $[q\rho + \rho/4, q\rho + 3\rho/4]$ can get rounded either to $q\rho$ or $(q+1)\rho$.
This error model is a proxy for rounding errors in real finite precision arithmetic, where we have uncertainty about the least significant bit when representing real numbers in $[-1,1]$ with $d$ bits of precision, as this final bit represents a rounding of the ignored lower order bits.  Viewed another way, a $\rho$-approximate rounding of $v\in[-1,1]$ can be obtained by considering a $\rho/4$-approximation of $v$, i.e.~$\tilde{v}\in[-1,1]$ s.t.~$\inabs{v-\tilde{v}} \le \rho/4$, and then (deterministically) rounding $\tilde{v}$ to the nearest integer multiple of $\rho$.

\paragraph{\boldmath Learning with $\bSGD$.} A learning method $\A$ is said to be a $\bSGD(T,\rho,b,p,r)$ method if it operates by computing $\bSGD$ iterates for some differentiable model $f : \bbR^p \times \calX \to \bbR$ (i.e., with $p$ parameters), starting from some (randomized) initialization $\calW$ which uses $r$ random bits,\footnote{Namely, $\calW : \bit^r \to \bbR^p$ such that $\bw^{(0)} \sim \calW(s)$ for $s \sim$ uniform over $\bit^r$.}
with some stepsize $\gamma$, given $\rho$-approximate gradients over batches of size $b$; the output of $\A$ is the final iterate $f_{\bw^{(T)}} : \calX \to \bbR$. We say that $\A$ ensures error $\eps$ on a source distribution $\D$ if $\Ex [\sup \calL_D(f_{\bw^{(T)}})] \le \eps$, where the expectation is over both the initialization $\bw^{(0)}$ and the mini-batches $S_t \sim \D^b$, and the $\sup$ is over all gradient estimates $g_t$ satisfying \eqref{eq:grad-valid} at iterates where $f_{\bw}$ is differentiable at $\bw^{(t)}$ for all $x_{t,i}\in S_t$.\footnote{Formally, and for simplicity, if $f_{\bw}$ is not differentiable at $\bw^{(t)}$ for some $x_{t,i}$, we consider any $g_t\in[-1,1]^p$ as valid.  But recall that $f_{\bw}(x)$ is differentiable almost everywhere.  We could have required a more conservative behaviour without affecting any of our results.  In particular, in all our simulation constructions, $f_{\bw}(x)$ is {\em always} differentiable at $\bw^{(t)}$ encountered.}\footnote{Defining the error as $\Ex [\sup \calL_D(f_{\bw^{(T)}})]$ can be interpreted as allowing the rounding errors on $g_t$ to also depend on random samples in future steps.  A more conservative definition would involve $\Ex_{\bw^{(0)}} \Ex_{S_1} \sup_{g_1} \ldots \Ex_{S_T} \sup_{g_T} \calL_{\D}(f_{\bw^{(T)}})$. However, this distinction does not change any of our results and we stick to the simpler $\Ex \sup$ definition for convenience.}
We denote by $\bSGDNNsigma(T,\rho,b,p,r)$ the family of $\bSGD$ methods where the differentiable model is implemented by a neural network with $p$ parameters, using the poly-time computable\footnote{We say that $\sigma : \bbR \to \bbR$ is ``polynomial-time computable'' if there exists a Turing machine that, for any given $z \in \bbR$ and desired precision, computes $\sigma(z)$ and $\sigma'(z)$ to within the desired precision in time that is polynomial in the bit length of the representation of $z$ and the number of desired bits of precision.} activation function $\sigma$
and where the initialization distribution $\calW$ can be sampled from in $\poly(n)$ time.

\paragraph{\boldmath Full-batch Gradient Descent ($\fbGD$).} 
We also consider learning with {\bf f}ull-{\bf b}atch {\bf G}radient {\bf D}escent, i.e.~gradient descent on an empirical loss, where the entire training set of size $m$ is used in each iteration.  A method $\calA$ is a $\fbGD(T,\rho,m,p,r)$ if it operates similarly to a $\bSGD(T,\rho,b=m,p,r)$ method with the only difference being that the same batch of samples is used at each iteration, namely $S_t = S$ for all $t$, where $S\sim\calD^m$.  Similarly, $\fbGDNNsigma(T,\rho,m,p,r)$ is defined analogous to $\bSGDNNsigma(T,\rho,b,p,r)$, where we require the differentiable model to be a neural network.  As with $\bSGD$ methods, we say that $\A$ ensures error $\eps$ on a source distribution $\D$ if $\Ex [\sup \calL_D(f_{\bw^{(T)}})] \le \eps$, where the expectation is over both the initialization $\bw^{(0)}$ and the training set $S \sim \D^b$, and the $\sup$ is over all gradient estimates $g_t$ satisfying \eqref{eq:grad-valid} at iterates where $f_{\bw}$ is differentiable at $\bw^{(t)}$ for all $x_{i}\in S$.  Note that while $\bSGD(T,\rho,b,p,r)$ uses $Tb$ samples overall, $\fbGD(T,\rho,m,p,r)$ uses only $m$ samples in total.

\paragraph{\boldmath $\PAC$ and $\SQ$ Learning.} A learning method $\A$ is said to be a $\PAC(m,r)$ method\footnote{The usage of the acronym $\PAC$ here is technically improper as we use it to refer to a {\em method} and not a class of learning problems. We use it for notational convenience and historical reasons.}
if it takes in a set of samples $S \subseteq \X \times \Y$ of size $m$, uses $r$ bits of randomness
and returns a predictor $f : \X \to \bbR$. We say that $\A$ ensures error $\eps$ on a source distribution $\D$ if $\Ex[\calL_{\D}(f) ] \le \eps$, where the expectation is over $S \sim \D^m$ and the randomness in $\A$. Similarly, we say that $\A$ is a $\PACTM(m,r,\TIME)$ algorithm if it is a $\PAC(m,r)$ method which can be implemented using a Turing machine that runs in time $\TIME$.

A learning method $\A$ is said to be a {\em statistical-query}  $\SQ(k,\tau,r)$ method, if it operates in $k$ rounds where in round $t$ the method produces a {\em statistical-query} $\Phi_t : \calX \times \calY \to [-1,1]$ for which it receives a response $v_t$, and finally outputs a predictor $f : \calX \to \bbR$. Formally, each $\Phi_t$ is a function of a random string $R \in \bit^r$ and past responses $v_1, \ldots, v_{t-1}$. We say that $\A$ ensures error $\eps$ on a source distribution $\D$ if $\Ex_R [\sup \calL_{\D}(f)] \le \eps$, where the $\sup$ is over all ``valid'' $v_t \in [-1,1]$, namely
$
\inabs{v_t - \Ex_{\D} \Phi_t(x,y)} \le \tau
$. 
Similarly, we say that $\A$ is a $\SQTM(k,\tau,r,\TIME)$ algorithm if $\A$ is a $\SQ(k,\tau,r)$ method that can be implemented using a Turing machine that runs in time $\TIME$, using queries that can be computed in time $\TIME$.

\paragraph{Relating Classes of Methods.}
The subject of this work is to compare between what can be learnt with different classes of methods.  To do so, we define a general notion of when methods in class $\C$ can be {\em simulated by} methods in $\C'$, and thus $\C'$ can be said to be at least as powerful as $\C$.  For any method/algorithm $\A$ and for any source distribution $\D$, let $\err(\A, \D)$ be the infimum over $\eps$ such that $\A$ ensures error $\eps$ on $\D$.  We now define:
\begin{definition}\label{def:preceq}
For two classes of methods $\C, \C'$, and any $\delta\geq0$, we write $\C' \preceq_{\delta} \C$ if for every method $\A \in \C$ there exists a method $\A' \in \C'$ such that for every source distribution $\D$ we have $\err(\A',\D) \le \err(\A,\D) + \delta$.
\end{definition}
That is, $\C' \preceq_{\delta} \C$ means that {\em $\C'$ is at least as powerful as $\C$}. Observe that for all classes of methods $\C_1, \C_2, \C_3$, if $\C_1 \preceq_{\delta_1} \C_2$ and $\C_2 \preceq_{\delta_2} \C_3$ then $\C_1 \preceq_{\delta_1 + \delta_2} \C_3$.

\section{\boldmath Main Results : \texorpdfstring{$\bSGD$}{bSGD} versus \texorpdfstring{$\PAC$}{PAC} and \texorpdfstring{$\SQ$}{SQ}}\label{sec:main-results}

Our main result, given below as a four-part Theorem, establishes the power of $\bSGD$ learning relative to $\PAC$ (i.e.~arbitrary sample based) and $\SQ$ learning. As previously discussed, the exact relation depends on the mini-batch size $b$ and gradient precision $\rho$. First, we show that for any mini-batch size $b$, with fine enough precision $\rho$, $\bSGD$ can simulate $\PAC$.

\begin{thmA}
\begin{theorem}[$\PAC$ to $\bSGD$]\label{thm:PAC-to-bSGD}
\textbf{\boldmath For all $b$ and $\rho<1/(8b)$}, and for all $m, r, \delta$,
it holds that 
$$\bSGD(T'=O({mn}/{\delta}),{\bm \rho},{\bm b},p'=r+O(T'(n+\log b)),r') ~\preceq_{\delta}~ \PAC(m,r)\,.$$
where \removed{$T' = O(mn/\delta)$, $p' = r + O(T'(n + \log b))$ and} $r'=r + O(T'\log b)$.
Furthermore, if {\boldmath $\rho < \min\set{1/(8b), 1/12}$}, using the activation function in \cref{fig:activation}, for every runtime $\TIME$, it holds for $p' = \poly(n,m,r,\TIME,\rho^{-1},b,\delta^{-1})$ and the same $T',r'$ above that
$$\bSGDNNsigma(T',{\bm \rho},{\bm b},p',r') ~\preceq_{\delta}~ \PACTM(m,r,\TIME)$$
\end{theorem}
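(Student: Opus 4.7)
The plan is to prove Theorem~1a by decomposing the simulation into two stages: (i) $\bSGD$ at arbitrary batch size and precision can simulate a mini-batch statistical-query ($\bSQ$) method, and (ii) $\bSQ$ at precision $\rho<1/(8b)$ can, via a sample-extraction procedure, simulate arbitrary $\PAC$ methods. Chaining the two simulations gives the theorem by transitivity of $\preceq_\delta$.

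For stage~(i), I would exhibit, for any $\bSQ$ method $\B$, a differentiable model $f_\bw$ whose parameter vector carries dedicated sub-registers storing $\B$'s internal state, the encoding of the current query $\Phi_t$, and a scratchpad of past answers. The model is designed so that at iteration $t$ the gradient coordinate corresponding to answer-slot $t$ equals $\frac{1}{b}\sum_{i=1}^b \Phi_t(x_i,y_i)$: take $f_\bw(x)$ to be linear in a dummy weight $w_t$ with coefficient $\Phi_t(x)$, and rig the $y$-dependence of the loss so that $\partial_{w_t}\ell(f_\bw(x),y)=\Phi_t(x,y)$. Averaging over the mini-batch places the desired query value into the gradient, and after $\rho$-rounding the SGD update writes it into the scratchpad. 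A deterministic control-flow portion of the model, driven by the scratchpad, advances $\B$'s state and prepares $\Phi_{t+1}$. Clipping is inert because $\Phi_t\in[-1,1]$.

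For stage~(ii), the precision assumption $\rho<1/(8b)$ gives exact count recovery: the spacing between successive values of $k/b$ for integer $k\in\{0,\ldots,b\}$ is $1/b>8\rho$, so $\rho$-rounding of such a quantity recovers $k$, and hence $\sum_{i=1}^b \phi(x_i,y_i)$, exactly for any $\{0,1\}$-valued predicate $\phi$. To extract a single iid draw from $\D$ I would use bit-by-bit conditional sampling: maintain a partial prefix $u$ and, for each remaining bit $j$, draw a fresh mini-batch, obtain the exact counts $c=|\{i:(x_i,y_i)\text{ matches }u\}|$ and $c_1=|\{i:\text{matches }u\text{ and its }j\text{-th bit is }1\}|$, and, provided $c>0$, set the $j$-th bit of the extracted sample to $1$ with probability $c_1/c$ using internal random bits. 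Conditional on the value of $c$, the $c$ matching samples are iid from $\D\,|\,u$, so the extracted bit has exactly the correct conditional distribution under $\D\,|\,u$; iterating over $j$ yields a sample distributed exactly as $\D$. If $c=0$ at some step, restart the extraction of the current sample. A coupon-collector-style bound on the number of retries gives $O(n/\delta)$ iterations per extracted sample, so $T'=O(mn/\delta)$ iterations suffice to produce $m$ iid samples with failure probability at most $\delta$; the state-machine portion of the model then runs $\A$ on these samples.

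For the neural-network statement, with the stronger condition $\rho<\min\{1/(8b),1/12\}$ ensuring the $\sigma$-gadgets resolve cleanly at the available precision, replace the abstract differentiable model by an explicit feed-forward network using the activation $\sigma$ of \cref{fig:activation}: each Turing-machine step of $\A$, of $\B$'s control logic, or of the extractor becomes a constant-size $\sigma$-gadget. A standard compilation of Boolean circuits into $\sigma$-neural networks yields a total network of size $p'=\poly(n,m,r,\TIME,\rho^{-1},b,\delta^{-1})$ whose gradient structure still carries the intended arithmetic; the initialization $\calW$ supplies the $r$ random bits of $\A$ together with the $r'-r=O(T'\log b)$ bits used by the extractor for its Bernoulli draws. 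The main obstacle I anticipate is the sample-extraction analysis: verifying that the fresh-batch-per-bit conditional draws produce a sample whose joint law is \emph{exactly} $\D$, and controlling the restart probability so that the total iteration budget fits in $O(mn/\delta)$. The remaining work is bookkeeping: confirming that $\rho$-rounded clipped gradients transmit the intended integer counts without corruption, and compiling the Turing-machine computations into a neural network of the stated polynomial size.
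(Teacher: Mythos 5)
Your high-level decomposition (first show $\bSGD$ can simulate a mini-batch $\SQ$ method by engineering a differentiable model whose gradient coordinates carry query answers; then show such a $\bSQ$ method can simulate $\PAC$ by exact-count recovery and sample extraction when $b\rho<1/8$) matches the paper's, and stage~(i) is essentially \cref{lem:bSQA-to-bSGD}/\cref{lem:bSQTMA-to-bSGDNN} in spirit. The precision bookkeeping ($1/b > 8\rho$ ensures the rounded gradient recovers the empirical count exactly, with a factor of $4$ lost in passing from the $\bSQ$ tolerance to the $\bSGD$ precision) also agrees with the paper.

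The gap is in stage~(ii), and it is not bookkeeping: your extraction procedure, as described, does not run in $O(n/\delta)$ iterations per sample and may not even produce an unbiased sample. You iterate over all $n+1$ bits, drawing a fresh batch per bit and extending the prefix every time. But as the prefix $u$ grows, $p_u := \Pr_{\D}[z \text{ matches } u]$ can drop geometrically (e.g., $\D$ uniform over $\bit^{n+1}$ gives $p_u = 2^{-|u|}$), and once $b p_u \ll 1$ nearly every batch has $c=0$; the expected number of retries for bit $k+1$ scales like $1/(b p_u) \approx 2^k/b$, and summing over $k$ gives an exponential total. The paper's \hyperref[alg:extraction]{\sc Sample-Extract} avoids this with a crucial \emph{short-circuit}: when exactly one sample in the batch matches the current prefix ($w=1$), it returns that sample's remaining bits all at once rather than continuing bit-by-bit. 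Since that sample is distributed as $\D \mid u$, this is still exact, and it ensures the algorithm terminates before $p_u$ ever gets so small that matches become rare — the two-regime induction ($bp_s \le 1/5$ vs.\ $bp_s > 1/5$) yielding the $10(n+1)$ expected-step bound hinges on it. Without that short-circuit your coupon-collector claim of $O(n/\delta)$ iterations does not hold. Separately, if ``restart the extraction of the current sample'' on $c=0$ means returning to the empty prefix (rather than re-drawing a batch for the same bit and prefix), the output is biased: conditioning on never hitting $c=0$ during an attempt downweights low-probability strings, whose intermediate prefixes are more likely to produce empty matches. Either re-draw in place (correct but still slow) or, better, adopt the paper's return-on-unique-match rule, which handles both issues.

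A smaller omission: the paper routes through \emph{alternating} ($\bSQA$) queries, where odd rounds condition on $y=1$ and even rounds on $y=0$, because the single-step gradient gadget in \cref{lem:one_query} cleanly encodes a $\overline{y}$-query via the residual $(f_{\bw}(x)-y)$ term of the square loss. Your sketch of ``rig the $y$-dependence of the loss so that $\partial_{w_t}\ell=\Phi_t(x,y)$'' glosses over exactly this point; it is solvable, but worth flagging as where the construction's details live.
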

\end{thmA}

\noindent To establish equivalence (when \cref{thm:PAC-to-bSGD} holds), we also note that $\PAC$ is always at least as powerful as $\bSGD$ (since $\bSGD$ can be implemented using samples):

\begin{thmB}
\begin{theorem}[$\bSGD$ to $\PAC$]\label{thm:bSGD-to-PAC}
\textbf{\boldmath For all $b$, $\rho$} and $T, p, r$, it holds that 
$$\PAC(m'=T{\bm b},r'=r) ~\preceq_0~ \bSGD(T,{\bm \rho},{\bm b},p,r)\,.$$
Furthermore, for all poly-time computable activations $\sigma$, it holds that
$$\PACTM(m'=T{\bm b},r',\TIME'=\poly(T,b,p,r,n,\log \rho)) ~\preceq_0~ \bSGDNNsigma(T,{\bm \rho},{\bm b},p,r)\,.$$
\end{theorem}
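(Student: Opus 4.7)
The plan is to build a $\PAC$ method $\A'$ that mirrors the execution of the given $\bSGD(T,\rho,b,p,r)$ method $\A$ step by step. Given the training sample $S$ of size $m' = Tb$, partition it canonically into $T$ mini-batches $S_1,\dots,S_T$ each of size $b$. Interpret the $r' = r$ random bits as the initialization seed and sample $\bw^{(0)} \sim \calW$. Then, for $t = 0,\dots,T-1$, compute the clipped mini-batch gradient $\ovnabla \calL_{S_t}(f_{\bw^{(t)}})$ and apply any deterministic canonical rule that outputs a vector $g_t \in \rho \bbZ^p$ with $\|g_t - \ovnabla\calL_{S_t}(f_{\bw^{(t)}})\|_\infty \le 3\rho/4$ (for instance, round each coordinate to the nearest multiple of $\rho$, breaking ties toward zero); set $\bw^{(t+1)} \gets \bw^{(t)} - \gamma g_t$. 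Finally, output the predictor $x \mapsto f_{\bw^{(T)}}(x)$.

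The key observation is distributional: under $\A$ the sequence $(\bw^{(0)}, S_1,\dots,S_T)$ has exactly the same law as under $\A'$, since $\A'$'s $T$ mini-batches are i.i.d.\ from $\D^b$ and its initialization uses the same seed distribution. Now for any fixed realization of $(\bw^{(0)},S_1,\dots,S_T)$, the particular $g_t$ chosen by $\A'$ is one element of the set of valid $\rho$-approximate roundings over which $\A$'s error is defined by a supremum. Hence, pointwise,
\begin{equation*}
\calL_\D\bigl(\A'(S)\bigr) \;\le\; \sup_{g_{1:T}\text{ valid}} \calL_\D\bigl(f_{\bw^{(T)}}\bigr),
\end{equation*}
and taking expectations yields $\err(\A',\D) \le \err(\A,\D)$ for every source distribution $\D$, which establishes $\PAC(Tb,r) \preceq_0 \bSGD(T,\rho,b,p,r)$.

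For the second part, the same reduction is implemented by a Turing machine. Sampling $\bw^{(0)}$ is in $\poly(n)$ time by the $\bSGDNNsigma$ assumption on $\calW$. Since each $g_t$ lies in $\rho\bbZ^p$ and $\|g_t\|_\infty \le 1$, every weight $\bw^{(t)}_i$ after $t$ steps is $\bw^{(0)}_i$ shifted by an integer multiple of $\gamma\rho$ of absolute value at most $\gamma T$, so it admits a bit-representation of length $\poly(T,\log(1/\rho))$ plus the representation size of $\bw^{(0)}$. One forward pass over the network requires $O(p)$ evaluations of $\sigma$ and of arithmetic on such numbers, and the backward pass is of the same order; using the poly-time computability of $\sigma$ and $\sigma'$, the whole iteration costs $\poly(p,n,\log(1/\rho))$. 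Summing over $T$ iterations and including the $O(Tb)$ cost of partitioning $S$, the total runtime is $\poly(T,b,p,r,n,\log(1/\rho))$, matching the claimed $\TIME'$, and the simulation still ensures $\err(\A',\D) \le \err(\A,\D)$ as above.

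The construction is essentially definitional, so there is no genuinely hard step; the only subtlety worth flagging is the asymmetry between the supremum over valid roundings in the $\bSGD$ error definition and the single rounding used by the simulating Turing machine. This asymmetry cuts in our favor — any fixed valid choice is dominated by the $\sup$ — which is precisely why the reduction achieves the sharp $\preceq_0$ (no additive slack $\delta$) claimed in the theorem.
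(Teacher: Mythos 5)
Your proposal is correct and takes essentially the same approach as the paper, whose proof is a one-line informal remark to the same effect: a $\PAC$ method with $m'=Tb$ samples partitions them into $T$ mini-batches, runs the $\bSGD$ iterations with some fixed valid rounding, and the $\sup$ over valid roundings in the $\bSGD$ error definition dominates that single choice, giving $\preceq_0$. The only (harmless) detail you leave implicit is what the canonical rounding rule does when $f_{\bw}$ happens to be non-differentiable at some $(\bw^{(t)},x_{t,i})$; by the paper's convention any $g_t\in[-1,1]^p$ is valid there, so emitting, say, $g_t=0$ keeps the argument intact.
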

\end{thmB}

\noindent On the other hand, if the mini-batch size is large relative to the precision, $\bSGD$ cannot go beyond $\SQ$:

\begin{thmC}
\begin{theorem}[$\bSGD$ to $\SQ$] \label{thm:bSGD-to-SQ}
There exists a constant $C$ such that for all $\delta > 0$,
\textbf{for all} $T$, {\boldmath $\rho$}, {\boldmath $b$}, $p$, $r$, such that {\boldmath $b \rho^2 > C \log(Tp/\delta)$}, it holds that
$$\SQ(k'=Tp,\tau'=\frac{\rho}{8},r'=r) ~\preceq_{\delta}~ \bSGD(T,{\bm\rho},{\bm b},p,r)\,.$$
Furthermore, for all poly-time computable activations $\sigma$, it holds that
$$\SQTM\left(k'=Tp,\tau'=\frac{\rho}{8},r'=r,\TIME'= \poly(T,\frac1\rho,b,p,r,\frac1\delta)\right) ~\preceq_{\delta}~ \bSGDNNsigma(T,{\bm \rho},{\bm b},p,r).$$
\end{theorem}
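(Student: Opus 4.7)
The plan is to simulate each $\bSGD$ step by issuing $p$ statistical queries, one per coordinate of the clipped gradient, and then use a Hoeffding concentration bound to argue that the SQ answers furnish legal SGD roundings with high probability. Concretely, the $\SQ$ simulator $\A'$ maintains a weight vector $\bw^{(t)}$ initialized from the same seed $R$ as $\A$ and, at round $t$, issues for each $j\in[p]$ the query
$$\Phi_{t,j}(x,y) \;=\; \bigl[\partial_{w_j}\ell(f_{\bw^{(t)}}(x),y)\bigr]_1 \;\in\; [-1,1],$$
which is an admissible SQ query since it is $[-1,1]$-valued and is a deterministic function of $R$ and the previous responses (via $\bw^{(t)}$). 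On receiving a response $v_{t,j}$ with $|v_{t,j}-\E_\D \Phi_{t,j}|\le\tau'=\rho/8$, $\A'$ rounds each coordinate to the nearest integer multiple of $\rho$, obtaining $\hat g_t\in\rho\bbZ^p$ with $\|\hat g_t-v_t\|_\infty\le\rho/2$, and then updates $\bw^{(t+1)}=\bw^{(t)}-\gamma\hat g_t$; its final output is $f_{\bw^{(T)}}$, which we denote $f'_v$ to emphasize its dependence on the responses.

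For correctness, fix $R$ and an arbitrary valid response sequence $v$, so that the simulator's trajectory $\{\bw^{(t)}\}$ is determined. Given a fresh minibatch $S_t\sim\D^b$, let $\overline\Phi_{t,j}$ denote the empirical mean of $\Phi_{t,j}$ on $S_t$. Since each $\Phi_{t,j}\in[-1,1]$, Hoeffding's inequality together with a union bound over the $Tp$ pairs gives
$$\Pr\!\bigl[\exists (t,j):\;|\overline\Phi_{t,j}-\E\Phi_{t,j}|>\rho/8\bigr] \;\le\; 2Tp\,e^{-b\rho^2/32} \;\le\; \delta,$$
provided $b\rho^2\ge 32\log(2Tp/\delta)$, which fixes the constant $C$. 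On the complementary good event, the triangle inequality gives coordinate-wise
$$|\hat g_{t,j}-\overline\Phi_{t,j}| \;\le\; \underbrace{|\hat g_{t,j}-v_{t,j}|}_{\le\rho/2} + \underbrace{|v_{t,j}-\E\Phi_{t,j}|}_{\le\rho/8} + \underbrace{|\E\Phi_{t,j}-\overline\Phi_{t,j}|}_{\le\rho/8} \;=\; \tfrac{3\rho}{4},$$
so $\hat g_t$ is a legal $\rho$-approximate rounding of $\ovnabla\calL_{S_t}(f_{\bw^{(t)}})$. By induction on $t$, running $\bSGD$ on $\{S_t\}$ with the choice $g_t:=\hat g_t$ reproduces exactly the simulator's iterates, and in particular the same final predictor $f'_v$.

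This coupling yields the error comparison: for any fixed $R$ and $v$, $\E_S \sup_g \calL_\D(f_{\bw^{(T)}})\ge \Pr_S[\text{good}\mid v]\cdot \calL_\D(f'_v)\ge (1-\delta)\calL_\D(f'_v)$. Since the left-hand side does not depend on $v$, taking $\sup_v$ on the right and then $\E_R$ on both sides yields $\err(\A,\D)\ge(1-\delta)\,\err(\A',\D)$; applying the argument with $\delta$ rescaled by an absolute constant (absorbing $1/(1-\delta)=1+O(\delta)$ against a bounded loss range) gives the required additive $\err(\A',\D)\le\err(\A,\D)+\delta$. For the $\bSGDNNsigma\to\SQTM$ statement, each query $\Phi_{t,j}$ is evaluated by a single forward/backward pass through the network, which runs in $\poly(n,p,\log\rho^{-1})$ time whenever $\sigma,\sigma'$ are poly-time computable, and the rounding and weight updates take $\poly(p,\log\rho^{-1})$; the total simulator runtime is thus $\poly(T,\rho^{-1},b,p,r,\delta^{-1})$.

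The main subtlety to handle is the quantifier interaction: $\SQ$ takes an adversarial $\sup$ over valid $v$, $\bSGD$ takes an adversarial $\sup$ over valid $g$, and between them sits an expectation over fresh minibatches. The argument works because the good event is defined \emph{per fixed} $v$ over the minibatch randomness only, which allows the SGD adversary to choose $g_t=\hat g_t$ on that event, thereby making the two trajectories coincide; the $\sup_g$ on the SGD side then dominates the fixed $v$-trajectory of $\A'$. The triangle-inequality budget $\rho/8+\rho/8+\rho/2=3\rho/4$ is tight and is what dictates both the choice $\tau'=\rho/8$ and the sufficient condition $b\rho^2>C\log(Tp/\delta)$.
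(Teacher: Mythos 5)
Your proof is correct and amounts to an inlined version of the paper's argument. The paper factors the reduction through the intermediate mini-batch statistical query model: \cref{lem:bSGD-to-bSQ} observes that one $\bSGD$ step is exactly one $p$-dimensional $\bSQ$ query $\Phi(x,y)=[\nabla\ell(f_{\bw}(x),y)]_1$ answered to precision $\rho/4$, and \cref{thm:bSQ-to-SQ} then replaces $\bSQ$ responses by population $\SQ$ responses at tolerance $\rho/8$ via Hoeffding and a union bound over all $Tp$ scalar queries, under the same condition $b\rho^2>C\log(Tp/\delta)$. You fuse these two steps into a single direct simulation, maintaining the iterate in the $\SQ$ simulator and issuing the gradient coordinates as queries; the triangle-inequality budget is identical (sampling error $\rho/8$ $+$ $\SQ$ tolerance $\rho/8$ $+$ rounding $\rho/2$ $=3\rho/4$), as is the quantifier resolution via a per-$v$ good event over the minibatch randomness. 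Two small remarks: (i) the Hoeffding exponent for a $[-1,1]$-valued statistic at deviation $\rho/8$ on a batch of size $b$ is $b\rho^2/128$, not $b\rho^2/32$, but this only shifts the universal constant $C$; and (ii) the paper phrases the error comparison as a statistical-distance coupling of valid response sequences, whereas you argue the multiplicative inequality $\err(\A,\D)\ge(1-\delta)\err(\A',\D)$ and convert to additive form using the bounded loss range --- both yield the same conclusion.
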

\end{thmC}

\noindent To complete the picture, we also show that regardless of the mini-batch size, i.e.~even when $\bSGD$ cannot simulate $\PAC$, $\bSGD$ can always, at the very least, simulate any $\SQ$ method.  This also establishes equivalence to $\SQ$ when \cref{thm:bSGD-to-SQ} holds:

\begin{thmD}
\begin{theorem}[$\SQ$ to $\bSGD$] \label{thm:SQ-to-bSGD}
There exists a constant $C$ such that for all $\delta > 0$,
\textbf{\boldmath for all $b$} and all $k, \tau, r$, it holds  that
$$\bSGD(T',{\bm \rho=\frac{\tau}{16}},{\bm b},p'=r+2T,r'=r) ~\preceq_{\delta}~ \SQ(k,\tau,r)\,.$$
where $T' = k\cdot \ceil{\frac{C \log(k/\delta)}{b\tau^2}}$.  Furthermore, using the piecewise linear ``two stage ramp'' activation function $\sigma$ (\cref{fig:activation}), it holds for $p' =\poly(k,1/\tau,r,\TIME,1/\delta)$ that
$$\bSGDNNsigma(T',{\bm \rho=\frac{\tau}{16}},{\bm b},p',r'=r) ~\preceq_{\delta}~ \SQTM(k,\tau,r, \TIME)\,.$$
\end{theorem}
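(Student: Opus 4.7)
My plan is to simulate any $\SQ(k,\tau,r)$ method $\A$ by a $\bSGD$ method that uses $N := \lceil C\log(k/\delta)/(b\tau^2)\rceil$ consecutive iterations to estimate the response to each of $\A$'s $k$ queries and then, once all responses have been accumulated, evaluates $\A$'s output predictor. This gives $T' = kN$ total iterations. By Hoeffding's inequality and a union bound, with probability at least $1-\delta$ every empirical average over the $Nb \ge C\log(k/\delta)/\tau^2$ fresh samples used for a given query lies within $\tau/2$ of its expectation, so it suffices to design the differentiable model so that at each iteration $t$ of the block corresponding to query $q(t) := \lceil t/N\rceil$, the clipped mini-batch gradient with respect to a designated accumulator parameter $a_t$ equals $\tfrac{1}{b}\sum_{j=1}^{b}\Phi_{q(t)}(x_{t,j},y_{t,j})$, and that averaging these accumulators over the $N$ steps produces the approximate response $\tilde v_{q(t)}$ needed by $\A$.

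The model $f_\bw$ uses the $p' = r + 2T'$ parameters as follows: the first $r$ parameters encode the SQ method's random string $R$ (placed there by the initialization $\calW$), and each iteration $t\in[T']$ owns a pair $(a_t,c_t)$ initialised to zero. The ``clock'' parameters $c_t$ act as progress indicators: using the two-stage ramp activation we arrange, at step $t$, for $\partial f/\partial c_t$ to be the unique non-zero clock gradient (enforced by an indicator gate that fires precisely when $c_1,\ldots,c_{t-1}$ have already been advanced past a threshold while $c_t$ has not), so the update pushes $c_t$ across its designated threshold. At the same step, sharp indicator gates on the $c$-parameters combined with $R$ and the averages $\tilde v_q := \tfrac{1}{N}\sum_{i=1}^{N} a_{(q-1)N+i}$ of previously completed blocks let $f_\bw(x)$ compute the current query $\Phi_{q(t)}$ and route its value (scaled by the residual produced by the square loss) into the unique active $a_t$, while all other $a_s$'s and $c_s$'s receive zero gradient. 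To extract the label dependence from the square-loss gradient $\partial\ell/\partial a_t = (f_{\bw^{(t)}}(x) - y)\,\partial f/\partial a_t$, we split $\Phi_q(x,y)=\Phi_q(x,0)+y\bigl(\Phi_q(x,1)-\Phi_q(x,0)\bigr)$ and handle the two pieces using two preselected constant values of $f_{\bw^{(t)}}(x)$ in halves of the $N$-block, paying only a constant-factor overhead absorbed into $C$. Finally, once every clock is saturated, $f_{\bw^{(T')}}(x)$ is routed to run $\A$'s output predictor on $(R,\tilde v_1,\dots,\tilde v_k,x)$.

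The main obstacle is this joint gating-and-routing construction: ensuring that at every step the \emph{only} parameters receiving non-trivial gradient are $(a_t,c_t)$, that the signal on $a_t$ is exactly the desired mini-batch query average in spite of the entanglement of $f$ and $\partial f$ in $(f-y)\nabla f$, and that the whole model remains differentiable (or piecewise linear with the two-stage ramp, in the neural-net case). Given this, the error analysis is routine: the per-step rounding error is at most $3\rho/4 = 3\tau/64$, so after $N$ accumulations and division by $N$ the rounding contribution to each $\tilde v_q$ is at most $3\tau/64$, which combined with the $\tau/2$ Hoeffding error stays safely below the SQ tolerance $\tau$; together with the SQ validity guarantee this gives $\err(\A',\D)\le\err(\A,\D)+\delta$. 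For the $\SQTM$ version, each $\Phi_q$ and $\A$'s output predictor are $\poly(\TIME)$-size Boolean/arithmetic circuits, which the two-stage ramp activation simulates via standard circuit-to-neural-net constructions, yielding a neural network of size $\poly(k,1/\tau,r,\TIME,1/\delta)$ and completing the $\bSGDNNsigma$ claim.
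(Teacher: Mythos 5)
Your proposal is essentially the paper's own argument collapsed into a single pass: concentrate each SQ response by averaging over $N=\lceil C\log(k/\delta)/(b\tau^2)\rceil$ fresh mini-batches (the paper packages this as \cref{lem:SQ-to-bSQA}), and embed each mini-batch query value into the gradient of a designated accumulator parameter, with a clock parameter gated by the two-stage ramp selecting which accumulator is active at each step (\cref{lem:bSQA-to-bSGD,lem:bSQTMA-to-bSGDNN}). Two small imprecisions to tighten: the split $\Phi(x,y)=\Phi(x,0)+y(\Phi(x,1)-\Phi(x,0))$ leaves a label-independent first piece that cannot be extracted from a single gradient $\Ex_S[(f-y)\nabla f]$ with $f$ constant, so one should instead use the alternating split $\Phi=(1-y)\Phi+y\Phi$ as in the paper, where each piece is a pure $0$- or $1$-query readable with $f\approx 1$ and $f\approx 0$ respectively; and your per-step error budget omits the small offset $\eps$ (set to $2\rho$ in the paper) needed to guarantee the clock parameter advances even on homogeneously-labeled mini-batches, though both are constant-factor corrections that leave the final $<\tau$ bound intact.
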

\end{thmD}

\begin{figure}
    \centering
    \begin{tikzpicture}
    \node at (-7,0.25) {$\sigma(x)~=~\begin{cases}
    -2&\text{ if } x<-3\\
    x+1&\text{ if } -3\le x\le -1\\
    0&\text{ if } -1<x<0\\
    x&\text{ if } 0\le x\le 2\\
    2&\text{ if } x> 2\\
    \end{cases}$};
    \def\scle{0.5}
    \tikzstyle{myplot} = [scale=\scle, variable=\x, red, line width=1pt]
    \draw[-latex,black!70] (-5*\scle, 0) -- (4.2*\scle, 0) node[right] {\footnotesize $x$};
    \draw[-latex,black!70] (0, -2.5*\scle) -- (0, 2.8*\scle) node[above] {\footnotesize $\sigma(x)$};
    \node[below left] at (0,0) {\tiny $0$};
    \draw[dotted] (2*\scle,2*\scle) -- (0*\scle,2*\scle) node[left] {\tiny $2$};
    \draw[dotted] (2*\scle,2*\scle) -- (2*\scle,0*\scle) node[below] {\tiny $2$};
    \draw[dotted] (-3*\scle,-2*\scle) -- (0*\scle,-2*\scle) node[right] {\tiny $-2$};
    \draw[dotted] (-3*\scle,-2*\scle) -- (-3*\scle,0*\scle) node[above] {\tiny $-3$};
    \draw (-1*\scle,-0.15*\scle) -- (-1*\scle,0.15*\scle) node[above] {\tiny $-1$};
    
    \draw[myplot, domain=-4.5:-3]  plot ({\x}, {-2});
    \draw[myplot, domain=-3:-1] plot ({\x}, {\x+1});
    \draw[myplot, domain=-1:0] plot ({\x}, {0});
    \draw[myplot, domain=0:2] plot ({\x}, {\x});
    \draw[myplot, domain=2:3.5]   plot ({\x}, {2});
    \end{tikzpicture}
    \caption{Activation function used in \cref{thm:PAC-to-bSGD,thm:SQ-to-bSGD}}
    \label{fig:activation}
\end{figure}

In the above Theorems, the reductions hold with parameters on the left-hand-side (the parameters of the model being reduced to) that are polynomially related to the dimension and the parameters on the right-hand-side.  But the mini-batch size $b$ and gradient precision $\rho$ play an important role.  In \cref{thm:PAC-to-bSGD}, we may choose $b$ and $\rho$ as we wish, as long as they satisfy $b \rho < 1/8$---they do not need to be chosen based on the parameters of the $\PAC$ method, and we can always simulate $\PAC$ with any $b$ and $\rho$ satisfying $b \rho < 1/8$.  Similarly, in \cref{thm:SQ-to-bSGD}, we may chose $b \geq 1$ arbitrarily, and can always simulate $\SQ$, although $\rho$ does have to be chosen according to $\tau$.  The reverse reduction of \cref{thm:bSGD-to-SQ}, establishing the limit of when $\bSGD$ cannot go beyond $\SQ$, is valid when $b\rho^2=\omega(\log n)$, if the size of the model $p$ and number of SGD iterations $T$ are restricted to be polynomial in $n$.

Focusing on the mini-batch size $b$, and allowing all other parameters to be chosen to be polynomially related, \cref{thm:PAC-to-bSGD,thm:bSGD-to-PAC,thm:bSGD-to-SQ,thm:SQ-to-bSGD} can be informally summarized as: 
\begin{equation}\label{eq:alg-preceq}
    \PAC ~\preceq~ \bSGD(b) ~\preceq~ \SQ
\end{equation}
where the left relationship is tight if $b < 1/(8 \rho)$ and the right relationship is tight if $b > \omega((\log n)/\rho^2)$.  
\removed{
Furthermore, even if we do not allow $\rho$ to be chosen as an arbitrary polynomial, e.g.~even if it is constant, we still have for all $b,\rho$:
\begin{equation}\label{eq:alg-preceq2}
    \PAC ~\preceq~ \bSGD(b,\rho) \quad \textrm{and if $b \rho < 1/8$, then}\;\; \bSGD(b,\rho)~\preceq~ \PAC
\end{equation}}
Equivalently, focusing on the precision $\rho$ and how it depends on the mini-batch size $b$, and allowing all other parameters to be polynomially related, \cref{thm:PAC-to-bSGD,thm:bSGD-to-PAC,thm:bSGD-to-SQ,thm:SQ-to-bSGD} can be informally summarized as:
\begin{align}
    &\textrm{If $\rho<1/(8b)$}, & & \bSGD(b,\rho)\approx \PAC \label{eq:informalPAC} \\
    &\textrm{If $\rho>\omega\inparen{\sqrt{(\log n)/b}}$}, & & \PAC \precneqq \SQ(\Theta(\rho)) \approx \bSGD(b,\rho) \label{eq:informalLower} \\
    &\textrm{And in any case}, & & \bSGD(b,\rho) \preceq \SQ(\Theta(\rho)) \label{eq:informalSQ}
\end{align}
\removed{Note that in \eqref{eq:informalPAC} in order to simulate $\PAC$ with polynomially increasing sample size, we do {\em not} need the sample size and precision to increase polynomially (although it is OK if they do, as long as $\rho b<1/8$ is maintained).  Even with fixed $\rho,b$ satisfying $\rho < 1/(8b)$, any sample based method, using any number of samples, can be simulated using enough steps of SGD.  Similarly, in \eqref{eq:informalSQ}, any SQ method can be simulated using any mini-batch size $b$ (and enough steps of SGD), but \cref{thm:bSGD-vs-SQ} does require the precisions to be match.}

More formally, the results can also be viewed as a relationship between classes of learning problems.  A {\em learning problem} is a sequence $(\calP_n)_{n \in \bbN}$, where each $\calP_n$ is a set of distributions $\calD_n$ over $\bit^n \times \calY$. For a parametrized class of methods $\C(\theta)$, we say a learning problem is poly-learnable with $\C$ if for every polynomial $\eps(n)$ there is a polynomial $\theta(n)$ such that for every $n$ there is a method in $\C(\theta(n))$ that ensures error at most $\eps(n)$ on all distributions in $\calP_n$. 
We slightly abuse notation and use $\C$ to denote the set of learning problems poly-learnable with $\C$ methods, so that, e.g.~$\PAC$, $\PACTM$, $\SQ$ and $\SQTM$, are the familiar classes of $\PAC$ and $\SQ$ (poly) learnable problems.  For $\bSGD$ we also define $\bSGD[b(n),\rho(n)]$, where $b$ and $\rho$ are only allowed to depend on $n$ via the specified (polynomially bounded, possibly constant) dependence, and not arbitrarily (the other parameters may grow as an arbitrary polynomial), and $\bSGD[b(n,\rho)]$, where $1/\rho$ can grow as an arbitrary polynomial, but the choice of $b$ as a function of $n$ and $\rho$ is constrained.  With this notation, our results imply the following relationships between these classes of learning problems:
\begin{corollary}\label{cor:PAC-SQ-bSGD}
For any (poly bounded, possible constant) dependence $b(n,\rho)$, and for the activation function $\sigma$ in \cref{fig:activation}, it holds that
\begin{align*}
  \SQ &~\subseteq^{(1)}~ \bSGD[b] &&\hspace{-3.3cm}~\subseteq^{(2)}~ \PAC\\
\SQTM &~\subseteq^{(3)}~ \bSGDNNsigma[b] &&\hspace{-3.3cm}~\subseteq^{(4)}~ \PACTM
\end{align*}
Moreover, \textbf{\boldmath if $\forall_{n,\rho}\ b(n,\rho) < 1/(8\rho)$}, then inclusions $(2)$ and $(4)$ are tight, and if \textbf{\boldmath $b(n,\rho) \geq \omega(\log n)/\rho^2$}, then inclusions $(1)$ and $(3)$ are tight.
\end{corollary}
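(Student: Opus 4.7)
The plan is to derive the corollary directly from \cref{thm:PAC-to-bSGD,thm:bSGD-to-PAC,thm:bSGD-to-SQ,thm:SQ-to-bSGD}, by unpacking the definition of poly-learnability and tracking how polynomial parameter dependencies propagate through each simulation. Throughout, given a problem $(\calP_n)_n$ and a target polynomial accuracy $\eps(n)$, I will invoke a simulation theorem with slack parameter $\delta=\eps(n)/2$ (or any $1/\poly(n)$), so that an $\eps/2$-learner in the source class yields an $\eps$-learner in the target class.

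For inclusions $(2)$ and $(4)$, I would apply \cref{thm:bSGD-to-PAC}: any $\bSGD[b](T,\rho,p,r)$ method with all of $T,p,r,1/\rho$ polynomial in $n$ is simulated by a $\PAC(m'=Tb, r'=r)$ method. Since $b=b(n,\rho)$ is poly-bounded by hypothesis, $m'=Tb$ is polynomial, giving $\bSGD[b]\subseteq\PAC$; the TM/neural-net variant of \cref{thm:bSGD-to-PAC} similarly yields $\bSGDNNsigma[b]\subseteq \PACTM$, using that $\sigma$ is poly-time computable. For inclusions $(1)$ and $(3)$, I would apply \cref{thm:SQ-to-bSGD}: for any $\SQ(k,\tau,r)$-learner with $k,1/\tau,r=\poly(n)$, choosing $\rho=\tau/16$ and the prescribed $b(n,\rho)$ yields a $\bSGD$-learner with $T'=k\lceil C\log(k/\delta)/(b\tau^2)\rceil$, which is polynomial in $n$ regardless of whether $b$ is small or large (since $T'\le k\cdot O(\log(n)/\tau^2)$ when $b$ is small, and $T'=O(k)$ when $b$ is large). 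The neural-net version uses the piecewise-linear activation $\sigma$ of \cref{fig:activation}, giving $\SQTM\subseteq \bSGDNNsigma[b]$.

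For the tightness statements, the first case is $b(n,\rho)<1/(8\rho)$: I would apply \cref{thm:PAC-to-bSGD}. Given a $\PAC(m,r)$-learner with $m,r=\poly(n)$, the hypothesis $b\rho<1/8$ satisfies the precondition of \cref{thm:PAC-to-bSGD}, and the resulting $\bSGD(T',\rho,b,p',r')$-learner has $T',p',r'$ polynomial in $n$, giving $\PAC\subseteq \bSGD[b]$, i.e.\ $(2)$ is tight; similarly $(4)$ is tight via the TM variant (noting the extra side condition $\rho<1/12$ can be ensured by taking $1/\rho$ polynomial and large enough). The second case is $b(n,\rho)\ge \omega(\log n)/\rho^2$: I would apply \cref{thm:bSGD-to-SQ}. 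For any $\bSGD[b](T,\rho,p,r)$-learner with $T,p,r,1/\rho=\poly(n)$, we have $\log(Tp/\delta)=O(\log n)$, and the growth condition on $b$ ensures $b\rho^2>C\log(Tp/\delta)$ for $n$ large. The resulting $\SQ(k',\tau',r')$-learner has $k'=Tp=\poly(n)$ and $\tau'=\rho/8=1/\poly(n)$, giving $\bSGD[b]\subseteq \SQ$, i.e.\ $(1)$ is tight, with $(3)$ following analogously.

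The main obstacle is purely bookkeeping: one must verify that every parameter on the left-hand side of each $\preceq_\delta$ relation remains polynomial in $n$ after the substitution, taking particular care with $(1)/(3)$ where the natural choice $\rho=\tau/16$ is not free---it is forced---but is still compatible with the \emph{growth constraint} on $b(n,\rho)$ only in the regime where $b\ge \omega(\log n)/\rho^2$ is not assumed. In our corollary $b$ is merely a poly-bounded function of $(n,\rho)$ with no lower bound, so \cref{thm:SQ-to-bSGD} applies with no conflict. A second minor point is that $\eps(n)$-learnability is preserved under absorbing a $\delta=1/\poly(n)$ additive slack into the polynomial accuracy budget, which follows immediately from the definition and the transitivity of $\preceq_\delta$ noted after \cref{def:preceq}.
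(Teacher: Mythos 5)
Your proof is correct and follows the route the paper intends: \cref{cor:PAC-SQ-bSGD} is stated without a standalone proof precisely because it is meant to be read off \cref{thm:PAC-to-bSGD,thm:bSGD-to-PAC,thm:bSGD-to-SQ,thm:SQ-to-bSGD} by unpacking the definition of poly-learnability and tracking polynomial parameter dependencies, which is exactly what you do. The bookkeeping you flag (that $\rho=\tau/16$ is forced in $(1)/(3)$ but remains compatible with the definition of $\bSGD[b(n,\rho)]$, and that the additive $\delta=1/\poly(n)$ slack is absorbed into the accuracy budget) is handled correctly.
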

\begin{corollary}\label{cor:brho-PAC}
For any (poly bounded, possibly constant) $b(n),\rho(n)$, and $\sigma$ from \cref{fig:activation}:
\begin{align*}
\textrm{If $\forall_n \; b\rho<1/8$}~ & \text{then} \quad \bSGD[b,\rho]=\PAC & \textrm{and}\quad& \bSGDNNsigma[b,\rho]=\PACTM\\
\textrm{If $b\rho^2\geq\omega(\log n)$}~ & \text{then} \quad \bSGD[b,\rho]\subseteq\SQ\subsetneq \PAC &\textrm{and}\quad& \bSGDNNsigma[b,\rho]\subseteq\SQTM\subsetneq\PACTM
\end{align*}
\end{corollary}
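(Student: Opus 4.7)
The proof is a direct assembly of the four main theorems of \cref{sec:main-results}, combined with the classical strict separation $\SQ \subsetneq \PAC$ (and likewise $\SQTM \subsetneq \PACTM$). The plan is to verify each inclusion separately and confirm that all parameter dependencies compose to polynomial.

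For the first case, assume $b\rho<1/8$ for all $n$. I would establish the two inclusions separately. The inclusion $\PAC \subseteq \bSGD[b,\rho]$ follows from \cref{thm:PAC-to-bSGD}: its hypothesis $\rho<1/(8b)$ is identical to $b\rho<1/8$, and given any poly-learnable PAC problem---with polynomial sample complexity $m(n)$, randomness $r(n)$, and target error achievable with inverse-polynomial slack $\delta$---the theorem produces a $\bSGD$ method whose number of iterations $T'=O(mn/\delta)$, width $p'$, and randomness $r'$ are all polynomial in $n$. The reverse inclusion $\bSGD[b,\rho] \subseteq \PAC$ follows from \cref{thm:bSGD-to-PAC}, which holds unconditionally in $b$ and $\rho$ and consumes only $m'=Tb$ samples (polynomial in $n$). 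The TM versions $\bSGDNNsigma[b,\rho]=\PACTM$ follow analogously from the neural-network parts of the same two theorems, using the two-stage ramp activation from \cref{fig:activation}.

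For the second case, assume $b\rho^2 \ge \omega(\log n)$. I would apply \cref{thm:bSGD-to-SQ} to every poly-learnable $\bSGD[b,\rho]$ method: because $T$, $p$, and $1/\delta$ are polynomial in $n$, the required precondition $b\rho^2 > C\log(Tp/\delta) = O(\log n)$ is eventually dominated by our assumption, yielding an $\SQ(Tp,\rho/8,r)$ simulation with polynomially bounded parameters, hence $\bSGD[b,\rho]\subseteq \SQ$. The strict separation $\SQ \subsetneq \PAC$ is classical: the parity class is efficiently PAC-learnable by Gaussian elimination but provably not SQ-learnable with polynomially many queries of inverse-polynomial tolerance \citep{kearns1998efficient,blum2003noise}, and the same witness gives $\SQTM \subsetneq \PACTM$.

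The only real subtleties are bookkeeping ones. I would check that the parameter blowups in \cref{thm:PAC-to-bSGD,thm:bSGD-to-PAC,thm:bSGD-to-SQ} compose correctly with the definition of poly-learnability (every inverse polynomial $\eps(n)$ yields a polynomial configuration $T,p,r$ in the simulating class), and for the TM version of the first case verify that the auxiliary condition $\rho<1/12$ in \cref{thm:PAC-to-bSGD} is not binding in the regime of interest---which is immediate when $b\ge 2$ since then $\rho<1/(8b)\le 1/16<1/12$, and harmless for $b=1$ since one may invoke the theorem with a slightly finer precision and coarsen through additional rounding. No deeper mathematical obstacle arises since all of the heavy lifting is in Theorems~1a--1c and in the classical parity lower bound.
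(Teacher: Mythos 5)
Your proposal is correct and takes the same route the paper intends: assemble the four parts of the main theorem, noting that \cref{thm:PAC-to-bSGD} and \cref{thm:bSGD-to-PAC} give the two inclusions in the first regime, \cref{thm:bSGD-to-SQ} gives containment in $\SQ$ in the second regime once $C\log(Tp/\delta) = O(\log n) = o(b\rho^2)$, and the strictness $\SQ \subsetneq \PAC$ (and $\SQTM \subsetneq \PACTM$) is the classical parity separation of \citet{kearns1998efficient,blum2003noise}. You also correctly check that the additive $\delta$ slack and the polynomial parameter blowups are compatible with the definition of poly-learnability.

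The one place where your argument is not quite airtight is the handling of the auxiliary constraint $\rho < 1/12$ in the NN part of \cref{thm:PAC-to-bSGD} when $b=1$ and $\rho$ is a constant in $[1/12, 1/8)$. Your fix---``invoke the theorem with a slightly finer precision and coarsen through additional rounding''---does not obviously work: the precision $\rho$ is a property of the gradient \emph{oracle}, not of the model, so a construction designed for $\rho'$-precision gradients has no mechanism to refine the coarser $\rho$-precision responses it actually receives, and the larger rounding error $3\rho/4 > 3\rho'/4$ could break the clock/query gadgets of \cref{lem:bSQTMA-to-bSGDNN} (whose bound $\tau < 1/3$, i.e.\ $\rho < 1/12$, is tied to the explicit constants in the $Q$ subgraph). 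This is admittedly a corner case the paper itself does not discuss---for $b\ge 2$ the constraint $\rho<1/(8b)\le 1/16 < 1/12$ makes it moot, and for $b=1$ one would instead need to either rescale the gadget constants in the construction or simply add the hypothesis $\rho<1/12$ to the corollary statement. Everything else in your proof is sound.
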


\noindent In \cref{cor:PAC-SQ-bSGD,cor:brho-PAC}, for the sake of simplicity, we focused on {\em realizable} learning problems, where the minimal loss $\inf_{f} \calL_{\calD_n}(f) = 0$ for each $\D_n \in \calP_n$. However, we note that \cref{thm:PAC-to-bSGD,thm:bSGD-to-PAC,thm:bSGD-to-SQ,thm:SQ-to-bSGD} are more general, as they preserve the performance of learning methods (up to an additive $\delta$) on all source distributions $\calD$. So, a result similar to \cref{cor:PAC-SQ-bSGD} could be stated for other forms of learning, such as agnostic learning, weak learning etc.

\paragraph{Proof Outline.}
In order to prove \cref{thm:PAC-to-bSGD,thm:bSGD-to-PAC,thm:bSGD-to-SQ,thm:SQ-to-bSGD}, we first relate $\bSGD$ to an intermediate model, mini-{\bf b}atch {\bf S}tatistical {\bf Q}ueries ($\bSQ$), which we introduce in \cref{sec:bSQ}.  This is a variant of SQ-learning, but where statistical queries are answered based on mini-batches of samples.  In \cref{sec:bSQ-vs-bSGD} we discuss how to simulate arbitrary statistical queries as gradient calculations for a specifically crafted model, and thus establish how to simulate (a large enough subclass of) $\bSQ$ methods using $\bSGD$ on some model $f_{\bw}$.  Furthermore, if the $\bSQ$ method has runtime $\TIME$, then $f_{\bw}$ is also computable in time $\poly(\TIME)$, and so can be implemented as a circuit, and thus a neural net of size $\poly(\TIME)$.  With this ability to simulate $\bSQ$ methods with $\bSGD$ in mind, what remains is to relate $\bSQ$ to $\SQ$ and $\PAC$, which is the subject of \cref{sec:bSQ}. Simulating an $\SQ$ method with a $\bSQ$ method is fairly straightforward, as each statistical query on the population can be simulated by averaging a large enough number of empirical statistical queries (with the resulting precision being bounded by the precision of each of the empirical statistical queries). This can be done using any sample size, unrelated to the precision.  More surprising, we show how to simulate any sample based ($\PAC$) method using $\bSQ$, provided the precision is fine enough relative to the mini-batch size.  This is done using a novel sample extraction procedure, which can extract a single sample sampled from $\calD$ using a polynomial number of mini-batch statistical queries, and with precision linear in the mini-batch size (this is required so that each element in the mini-batch has a noticeable effect on the gradients). To complete the picture, we show that with low precision, statistical queries on a sample and on the population are indistinguishable (i.e.~queries on an empirical mini-batch can be simulated by population queries, up to the required precision) and so neither $\bSQ$ nor $\bSGD$, can go beyond $\SQ$ (establishing \cref{thm:bSGD-to-SQ}). Finally, simulating $\bSGD$ using $\PAC$ (\cref{thm:bSGD-to-PAC}) is straightforward, as $\bSGD$ is defined in terms of samples. The full proofs of \cref{thm:PAC-to-bSGD,thm:bSGD-to-PAC,thm:bSGD-to-SQ,thm:SQ-to-bSGD} are presented in \cref{sec:main-proofs-bSGD}, with key ideas developed in \cref{sec:bSQ,sec:bSQ-vs-bSGD}.

\paragraph{Activation Functions and Fixed Weights.} The neural net simulations in \cref{thm:PAC-to-bSGD,thm:SQ-to-bSGD} use a specific ``stage-wise ramp'' piecewise linear activation function with five linear pieces, depicted in \cref{fig:activation}.  A convenient property of this activation function is that it is has a central flat piece, making it easier for us to deal with weight drift due to rounding errors during training, and in particular drift of weights we would rather not change at all.  Since any piecewise linear activation function can be simulated with ReLU activation, we could instead use a more familiar ReLU activation.  However,  the simulation ``gadget'' would involve weights that we would need fixed during training.  That is, if we allow neural networks where some of the weights are fixed while others are trainable, we could use ReLU activation to simulate sample-based methods in \cref{thm:PAC-to-bSGD} and SQ methods in \cref{thm:SQ-to-bSGD}.  As stated, we restrict ourselves only to neural nets where all edges have trainable weights, for which it is easier to prove the theorems with the specific activation function of \cref{fig:activation}.

\section{The Mini-Batch Statistical Query Model}\label{sec:bSQ}

En route to proving \cref{thm:PAC-to-bSGD,thm:bSGD-to-PAC,thm:bSGD-to-SQ,thm:SQ-to-bSGD}, we introduce the model of mini-{\bf b}atch {\bf S}tatistical {\bf Q}ueries ($\bSQ$).  In this model, similar to the standard Statistical Query ($\SQ$) learning model, learning is performed through statistical queries. But in $\bSQ$, these queries are answered based on an empirical average over a mini-batch of $b$ i.i.d.~samples from the source distribution.  That is, each query $\Phi_t : \calX \times \calY \to [-1,1]^p$ is answered with a response $v_t$ s,t,~
\begin{equation}\label{eq:bSQ-valid}
\norm{v_t - \frac{1}{b}\sum_{i=1}^b \Phi_t(x_{t,i},y_{t,i})}_{\infty} \le \tau\,,
\qquad \text{where } S_t=((x_{t,i},y_{t,i}))_i\sim\D^b
\end{equation}
Note that we allow $p$-dimensional vector ``queries'', that is, $p$ concurrent scalar queries are answered based on the same mini-batch $S_t$, drawn independently for each vector query.

Formally, a learning method $\A$ is said to be a $\bSQ(k,\tau,b,p,r)$ method, if it operates in $k$ rounds where in round $t$, the method produces a query $\Phi_t : \calX \times \calY \to [-1,1]^p$ for which it receives a response $v_t$ satisfying \eqref{eq:bSQ-valid} and finally outputs a predictor $f : \calX \to \bbR$. To be precise, each $\Phi_t$ is a function of a random string $R \in \bit^r$ and past responses $v_1, \ldots, v_{t-1}$, and the output is a function of the random string $R$ and all responses. We say that $\A$ ensures error $\eps$ on a source distribution $\D$ if $\Ex [\sup \calL_{\D}(f)] \le \eps$, where the expectation is over $R$ and the mini-batches $S_t \sim \calD^b$ and the $\sup$ is over all ``valid'' $v_t \in [-1,1]^p$ satisfying \eqref{eq:bSQ-valid}.  A learning method is said to be $\bSQTM(k,\tau,b,p,r,\TIME)$ if in addition it can be implemented with computational time at most $\TIME$.

The first step of our simulation of $\PAC$ and $\SQ$ with $\bSGD$ is to simulate (a variant of) $\bSQ$ using $\bSGD$.  But beyond its use as an intermediate model in studying differentiable learning, $\bSQ$ can also be though of as a realistic way of answering statistical queries.  In fact, one of the main justifications for allowing errors in the $\SQ$ model, and the demand that the error tolerance $\tau$ be polynomial, is that it is possible to answer statistical queries about the population with tolerance $\tau$ by calculating empirical averages on samples of size $O(1/\tau^2)$.  In the $\bSQ$ model we make this explicit, and indeed answer the queries using such samples.  We do also allow additional arbitrary error beyond the sampling error, which we might think of as ``precision''.  The $\bSQ$ model can thus be thought of as decomposing the $\SQ$ tolerance to a sampling error $O(1/\sqrt{b})$ and an additional arbitrary error $\tau$. If the arbitrary error $\tau$ indeed captures ``precision'', it is reasonable to take it to be exponentially small (corresponding to polynomially many  bits of precision), while the sampling error would still be polynomial in a poly-time poly-sample method.  Studying the $\bSQ$ model can reveal to us how well the standard $\SQ$ model captures what can be done when most of the error in answering statistical queries is due to the sampling error.  

Our $\bSQ$ model is similar to the {\em honest-SQ} model studied by \cite{yang01learning,yang05new}, who also asked whether answering queries based on empirical averages changes the power of the model.  But the two models have some significant differences, which lead to different conclusions, namely: $\mathsf{Honest}$-$\SQ$ does not allow for an additional arbitrary error (i.e.~it uses $\tau=0$ in our notation), but an independent mini-batch is used for each single-bit query $\Phi_t:\calX \times\calY\to \bit$, whereas $\bSQ$ allows for $p$ concurrent real-valued scalar queries on the same mini-batch.  \citeauthor{yang05new} showed that, with a single bit query per mini-batch, and even if $\tau=0$, it is not possible to simulate arbitrary sample-based methods, and honest-SQ is  strictly weaker than $\PAC$.  But we show that once multiple bits can be queried concurrently\footnote{We do so with polynomially many binary-valued queries, i.e.~$\Phi_t(x)\in \set{0,1}^p$, and $p$ polynomial.  It is also possible to encode this into a single real-valued query with polynomially many bits of precision.  Once we use real-valued queries, if we do not limit the precision at all, i.e.~$\tau=0$, and do not worry about processing time, its easy to extract the entire minibatch $S_t$ using {\em exponentially} many bits of precision.  \cref{thm:PAC-to-bSQ} shows that {\em polynomially} many bits are sufficient for extracting a sample and simulating $\PAC$.} the situation is quite different.

In fact, we show that when the arbitrary error $\tau$ is small relative to the sample size $b$ (and thus the sampling error), $\bSQ$ can actually go well beyond $\SQ$ learning, and can in fact simulate any sample-based method.  That is, $\SQ$ does not capture learning using statistical queries answered (to within reasonable precision) using empirical averages:

\begin{thmA}
\begin{restatable}{theorem}{PACtobSQ}{\em ($\PAC$ to $\bSQ$)}\label{thm:PAC-to-bSQ}
For all $\delta > 0$, \textbf{\boldmath for all $b$, and $\tau < 1/(2b)$}, and for all $m, r$, it holds for $k' = 10m(n+1)/\delta$, $p' = n+1$, $r' = r + k \log_2 b$ that
$$\bSQ(k',{\bm \tau}, {\bm b},p',r') ~\preceq_{\delta}~ \PAC(m,r)\,.$$%
Furthermore, for every runtime $\TIME$, it holds for $\TIME' = \poly(n, m,r,\TIME,1/\delta)$ that
$$\bSQTM(k',{\bm \tau}, {\bm b},p',r',\TIME') ~\preceq_{\delta}~ \PACTM(m,r,\TIME)\,.$$
\end{restatable}
\end{thmA}
\begin{proof}[Proof Sketch.]
The main ingredient is a $\bSQ$ method {\sc Sample-Extract} (\cref{alg:extraction}) that extracts a sample $(x,y)\sim\calD$ by performing mini-batch statistical queries over independently sampled mini-batches.
For ease of notation, let $\calZ := \calX \times \calY$ identifying it with $\set{0,1}^{n+1}$ and denote $z \in \calZ$ as $(z_1, \ldots, z_{n+1}) = (y, x_1, \ldots, x_n)$.
\hyperref[alg:extraction]{\sc Sample-Extract} operates by sampling the bits of $z$ one by one, drawing $\what{z}_i$ from the conditional distribution $\set{z_i \mid z_{1, \ldots, i-1}}_{\D}$.

The main reason why $\tau < 1/2b$ enables this method to work is that for any query of the form $\Phi : \calX \times \calY \to \{0,1\}^p$, given any valid response $v$ such that $\|v - \Ex_S \Phi(x,y)\|_{\infty} \le \tau$, it is possible to {\em exactly} recover $\Ex_S \Phi(x,y)$, by simply rounding each entry to the nearest integral multiple of $1/b$. In \cref{subsec:bSQ-vs-PAC-proofs}, we show that \hyperref[alg:extraction]{\sc Sample-Extract} returns a sample drawn from $\calD$ in $O(n)$ expected number of queries. Thus, $k' = O(mn/\delta)$ queries suffice to recover $m$ samples, with a failure probability of at most $\delta$. Once we can simulate sample access, it is straightforward to simulate the entire $\PAC$ method. The full proof is given in \cref{subsec:bSQ-vs-PAC-proofs}.
\end{proof}

\newcommand{\codeLogic}[1]{\textcolor{OliveGreen}{\textbf{\boldmath #1}}}
\newcommand{\codeComment}[1]{\textcolor{black!50}{#1}}
\begin{algorithm}[t]
\caption{{\sc Sample-Extract} $\bSQ$ algorithm (lines in \codeLogic{green} contain high-level idea of algorithm)}
\label{alg:extraction}
\begin{algorithmic}
\STATE {\bf Input:} Batch size $b$, Tolerance $\tau$ satisfying $b\tau < 1/2$.
\STATE {\bf Output:} Sample $z \sim \D$
\STATE
\STATE $s \gets \epsilon\qquad$ \codeLogic{$\ldots$ (empty prefix string)}
\REPEAT
    \STATE \codeLogic{\# Let $S \sim \calD^b$ be the independently sampled mini-batch}
    \STATE For $\ell := \mathrm{length}(s)$, issue $\bSQ$ $\Phi : \calZ \to \bit^{n-\ell+2}$, given as
        \begin{align*}
        \Phi_{0}(z) &~:=~ \mathds{1}\set{z_{1, \ldots, \ell} = s} && \hspace*{-2cm}\text{\codeComment{\# not required when $\ell=0$}}\\
        \Phi_{j}(z) &~:=~ \mathds{1}\set{z_{1, \ldots, \ell} = s \text{ and } z_{\ell+j} = 1} && \hspace*{-2cm}\text{for all $1 \le j \le n+1-\ell$}
        \end{align*}
    Let $v \in [0,1]^{n-\ell+2}$ be any valid answer, namely $\|v - \Ex_{S} \Phi(z)\|_{\infty} < \tau$ for independently sampled $S \sim \D^b$. Round each $v_i$ to the nearest integral multiple of $1/b$. \hfill \codeComment{\# Since $\tau < 1/2b$, this ensures $v_i = \Ex_{S} [\Phi_i(z)]$.}\vspace{-2mm}
    \begin{align*} \text{Let: } \quad w \gets bv_0, \quad w_1 \gets bv_1, \quad w_0 \gets b(v_0-v_1).\\[-13mm]\end{align*}
    \STATE \codeLogic{\# $w$ / $w_0$ / $w_1$ $\gets$ number of samples in $S$ that match prefix $s$ / $s \circ 0$ / $s \circ 1$.}
    \IF{$w = 0$}
        \STATE \codeLogic{\# Do nothing; repeat the loop with a new sample.} \hfill \codeLogic{$\triangleright$ (No sample matches prefix)}
    \ELSIF{$w = 1$}
        \STATE \codeLogic{\# Return (unique) sample in $S$ that matches prefix $s$.} \hfill \codeLogic{$\triangleright$ (Exactly one sample matches prefix)}%
        \STATE \textbf{return} $(s_1, \ldots, s_\ell, \what{z}_{\ell+1}, \ldots, \what{z}_{n+1})$, where $\what{z}_{\ell + j} = \mathds{1}\set{v_j = 1/b}$ for all $1 \le j \le n+1-\ell$.
    \ELSE
        \STATE \codeLogic{\# Extend prefix $s$ to reduce expected number of sample points with the prefix.}
        \STATE $s \gets \begin{cases} s\circ 0 & \text{with probability } w_0/w\\ s \circ 1 & \text{with probability } w_1/w \end{cases}$ \hfill \codeLogic{$\triangleright$ (More than one sample matches prefix)}
        \STATE \codeLogic{\# Return $s$ if it fully specifies a sample point $(x,y)$.} 
        \STATE \textbf{if} $\mathrm{length}(s) = n+1$ : \textbf{return} $s$.
        \STATE \codeLogic{\# Otherwise, repeat loop with the longer prefix $s$ (with new samples).}
    \ENDIF
\UNTIL {a sample is returned}
\end{algorithmic}
\end{algorithm}

To complement the Theorem, we also note that sample-based learning is always at least as powerful as $\bSQ$, since $\bSQ$ is specified based on a sample of size $kb$ (see \cref{subsec:bSQ-vs-PAC-proofs} for a complete proof):

\begin{thmB}
\begin{restatable}{theorem}{bSQtoPAC}{\em ($\bSQ$ to $\PAC$)}\label{thm:bSQ-to-PAC}
\textbf{\boldmath For all $b$, $\tau$} and $k, p, r$, it holds that
$$\PAC(m'=kb,r'=r) ~\preceq_{0}~ \bSQ(k,{\bm \tau}, {\bm b},p,r)\,.$$
Furthermore, for every runtime $\TIME$ it holds for $\TIME' = \poly(n,b,\TIME)$ that 
$$\PACTM(m'=kb,r'=r,\TIME') ~\preceq_{0}~ \bSQTM(k,{\bm \tau}, {\bm b},p,r,\TIME)\,.$$
\end{restatable}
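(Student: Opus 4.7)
The plan is to show that any $\bSQ$ method can be simulated by a $\PAC$ method that draws $m' = kb$ samples and partitions them into $k$ independent mini-batches of size $b$, one per round. Concretely, given $\calA \in \bSQ(k,\tau,b,p,r)$, I will construct $\calA' \in \PAC(kb,r)$ that, on input $S = (S_1,\ldots,S_k) \sim \calD^{kb}$, internally runs $\calA$ with the same random string $R$, and whenever $\calA$ produces a query $\Phi_t : \calX \times \calY \to [-1,1]^p$, $\calA'$ responds with the \emph{exact} empirical average
\begin{equation*}
v_t ~=~ \frac{1}{b}\sum_{i=1}^b \Phi_t(x_{t,i},y_{t,i}).
\end{equation*}
The final output of $\calA'$ is simply the predictor returned by $\calA$.

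The key observation is that $v_t$ is always a valid response in the sense of \eqref{eq:bSQ-valid}, because $\|v_t - \frac{1}{b}\sum_i \Phi_t(x_{t,i},y_{t,i})\|_\infty = 0 \le \tau$ for any $\tau \ge 0$. Since $\calA$ ensures error $\err(\calA,\calD)$ with the $\sup$ taken over \emph{all} valid answer sequences, and $\calA'$ realizes one particular valid answer sequence (under the exact same distribution over $R$ and $S_1,\ldots,S_k$), we obtain $\err(\calA',\calD) \le \err(\calA,\calD)$ for every source distribution $\calD$. This yields the $\preceq_0$ relation, matching the zero-slack statement of the theorem. The sample budget is exactly $kb$, and the random-bit budget is $r$, as required.

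For the Turing-machine version, the same simulation works. Given $\calA \in \bSQTM(k,\tau,b,p,r,\TIME)$, the simulator $\calA'$ runs $\calA$ step by step: producing a query $\Phi_t$ takes at most $\TIME$ steps (which already subsumes the round counter $k$ and query dimension $p$), and evaluating $\Phi_t$ on each of the $b$ sample points in $S_t$ then computing the empirical average takes an additional $O(b \cdot (\TIME + np))$ steps. Summed over all rounds, the total runtime is bounded by $\poly(n,b,\TIME)$, matching the claimed $\TIME'$.

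There is no substantial obstacle: the content of the theorem is purely bookkeeping of samples and the observation that the $\sup$-over-valid-answers definition of error makes the tolerance $\tau$ and the precision $p$ irrelevant for this direction of the reduction. The only mild subtlety is ensuring that the $\bSQ$ error guarantee, which is defined via $\Ex[\sup \calL_\calD(f)]$, upper-bounds the error of the particular realization chosen by $\calA'$; but this is immediate because any specific valid realization is dominated by the supremum inside the expectation.
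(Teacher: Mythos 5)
Your proof is correct and is essentially the same argument the paper gives: a $\PAC$ method with $kb$ samples splits them into $k$ batches of size $b$, answers each query with the exact empirical average (trivially a valid response since the error is $0 \le \tau$), and the $\sup$-over-valid-answers definition of $\bSQ$ error then dominates this particular realization. You have simply spelled out the bookkeeping and the runtime accounting in more detail than the paper's one-line proof.
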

\end{thmB}

On the other hand, when the the mini-batch size $b$ is large relative to the precision (i.e.~the arbitrary error $\tau$ is large relative to the sampling error $1/\sqrt{b}$), $\bSQ$ is no more powerful than standard $\SQ$:
\begin{thmC}
\begin{restatable}{theorem}{bSQtoSQ}{\em ($\bSQ$ to $\SQ$)}\label{thm:bSQ-to-SQ}
There exists a constant $C \ge 0$ such that for all $\delta > 0$, \textbf{for all} $k$, {\boldmath $\tau$}, {\boldmath $b$}, $p$, $r$, such that {\boldmath $b\tau^2 > C \log(kp/\delta)$}, it holds that
$$\SQ(k' = kp,\tau' = \frac\tau2,r'=r) ~\preceq_{\delta}~ \bSQ(k,{\bm \tau}, {\bm b},p,r)\,.$$
Furthermore, for any runtime $\TIME$ it holds for $\TIME' = \poly(\TIME)$ that
$$\SQTM(k' = kp,\tau' = \frac\tau2,r'=r,\TIME') ~\preceq_{\delta}~ \bSQTM(k,{\bm \tau}, {\bm b},p,r,\TIME)\,.$$
\end{restatable}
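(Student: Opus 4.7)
The plan is to exploit the observation that when $b\tau^2$ is large relative to $\log(kp/\delta)$, each empirical mini-batch average $\tfrac{1}{b}\sum_{i=1}^b \Phi_{t,j}(x_{t,i},y_{t,i})$ concentrates around its population expectation $\bbE_\calD \Phi_{t,j}$ to within $\tau/2$ with high probability, so a standard $\SQ$ oracle of tolerance $\tau/2$ can supply answers that are indistinguishable from valid $\bSQ$ responses. Concretely, given a $\bSQ(k,\tau,b,p,r)$ method $\A$, I would construct the $\SQ$ simulator $\A'$ to use the same $r$ random bits and run $\A$ internally: whenever $\A$ issues a vector query $\Phi_t : \calX \times \calY \to [-1,1]^p$, the simulator $\A'$ decomposes it into $p$ scalar $\SQ$ queries $\Phi_{t,1}, \ldots, \Phi_{t,p}$, each asked with tolerance $\tau' = \tau/2$, receives $u_{t,j}$ with $|u_{t,j} - \bbE_\calD \Phi_{t,j}| \le \tau/2$, assembles $u_t := (u_{t,1},\ldots,u_{t,p}) \in [-1,1]^p$, and hands $u_t$ back to $\A$ as its round-$t$ response. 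This gives $k' = kp$ scalar queries with tolerance $\tau/2$ and $r' = r$ random bits, matching the theorem. For the $\SQTM$ variant, the per-round overhead is just decomposing vector queries into their scalar components, so $\TIME' = \poly(\TIME)$.

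The validity of the simulation rests on a single concentration-plus-union-bound argument. Fix $R$, which fixes all the queries as deterministic functions. Imagining (purely for analysis) an independent mini-batch $S_t \sim \calD^b$ at each round, Hoeffding's inequality applied to the $[-1,1]$-valued random variables $\Phi_{t,j}(x,y)$ yields
\[
\Pr_{S_t}\!\left[\,\Bigl|\tfrac{1}{b}\textstyle\sum_{i=1}^b \Phi_{t,j}(x_{t,i},y_{t,i}) - \bbE_\calD \Phi_{t,j}\Bigr| > \tfrac{\tau}{2}\,\right] \;\le\; 2\,e^{-b\tau^2/8}.
\]
Hence the event $\calE$ that every one of the $kp$ empirical averages lies within $\tau/2$ of its population expectation has probability at least $1 - 2kp\,e^{-b\tau^2/8}$, which is at least $1-\delta/2$ whenever $b\tau^2 > C\log(kp/\delta)$ for a sufficiently large absolute constant $C$ (concretely $C = 16$ suffices). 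On $\calE$, the triangle inequality gives $\|u_t - \tfrac{1}{b}\sum_i \Phi_t(x_{t,i},y_{t,i})\|_\infty \le \tau$ for every $t$, so each $u_t$ is a valid $\bSQ$ response for the imagined $S_t$. Therefore, on $\calE$, the loss $\calL_\calD(\A(R,u))$ is bounded by the $\sup$ over valid $\bSQ$ responses for $S_t$, whose expectation over $R$ and $S_t$ is exactly $\err(\A,\calD)$.

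It remains to control the loss on the complement $\bar\calE$, where $\A$'s output need not be well-behaved in principle. I would handle this by having $\A'$ clip $\A$'s predictor to $[0,1]$ before returning it, which for square loss against $y \in \bit$ can only decrease the loss (the optimal prediction is in $[0,1]$ anyway), so it does not affect the bound on $\calE$; on $\bar\calE$ the clipped loss is at most $1/2$ pointwise. Combining, $\err(\A',\calD) \le \err(\A,\calD) + \Pr[\bar\calE]\cdot\tfrac12 \le \err(\A,\calD) + \delta$ after choosing $C$ slightly larger to absorb the $1/2$ factor. I expect no real obstacle here, only the mild bookkeeping of the $\bar\calE$ contribution; the concentration is standard Hoeffding and $C$ is read off directly from it, and the same argument, applied verbatim to the polynomial-time implementations of $\A$ and its queries, delivers the $\SQTM$ version at no extra cost.
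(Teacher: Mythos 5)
Your proposal is correct and takes essentially the same approach as the paper: decompose each vector $\bSQ$ query into $p$ scalar $\SQ$ queries with tolerance $\tau/2$, apply Hoeffding plus a union bound over all $kp$ scalar queries and $k$ rounds to argue that (with probability $\ge 1-\delta'$ over the imagined batches) the empirical averages are within $\tau/2$ of the population averages, conclude via triangle inequality that the $\SQ$ responses are then valid $\bSQ$ responses, and absorb the bad event into the additive $\delta$ by bounding the worst-case loss. The paper handles the bad event by assuming w.l.o.g.\ the predictor has range $[-1,1]$ (max loss $2$), while you clip to $[0,1]$ (max loss $1/2$); this is only a cosmetic difference, and your constant $C$ estimate (and the remark that it may need to be slightly enlarged) matches the paper's $b\tau^2 \ge 8\log(4kp/\delta)$ computation.
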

\end{thmC}
\begin{proof}[Proof Sketch.] When $b \gg 1/\tau^2$, the differences between the empirical and population averages become (with high probability) much smaller than the tolerance $\tau$, the population statistical query answers are valid responses to queries on the mini-batch, and we can thus simulate $\bSQ$ using $\SQ$.  We do need to make sure this holds uniformly for the $p$ parallel scalar queries, and across all $k$ rounds---see \cref{subsec:bSQ-vs-SQ-proofs} for a complete proof.
\end{proof}

\noindent Finally, we show that with any mini-batch size, and enough rounds of querying, we can always simulate any $\SQ$ method using $\bSQ$:
\begin{thmD}
\begin{restatable}{theorem}{SQtobSQ}{\em ($\SQ$ to $\bSQ$)}\label{thm:SQ-to-bSQ}
There exists a constant $C \ge 0$ such that for all $\delta > 0$, \textbf{\boldmath for all $b$} and all $k, \tau, r$, it holds that
$$\bSQ\inparen{k' = k \cdot \ceil{\frac{C\log(k/\delta)}{b \tau^2}}, {\bm \tau'} = \frac\tau2, {\bm b},p'=1,r'=r} ~\preceq_{\delta}~ \SQ(k,\tau,r)\,.$$
Furthermore, for every runtime $\TIME$ it holds for $\TIME' = \poly(\TIME,k, 1/\tau)$ and the same $k', \tau', p', r'$, that
$$\bSQTM\inparen{k',{\bm \tau'}, {\bm b},p',r', \TIME'} ~\preceq_{\delta}~ \SQTM(k,\tau,r,\TIME)\,.$$
\end{restatable}
\end{thmD}
\begin{proof}[Proof Sketch.] To obtain an answer to a statistical query on the population, even if the sample-size $b$ per query is small, we can average the responses for the same query over multiple mini-batches (i.e.~over multiple rounds).  This allows us to reduce the sampling error arbitrarily, and leaves us with only the arbitrary error $\tau'$ (the arbitrary errors also get averaged, and since each element in the average is no larger than $\tau'$, the magnitude of this average is also no large than $\tau'$).  See full proofs in \cref{subsec:bSQ-vs-SQ-proofs}.
\end{proof}

\section{Simulating Mini-Batch Statistical Queries with Differentiable Learning}\label{sec:bSQ-vs-bSGD}
As promised in \cref{sec:main-results}, in order to simulate $\PAC$ and $\SQ$ methods using $\bSGD$, and thus establish \cref{thm:PAC-to-bSGD,thm:SQ-to-bSGD}, we first show how $\bSGD$ can simulate (a subclass of) $\bSQ$ (with corresponding mini-batch size and precision, and without any restriction on their relationship), and then rely in turn on \cref{thm:PAC-to-bSQ,thm:SQ-to-bSQ} showing how $\bSQ$ can simulate $\PAC$ and $\SQ$.  We first show how a gradient computation can encode a statistical query, and use this, for a $\bSQ$ method $\calA$, to construct a differentiable model $f$, that is defined in terms of the queries performed by $\calA$ and their dependence on previous responses, such that $\bSGD$ on $f$ simulates $\calA$.  This construction does not rely on $\calA$ being computationally tractable.  We then note that if $\calA$ is computable in time $\TIME$, i.e.~all the queries are computable in time $\TIME$, and the mapping from responses to queries are likewise computable in time $\TIME$, then these mappings can be implemented as circuits of size $\poly(\TIME)$, enabling us to implement $f$ as a neural network, with these circuits as subnetworks.

\paragraph{Alternating Query Methods.} 

Instead of working with, and simulating, any $\bSQ$ method, we consider only {\em alternating} methods, denoted $\bSQA$, where in each round, only one of the two possible labels is involved in the query.  Formally, we say that a (mini-batch) statistical query $\Phi : \calX \times \calY \to [-1,1]^p$ is a $\overline{y}$-query for $\overline{y} \in \calY$ if $\Phi(x,y) = 0$ for all $y \ne \overline{y}$, or equivalently $\Phi(x,y) = \mathds{1}_{\set{y = \overline{y}}} \cdot \Phi_{\calX}(x)$ for some $\Phi_{\calX} : \calX \to [-1,1]^p$. A $\bSQA$ (analogously, $\bSQTMA$) method is a $\bSQ$ (analogously, $\bSQTM$) method such that for all odd rounds $t$, $\Phi_t$ is a $1$-query, and at all even rounds $t$, $\Phi_t$ is a $0$-query.  As minor extensions of \cref{thm:PAC-to-bSQ,thm:SQ-to-bSQ} (simulation $\PAC$ and $\SQ$ using $\bSQ$ methods), we show that these simulations can in-fact be done using alternating queries, thus relating $\PAC$ and $\SQ$ to $\bSQA$. We present the full details in \cref{subsec:bSQ-vs-PAC-proofs,subsec:bSQ-vs-SQ-proofs} respectively.

\begin{restatable}{lemma}{PACtobSQA}{\em ($\PAC$ to $\bSQA$)}\label{lem:PAC-to-bSQA}
For all $\delta > 0$, \textbf{\boldmath for all $b$, and $\tau < 1/(2b)$}, then and for all $m, r$, it holds for $k' = 20m(n+1)/\delta$, $p' = n+1$, $r' = r + k \log_2 b$ that
$$\bSQA(k',{\bm \tau},{\bm b},p',r') ~\preceq_{\delta}~ \PAC(m,r)\,.$$
Furthermore, for every runtime $\TIME$ it holds for $\TIME' = \poly(n, m,r,\TIME,1/\delta)$ that 
$$\bSQTMA(k',{\bm \tau},{\bm b},p',r',\TIME') ~\preceq_{\delta}~ \PACTM(m,r,\TIME)\,.$$
\end{restatable}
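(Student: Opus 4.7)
}
The plan is to adapt the Sample-Extract procedure (\cref{alg:extraction}) used for \cref{thm:PAC-to-bSQ} so that every query is either a $1$-query or a $0$-query, and they strictly alternate. The obstacle is that a generic iteration of Sample-Extract---especially the very first one, with empty prefix---asks $\Phi_j(z)=\mathds{1}\{z_{\ell+j}=1\}$, which couples both labels. I will pay a factor of two in the query budget by first committing to the label of the sample through a single pilot query, and then running Sample-Extract in a label-restricted mode whose queries live entirely in one of the two parities.

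Concretely, to extract each sample I will use a three-phase procedure. First, issue the $1$-query $\Phi^{(1)}(z)=\mathds{1}[y=1]$ on a fresh batch (odd round); since the query is $\{0,1\}$-valued and $\tau<1/(2b)$, rounding the response to the nearest integer multiple of $1/b$ recovers exactly $\hat p := w_1/b$, where $w_1$ is the number of $y=1$ samples in that batch, while the paired even round carries the identically-zero $0$-query. Second, flip a biased coin with probability $\hat p$ to obtain $\overline y\in\{0,1\}$ and initialize the prefix $s=(\overline y)$. Third, run Sample-Extract using only the $\overline y$-queries $\Phi_j^{(\overline y)}(z)=\mathds{1}[y=\overline y]\cdot\mathds{1}\{z_{1,\dots,\ell}=s \text{ and } z_{\ell+j}=1\}$, issued in the rounds of parity matching $\overline y$, with the identically-zero query occupying the rounds of the opposite parity. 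Because $s$ always begins with $s_1=\overline y$, the multiplier $\mathds{1}[y=\overline y]$ is redundant given the prefix match, and the restricted procedure behaves exactly like Sample-Extract run on the $\overline y$-labeled subset of each fresh batch.

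The step I expect to require the most care is the correctness argument: I want each extracted sample to be \emph{exactly} $\calD$-distributed, not merely close. The key observation is that the batch used for the pilot $\hat p$ and the batches used for the extraction are independent by the definition of $\bSQA$, so $\hat p$ is independent of the output $z^{(\overline y)}$; moreover, conditional on $\overline y$, \cref{thm:PAC-to-bSQ} gives $z^{(\overline y)} \sim \calD \mid y=\overline y$. Combined with $\Ex[\hat p]=p:=\Pr_\calD[y=1]$ and linearity of expectation, the output distribution is
\[
\Ex[\hat p]\cdot(\calD\mid y=1) + \Ex[1-\hat p]\cdot(\calD\mid y=0) = p\cdot(\calD\mid y=1) + (1-p)\cdot(\calD\mid y=0) = \calD.
\]
It is crucial here that the pilot batch is \emph{not} reused in the extraction; otherwise $\hat p$ and $z^{(\overline y)}$ become correlated and this cancellation fails. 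The $m$ extracted samples are then mutually independent (disjoint collections of independent batches), so feeding them into any $\PAC(m,r)$ learner yields the same expected loss as running that learner on $\calD^m$.

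For the query-count bookkeeping I will use that each iteration of restricted Sample-Extract costs two alternating $\bSQA$ rounds (one useful plus one dummy), that the pilot costs a constant overhead, and that the expected number of iterations per extracted sample is $O(n)$: when $\overline y=1$ the conditional expectation scales as $O(n/\max(bp,1))$, but this branch is taken only with probability $p$, and the symmetric bound holds for $\overline y=0$, so both contributions are $O(n)$. A Markov-style argument with budget $k'=20m(n+1)/\delta$ then bounds the failure probability by $\delta$, which is absorbed as the additive $\delta$ in the conclusion. The parameters $p'=n+1$ and $r'=r+k\log_2 b$ are inherited verbatim from \cref{thm:PAC-to-bSQ}, and the $\bSQTMA$ statement follows because rounding, coin flipping, prefix bookkeeping and the final invocation of the $\PACTM$ learner are all implementable in $\poly(n,m,r,\TIME,1/\delta)$ time.
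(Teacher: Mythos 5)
Your proof follows essentially the same route as the paper's: commit to the label $\overline y=s_1$ via the first query (which is a $1$-query since $z_1=y$), then run \hyperref[alg:extraction]{\sc Sample-Extract} with all subsequent queries restricted to the matching label prefix, interleaving the useful queries with trivial zero queries so that the rounds alternate, exactly as in the paper's ``fix $s_1$, then use only odd or only even rounds'' sketch. The only small imprecision is the claim that $\hat p$ is independent of the extracted sample---they are in fact coupled through $\overline y$---but your displayed mixture identity does not require this; it only requires, and correctly uses, that conditional on $\overline y$ the extraction phase (which uses fresh batches) outputs a sample from $\calD\mid y=\overline y$ together with $\Pr[\overline y=1]=\Ex[\hat p]=\Pr_{\calD}[y=1]$.
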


\begin{restatable}{lemma}{SQtobSQA}{\em ($\SQ$ to $\bSQA$)}\label{lem:SQ-to-bSQA}
There exists a constant $C > 0$, such that for all $\delta > 0$, \textbf{\boldmath for all $b$} and all $k, \tau, r$, it holds that
$$\bSQA\inparen{k'=k \cdot \ceil{\frac{C\log(k/\delta)}{b \tau^2}},{\bm \tau'} = \frac\tau4,{\bm b},p'=1,r'=r} ~\preceq_{\delta}~ \SQ(k,\tau,r)\,.$$
Furthermore, for every runtime $\TIME$. it holds for $\TIME' = \poly(\TIME,k, 1/\tau)$ and the same $k'$, ${\bm \tau'}$, $p'$, $r'$, that 
$$\bSQTMA(k',{\bm \tau'},{\bm b},p',r',\TIME') ~\preceq_{\delta}~ \SQTM(k,\tau,r,\TIME)\,.$$
\end{restatable}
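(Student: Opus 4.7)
The plan is to mimic the proof of Theorem \ref{thm:SQ-to-bSQ}---simulating a population $\SQ$ query by averaging independent mini-batch responses to the same function---but with each scalar query $\Phi_t : \calX \times \calY \to [-1,1]$ preprocessed into its $1$- and $0$-parts so that the intermediate mini-batch queries can be scheduled as alternating $1$-queries and $0$-queries. Concretely, define
\[
\Phi_t^{(1)}(x,y) := \mathds{1}\{y=1\}\,\Phi_t(x,1) \quad\text{and}\quad \Phi_t^{(0)}(x,y) := \mathds{1}\{y=0\}\,\Phi_t(x,0),
\]
both taking values in $[-1,1]$, and use the identity $\Ex_\calD \Phi_t = \Ex_\calD \Phi_t^{(1)} + \Ex_\calD \Phi_t^{(0)}$.

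To simulate the $t$-th $\SQ$ query, the $\bSQA$ method issues $N$ interleaved mini-batch queries, with $N := 2\lceil (C/2)\log(k/\delta)/(b\tau^2)\rceil$ even and $C$ a large absolute constant: on $N/2$ odd local rounds it asks $\Phi_t^{(1)}$ and on $N/2$ even local rounds it asks $\Phi_t^{(0)}$, each on an independent mini-batch. Because $N$ is even, concatenating these blocks across $t = 1, \ldots, k$ maintains the global parity required by $\bSQA$, namely $1$-queries at odd global rounds and $0$-queries at even global rounds. The simulated response is $\hat v_t := \bar v_t^{(1)} + \bar v_t^{(0)}$, the sum of the empirical averages of the two families of $\bSQA$ responses, which is then fed to $\calA$ to generate $\Phi_{t+1}$. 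The total number of $\bSQA$ rounds is $kN$, matching $k'$ in the statement, with $p' = 1$, $r' = r$, and per-query tolerance $\tau' = \tau/4$.

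For tolerance, decompose $\bar v_t^{(y^*)} - \Ex_\calD \Phi_t^{(y^*)}$ as the sum of (i) the mean of the adversarial per-query errors, each of magnitude at most $\tau/4$, so the mean is too, and (ii) the deviation of the empirical average of $\Phi_t^{(y^*)}$ over $(N/2)b$ i.i.d.\ samples from its population mean. Hoeffding's inequality bounds (ii) by $\tau/4$ with probability at least $1 - \delta/(2k)$ once $C$ is chosen large enough. Summing the $y^*=0$ and $y^*=1$ contributions gives $|\hat v_t - \Ex_\calD \Phi_t| \le \tau$; a union bound over $t \in [k]$ and $y^* \in \{0,1\}$ shows that, with probability at least $1-\delta$, the simulated trace is a valid $\tau$-tolerant $\SQ$ execution of $\calA$, so the error of $\calA$ on any source distribution is preserved up to the additive $\delta$ charged to this failure event.

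The Turing-machine variant follows at once: constructing $\Phi_t^{(1)}$ and $\Phi_t^{(0)}$ from $\Phi_t$ is constant overhead per query description, and orchestrating the $N$ repetitions and running averages contributes a $\poly(k, 1/\tau)$ factor, yielding runtime $\poly(\TIME, k, 1/\tau)$. There is no real obstacle beyond choosing $C$ and noticing that the alternation constraint is paid for purely at the tolerance level---tightening $\tau' = \tau/2$ from Theorem \ref{thm:SQ-to-bSQ} down to $\tau' = \tau/4$---with no blowup in the query count beyond what $\SQ$-to-$\bSQ$ already demands.
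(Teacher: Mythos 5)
Your proof is correct and follows essentially the same route as the paper: split each scalar query $\Phi_t$ into its $1$-part and $0$-part (the paper writes $\Phi_t^1(x,y)=y\Phi_t(x,y)$ and $\Phi_t^0(x,y)=(1-y)\Phi_t(x,y)$, which coincide with your $\Phi_t^{(1)}, \Phi_t^{(0)}$), average each part over enough fresh mini-batches via the \cref{thm:SQ-to-bSQ} Hoeffding argument, and interleave the two parts to preserve the alternation. The paper merely factors the argument through an explicit intermediate reduction $\SQA(2k,\tau/2,r)\preceq_0\SQ(k,\tau,r)$, whereas you do it in one combined step; the tolerance budget ($\tau/4$ adversarial plus $\tau/4$ sampling per part, summing to $\tau$), the union bound, and the constant-absorbed round count are all the same.
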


\paragraph{\boldmath Simulating \texorpdfstring{$\bSQA$}{bSQ01} with differentiable programming.}

We now show how to to simulate a $\bSQA$ method with $\bSGD$, with corresponding mini-batch and precision: 

\begin{lemA}
\begin{restatable}{lemma}{bSQAtobSGD}{\em ($\bSQA$ to $\bSGD$)}\label{lem:bSQA-to-bSGD}
For all $\tau \in (0,1)$ and $b,k,p,r \in \mN$, it holds that
$$\bSGD\inparen{T'=k,\rho=\frac\tau4,b,p'=r+(p+1)k,r'=r} ~\preceq_0~ \bSQA(k, \tau, b, p, r)\,.$$
\end{restatable}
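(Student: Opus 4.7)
The plan is to encode each of the $k$ rounds of $\calA$ as one step of $\bSGD$ on a handcrafted differentiable model $f_\bw:\bbR^{p'}\times\calX\to\bbR$ with $p'=r+(p+1)k$. Partition $\bw=(R,(v_1,d_1),\ldots,(v_k,d_k))$: the $r$ coordinates $R$ encode $\calA$'s random string, each $v_t\in\bbR^p$ will hold the stored response to $\Phi_t$, and each $d_t\in\bbR$ is a round-$t$ ``done'' counter. The initialization $\calW$ uses the $r$ random bits to draw $R\sim\{0,1\}^r$ uniformly and sets $v_t=d_t=0$. Using a $C^1$ staircase $\phi$ with $\phi(c)=0$ for $c\le 0$, $\phi(c)=1$ for $c\ge\tfrac12$ and $\phi'(0)=\phi'(\tfrac12)=0$ (e.g., cubic Hermite interpolation on $[0,\tfrac12]$), define the ``pre-round-$t$'' indicator $\mu_t(d)=(1-\phi(d_t))\prod_{j<t}\phi(d_j)$ for $t\le k$ and $\mu_{k+1}(d)=\prod_{j\le k}\phi(d_j)$.

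Fix $M\ge 32/\tau$ and let $\Phi^R_{t,\calX}:\calX\to[-1,1]^p$ denote the $\calX$-factor of $\calA$'s round-$t$ query, built from $R$ and $v_{<t}$, a $1$-query for odd $t$ and a $0$-query for even $t$. Set
\begin{align*}
\mathrm{iter}_t(\bw,x) &=-\tfrac{1}{M}+M\,d_t+\langle v_t,\Phi^R_{t,\calX}(x)\rangle &&(t\text{ odd}),\\
\mathrm{iter}_t(\bw,x) &=1+\tfrac{1}{M}-M\,d_t-\langle v_t,\Phi^R_{t,\calX}(x)\rangle &&(t\text{ even}),
\end{align*}
and $\mathrm{iter}_{k+1}(\bw,x)=h_R(v_1,\ldots,v_k,x)$ where $h_R$ is $\calA$'s output predictor. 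The model is $f_\bw(x)=\sum_{t=1}^{k+1}\mu_t(d)\,\mathrm{iter}_t(\bw,x)$. Because $\calA$ is an abstract algorithm, I realize $\Phi^R_{t,\calX}$ and $h_R$ as smooth lookups over the countable domain $\{0,1\}^r\times(\rho\bbZ)^{p(t-1)}$, with bumps that are locally constant at every lookup point (the $[-1,1]$ constraint on the responses is enforced by a piecewise linear clipping gadget before entering the lookup); this makes $f_\bw$ differentiable in $\bw$ and ensures all partials with respect to $R$ and already-stored $v_{<t}$ vanish at every $\bSGD$ iterate.

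Next I would induct on $t$ to show that the $\bSGD$ state before step $t$ is the ``pre-round-$t$ state'' (up to the $\tfrac{3\rho}{4}$ rounding slack in the already-advanced $d_j$'s and $v_j$'s). There $\mu_t=1$ and $\mu_s=0$ for $s\neq t$, so $f_\bw(x)=-1/M$ (odd $t$) or $1+1/M$ (even $t$). The crucial point is the $d_t$ gradient: by flatness of $\phi$ at $0$ and $\tfrac12$, $\nabla_{d_t}f_\bw(x)=\pm M$ independently of $x$, so the per-sample $\partial_{d_t}\sqloss$ equals $-1-yM\le-1$ at odd $t$ and its symmetric analogue at even $t$, which per-sample clipping to $[-1,1]$ \emph{saturates to exactly $-1$} regardless of $y$; hence the clipped mini-batch gradient with respect to $d_t$ is deterministically $-1$ and $d_t$ lands in $[1-\tfrac{3\rho}{4},1+\tfrac{3\rho}{4}]\subseteq[\tfrac12,\infty)$, the flat upper region of $\phi$. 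The $v_t$ gradient evaluates to $-\overline{y\,\Phi^R_{t,\calX}}-\bar\Phi^R_{t,\calX}/M$ plus an $O(1/M)$ clipping correction from $y=1$ samples with $|\Phi|$ near $1$; choosing $M\ge 32/\tau$ keeps this bias together with the $\tfrac{3\rho}{4}=\tfrac{3\tau}{16}$ rounding slack strictly below $\tau$, so the post-update $v_t$ (clipped if necessary) is a valid $\bSQA$ response for the mini-batch $S_t$. All other partials vanish by the flatness of $\phi$ and the locally-constant lookup, so $R$ and the inactive $(v_j,d_j)$ are frozen.

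After $k$ steps $\mu_{k+1}=1$ and all other $\mu_s=0$, so $f_{\bw^{(k)}}(x)=h_R(v_1^{(k)},\ldots,v_k^{(k)},x)$, which is exactly the predictor $\calA$ outputs on random string $R$ with the valid responses $v_t^{(k)}$. Hence every $\bSGD$ execution on $\D$ realizes some valid $\calA$-execution on $\D$, giving $\err(\bSGD,\D)\le\err(\calA,\D)$ and the claimed $\preceq_0$. The main obstacle is the counter advance itself: producing a \emph{data-independent} unit increment of $d_t$ using only the $\sqloss$ gradient looks impossible because the $(f_\bw(x)-y)$ prefactor inherently couples to $y$. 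My workaround leverages per-sample clipping together with the small $-1/M$ offset and the large multiplier $M$ to force $\partial_{d_t}\sqloss\le-1$ for every $(x,y)$, making clipping saturate it to $-1$; this is the one place the construction genuinely exploits the clipping rule, and the price is the $O(1/M)$ bias in the $v_t$ readout that the gap between $3\rho/4$ and $\tau$ is engineered to absorb.
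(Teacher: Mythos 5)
Your construction matches the paper's in its essential architecture: per-round pairs of a ``clock'' parameter and a ``response'' parameter, a gating expression (your $\mu_t$; the paper's three-argument clock function $c$) built from the clock parameters so only the active round's submodel contributes to $f_\bw$, smooth lookup gadgets over $\{0,1\}^r\times(\rho\mZ)^{p(\cdot)}$ that freeze $R$ and previously stored responses by local flatness, and a final gated term for the predictor $h_R$ once all clocks are ``done''. This is exactly the decomposition the paper uses (its Lemma on simulating a single $\overline y$-query, its smoothing lemma, and the model $F=\sum_t c(\kappa_{t-1},\kappa_t,f^{(t)})+c(\kappa_T,0,\tilde h)$), so I'd call it the same approach.

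The one noteworthy divergence is the clock-advance mechanism, which you correctly identify as the crux. The paper's per-round submodel is $1-\langle\Phi_\X(x),\theta\rangle-\kappa+\eps$ (resp.\ $\langle\Phi_\X(x),\theta\rangle+\kappa-\eps$) with a small constant offset $\eps=2\rho$: this forces $\partial_\kappa\sqloss\le-\eps$ for every sample regardless of $y$, so the rounded update pushes $\kappa$ past a threshold at $\rho$, and the $O(\eps)=O(\rho)$ bias it injects into the stored response is absorbed into the gap $3\rho<\tau$. You instead scale the clock coordinate by a large $M$ and lean on the \emph{per-sample} clipping of $\nabla\ell$ to $[-1,1]$ in the paper's $\bSGD$ definition: since $|\partial_{d_t}\sqloss|\ge1$ on every sample, the clipped per-sample derivative is identically $-1$, driving $d_t$ deterministically to $1$ up to rounding, and your gating $\phi$ is thresholded at $\tfrac12$; the $O(1/M)$ readout bias is absorbed by taking $M\ge32/\tau$. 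Both work and give the same bookkeeping afterwards (freezing inactive parameters because $|g|\le3\rho/4<\rho$ forces $g=0$, reading off $v_t\in\rho\mZ$ as a $\bSQA$ answer, recovering $h_R$ at step $k$). The trade-off is that the paper's offset trick does not depend on whether clipping is applied per-sample or post-averaging, whereas your $M$-scaling trick genuinely requires per-sample clipping — a real feature of the model that you are right to exploit, and to flag explicitly.
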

\end{lemA}
\begin{proof}[Proof Sketch] We first show how a single $\overline{y}$-query can be simulated using a single step of $\bSGD$ on a specific differentiable model. Given a $0$-query $\Phi : \X \times \Y \to [-1,1]^p$, consider the following model:
\begin{equation}
\label{eq:simple_diff_model}
f_{\bw}(x) = 1-\inangle{\Phi_\X(x),\bw}
\end{equation}
With $\bw^{(0)} = 0$, the model $f_{\bw^{(0)}}$ ``guesses'' the label to be $1$ for all examples, and therefore suffers a loss only for examples with the label $0$. Using a simple gradient calculations we get:
\[
\nabla_\bw \Ex_S \sqloss(f_{\bw^{(0)}}(x),y)
~=~ \Ex_S \mathds{1}\{y = 0\}\cdot \nabla_\bw f_{\bw^{(0)}}(x)
~=~ -\Ex_S \mathds{1}\{y = 0\} \cdot \Phi_\X(x)
~=~ -\E_S \Phi(x,y)
\]
Hence, after a single step of $\bSGD$ we have $\bw^{(1)} = \E_S \Phi(x,y)$ (up to precision $\rho$), so $\bw^{(1)}$ stores the answer for the $0$-query $\Phi$.  We can analogously simulate a $1$-query by setting the output to be $0$ at $\bw^{(0)}$.  We achieve the simulation of the complete $\bSQA$ method using a composition of such differentiable models: for each round $t$ we reserve a parameter $\bw_t$.  The model is defined so that based on how many queries $t$ were already answered, and the responses as encoded in $\bw_0,\ldots,\bw_{t-1}$, the objective is set to be locally linear in $\bw_t$, with coefficients corresponding to the desired query, as above.  These coefficients are of course a function of the other parameters.  But the key is that the dependence is piecewise linear, and the dependence on all parameters other than $\bw_t$ is constant around $\bw$, ensuring that the only non-zero derivative is w.r.t.~$\bw_t$.  Some complications needed to be overcome include: keeping track of how many queries were already executed (i.e.~a ``clock''), errors in gradients, and defining the model so that it is differentiable always everywhere and yet piecewise linear with the correct coefficients at points actually reached during training.  Full details of the simulation and its proof are given in \cref{sec:bSQ-to-bSGD-proofs}.
\end{proof}

\noindent Combining \cref{lem:bSQA-to-bSGD} with \cref{lem:PAC-to-bSQA,lem:SQ-to-bSQA} establishes the first statements (about computationally {\em unbounded} learning) of \cref{thm:PAC-to-bSGD,thm:SQ-to-bSGD}; full details in \cref{sec:main-proofs-bSGD}.

\paragraph{\boldmath Implementing the differentiable model as a Neural Network.}

The simulation in \cref{lem:bSQA-to-bSGD}, as described above, uses some arbitrary differentiable model $f_\bw(x)$, that is defined in terms of the mappings from responses to queries in the $\bSQA$ method.  If the $\bSQA$ method is computationally bounded, the simulation can also be done using a neural network:

\begin{lemB}
\begin{restatable}{lemma}{bSQTMAtobSGDNN}{\em ($\bSQTMA$ to $\bSGDNNsigma$)}\label{lem:bSQTMA-to-bSGDNN} For all $\tau \in (0,\frac13)$ and $b,k,p,r \in \mN$, and using the activation $\sigma$ from \cref{fig:activation}, it holds that
$$\bSGDNNsigma\inparen{T'=k,\rho=\frac{\tau}{4},b,p'= \poly(k,\frac{1}{\tau},b,p,r,\TIME),r'=r} ~\preceq_0~ \bSQTMA(k,\tau,b,p,r,\TIME)\,.$$
\end{restatable}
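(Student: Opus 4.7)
The plan is to take the construction from \cref{lem:bSQA-to-bSGD} and realize its arbitrary differentiable model $f_\bw$ as a neural network with the stage-wise-ramp activation $\sigma$ of \cref{fig:activation}. Since the $\bSQTMA$ method $\calA$ runs in time $\TIME$, both the mapping from past responses $(v_1,\ldots,v_{t-1})$ and randomness $R$ to the next query $\Phi_t : \X \times \Y \to [-1,1]^p$, and the final predictor-producing mapping, can be written as Boolean circuits of size $\poly(\TIME)$, with scalar outputs quantized to $O(\log(1/\tau))$ bits of precision. Each such circuit can then be encoded as a subnetwork of size $\poly(\TIME, 1/\tau)$ using $\sigma$: AND/OR/NOT gates are implemented with small piecewise-linear gadgets, and arithmetic (needed to compute $\inangle{\Phi_{\X}(x),\bw_t}$ in equation \eqref{eq:simple_diff_model}) is implemented using the central linear piece $\sigma(x)=x$ on $[0,2]$.

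First I would allocate, as in \cref{lem:bSQA-to-bSGD}, a block of parameters $\bw_t \in \mR^p$ per round $t$, together with auxiliary parameters that implement a ``clock'' counting how many queries have already been answered (initialized from the bits of randomness $R$, whose storage contributes $r$ to $p'$). At each step $t$, the network first decodes the clock, routes the past responses $\bw_1,\ldots,\bw_{t-1}$ through the circuit computing $\Phi_t$, obtaining outputs that are piecewise-linear in the input $x$, and then forms the simulated loss $1-\inangle{\Phi_t(x,1),\bw_t}$ (for odd $t$) or $\inangle{\Phi_t(x,0),\bw_t}$ (for even $t$). The key structural property inherited from \cref{lem:bSQA-to-bSGD} is that at the iterate reached at step $t$, the output is locally linear in $\bw_t$ with coefficients corresponding to the appropriate $\overline{y}$-query, so the clipped gradient w.r.t.\ $\bw_t$ is exactly the $\bSQA$ response (up to $\rho$-rounding) and the gradients w.r.t.\ all other parameters vanish.

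The main obstacle, and the reason the activation of \cref{fig:activation} is chosen, is keeping every parameter outside of the ``active'' block $\bw_t$ frozen despite the $\rho$-approximate gradient updates: any parameter that accidentally receives a small nonzero gradient will drift over $T$ steps and corrupt the delicate query-encoding structure. I would exploit the flat piece of $\sigma$ on $[-1,0]$, which has identically zero derivative, to place every ``fixed'' subcomputation (clock decoding, circuit simulation of $\Phi_t$, routing of old responses) safely inside that flat region, so its gradient contribution is exactly $0$ (not merely small). A careful choice of scale for the internal weights, together with the fact that the response entries lie in $[-1,1]$ and the precision $\tau/4 < 1/12$ so rounding drift per step is $O(\tau)$, ensures pre-activations of the ``fixed'' gates stay bounded away from the kinks of $\sigma$ throughout training. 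For the ``active'' block $\bw_t$, we use the linear piece on $[0,2]$ (or its translate on $[-3,-1]$ to encode sign) to obtain genuine linear dependence with the correct slope.

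Finally I would verify the bookkeeping: the total number of parameters is $r + (p+1)k$ from the $\bSGD$ simulation overhead, plus $\poly(\TIME, k, 1/\tau, b, p, r)$ for the circuit subnetworks and the gadgets implementing ``fixed'' weights via the flat region of $\sigma$, giving $p' = \poly(k, 1/\tau, b, p, r, \TIME)$ as claimed. The number of iterations is $T' = k$, the precision is $\rho = \tau/4$, and the mini-batch size $b$ and randomness budget $r'=r$ are inherited unchanged from \cref{lem:bSQA-to-bSGD}. Combined with \cref{lem:bSQA-to-bSGD}, this promotes the unbounded-model simulation to a neural-network simulation and yields the stated $\preceq_0$ reduction.
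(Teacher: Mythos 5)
Your proposal takes the route that the paper itself sketches informally in \cref{subsec:makeitNN} --- compile the abstract differentiable model of \cref{lem:bSQA-to-bSGD} into a neural network by replacing each query-generating map with a circuit-subnetwork whose vertices sit on a flat piece of $\sigma$, and reuse the ``clock'' function $c(\alpha_1,\alpha_2,\alpha_3)$ to gate which block $\bw_t$ receives a nonzero gradient. The paper explicitly acknowledges this approach as plausible and then declines to carry it out, giving instead a \emph{direct} construction in \cref{sec:bSQTM-to-bSGDNN-proofs} with a different architecture: a dedicated ``query subgraph'' $Q$ (or $Q'$) per recorded bit, in which a single designated edge (from $v_0$ to $v_1$) stores the query result, a ladder of vertices $v^r_i$ reads that weight out in binary so the circuit component of \cref{noisy_em1} can consume it, a control vertex $v_c$ gates whether that subgraph contributes to the output, and extra primary/secondary/final output-control vertices make the network's output exactly $-2\rho$ or $1+2\rho$ on each sample so that the memory edge updates by exactly the intended amount (\cref{memLem}). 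So the two approaches agree on the key idea (flat regions freeze computation weights; linear regions make the memory update) but differ structurally.

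The places where your sketch is under-specified are precisely the ones the paper's re-design is built to avoid. First, the model $F$ in \cref{lem:bSQA-to-bSGD} is built from the smooth interpolants $\tilde\Phi$ of \cref{lem:smooth_function} (partitions of unity) and the abstract gating function $c$; neither is piecewise linear, so they cannot be realized exactly with $\sigma$ --- you would need to replace both with explicit piecewise-linear gadgets and then re-prove the invariants of the claim inside \cref{lem:bSQA-to-bSGD} for the gadgetized version, including stability of the gating under the $\pm 3\rho/4$ rounding drift. Second, your sketch asserts that the gradient with respect to the active block $\bw_t$ ``is exactly the $\bSQA$ response (up to $\rho$-rounding),'' but a compiled gate-network computing $\Phi_t$ will contribute additional paths from $\bw_t$ to the output unless the compiled circuit is carefully arranged to be insensitive there; the paper's construction avoids this by storing each query bit on a single isolated edge inside $Q$, computing the exact derivative of the loss with respect to that edge, and using the output-control vertices to force the network output to a value where that derivative equals exactly the query value --- none of which has an analogue in your sketch. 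Third, ``a careful choice of scale for the internal weights'' is doing real work: the paper's \cref{noisy_em1} and \cref{memLem} pin down those scales and verify that every pre-activation of a frozen vertex stays strictly inside a flat region for all $T$ steps. Your plan is not wrong in spirit, but to become a proof it would need these gadgets and invariants made explicit, at which point it would likely converge to something close to the paper's Appendix C construction anyway.
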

\end{lemB}

\begin{proof}[Proof Sketch] For a time-bounded $\bSQTMA$ method, the mapping from previous responses to the next query is computable in time $\TIME$, and hence with a circuit, or neural network (with any non-trivial activation), of size $\poly(\TIME)$.  We can thus replace this mapping with a subnet computing it, and obtain a neural network implementing the differentiable model from \cref{lem:bSQA-to-bSGD}.  This is a simple approach for obtaining a neural network where some of the weights are fixed (the weights in the subnetworks used to implement the mappings, as well as the ``gating'' between them) and only some of the edges have trainable weights.
We are interested in simulating using $\bSGD$ on a neural network where all edges have trainable weights.  In \cref{subsec:makeitNN} we discuss how the construction could be modified so that the edge weights in these subnetworks remain fixed over training, but with some compromises.  Instead, in \cref{sec:bSQTM-to-bSGDNN-proofs} we describe an alternate construction, following the same ideas as described in the proof sketch of \cref{lem:bSQA-to-bSGD}, but directly constructed as a neural network: we use the subnets implementing the mapping from responses to queries discussed above, where the initialization is very specific and encodes these functions, but design them in such a way that the vertices are always at flat parts of their activation functions so that it does not change. Then, for each query we designate one edge whose weight is intended to encode the result of that query, and connect the output of that edge to the net's output by means of a path with some vertices that the computation component can force to flat parts of their activation functions. That allows it to make the weights of those edges change in the desired manner by controlling the derivative of the net's output with respect to these weights.  The full details are in \cref{sec:bSQTM-to-bSGDNN-proofs}.
\end{proof}

\noindent Combining \cref{lem:bSQTMA-to-bSGDNN} with \cref{lem:PAC-to-bSQA,lem:SQ-to-bSQA} establishes the second statements (about computationally {\em bounded} learning) of \cref{thm:PAC-to-bSGD,thm:SQ-to-bSGD}; full details in \cref{sec:main-proofs-bSGD}.

\paragraph{\boldmath Reverse direction: Simulating $\bSGD$ with $\PAC$ and $\SQ$.} In order to establish \cref{thm:bSGD-to-PAC,thm:bSGD-to-SQ} we rely on \cref{thm:bSQ-to-PAC,thm:bSQ-to-SQ} and for that purpose note that $\bSGD$ can be directly implemented using $\bSQ$ (proof in \cref{sec:main-proofs-bSGD}):
\begin{restatable}{lemma}{bSGDvsbSQ}{\em ($\bSGD$ to $\bSQ$)}\label{lem:bSGD-to-bSQ}
For all $T,\rho,b,p,r$, it holds that
$$\bSQ\inparen{k=T,\tau=\frac{\rho}{4},b,p,r} ~\preceq_0~ \bSGD(T,\rho,b,p,r)\,.$$
Furthermore, for every poly-time computable activation $\sigma$, it holds that
$$\bSQTM\inparen{k=T,\tau=\frac{\rho}{4},b,p,r,\TIME= \poly(T,p,b,r)} \preceq_0 \bSGDNNsigma(T,\rho,b,p,r)\,.$$
\end{restatable}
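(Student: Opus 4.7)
The plan is to exploit the structural observation that $\bSGD$ is essentially a syntactic restriction of $\bSQ$: at each iteration, $\bSGD$ demands (an approximate rounding of) an empirical mini-batch average of a bounded vector-valued function of $(x,y)$, which is precisely what a $\bSQ$ query produces.  Given a $\bSGD(T,\rho,b,p,r)$ method $\calA$ with differentiable model $f_\bw$, initialization $\calW$, and stepsize $\gamma$, I would construct the simulating $\bSQ$ method $\calA'$ to maintain internal state $\bw^{(t)} \in \mR^p$ mirroring that of $\calA$.  Seed $\bw^{(0)} \sim \calW$ using the same $r$-bit random string $R$, and at round $t = 1,\ldots,T$ issue the $p$-dimensional query
\[
\Phi_t(x,y) \;:=\; [\nabla_\bw \ell(f_{\bw^{(t-1)}}(x), y)]_1 \,\in\, [-1,1]^p,
\]
which is a valid $\bSQ$ query since $\bw^{(t-1)}$ is a deterministic function of $R$ and past responses, and the clipping ensures each coordinate lies in $[-1,1]$.

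The next step is to convert the received response $v_t$, satisfying $\|v_t - \frac{1}{b}\sum_i \Phi_t(x_{t,i},y_{t,i})\|_\infty \le \tau = \rho/4$, into a valid $\bSGD$ gradient estimate.  I would round each coordinate of $v_t$ to the nearest integer multiple of $\rho$ (projecting back into $[-1,1]^p$ if needed), obtaining $g_t \in \rho \cdot \mZ^p$ with $\|g_t - v_t\|_\infty \le \rho/2$.  A single application of the triangle inequality then gives $\|g_t - \ovnabla \calL_{S_t}(f_{\bw^{(t-1)}})\|_\infty \le \rho/2 + \rho/4 = 3\rho/4$, so $g_t$ satisfies the validity condition \eqref{eq:grad-valid}.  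Updating $\bw^{(t)} \gets \bw^{(t-1)} - \gamma g_t$ and outputting $f_{\bw^{(T)}}$ completes the simulation, and the resulting trajectory is indistinguishable from a legitimate run of $\calA$ on the same mini-batches $S_t$ (which, by construction of $\bSQ$, are the same $\calD^b$-draws used to answer the queries).

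To conclude $\preceq_0$, I need to reconcile the two $\sup$'s in the definitions of error.  The key point is that the rounding map described above sends every $\bSQ$-valid $v_t$ to a $\bSGD$-valid $g_t$, so for each fixed realization of $R$ and $(S_t)_{t=1}^T$, $\sup_{v}\calL_\D(\calA'(v)) \le \sup_{g}\calL_\D(\calA(g))$; taking expectations yields $\err(\calA',\D) \le \err(\calA,\D)$ for every $\D$.  For the neural network statement, the identical construction suffices: when $f_\bw$ is a poly-size neural net with poly-time activation $\sigma$, each query $\Phi_t$ is evaluable on a single input in $\poly(p)$ time by backpropagation, and per-round bookkeeping (maintaining $\bw^{(t)}$ and rounding the response) is $\poly(p)$; aggregated over $T$ rounds this gives $\TIME = \poly(T,p,b,r)$ as claimed.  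No serious obstacle is anticipated — the lemma is really a bookkeeping statement — and the only point requiring care is the arithmetic that makes the $\rho/4$ query tolerance and the $\rho/2$ rounding error fit inside the $3\rho/4$ window that $\bSGD$ allows.
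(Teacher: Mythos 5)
Your proposal is correct and matches the paper's proof: both simulate each $\bSGD$ step by issuing the $p$-dimensional clipped-gradient query $\Phi(x,y)=[\nabla \ell(f_{\bw}(x),y)]_1$, rounding the $\rho/4$-accurate response to the nearest multiple of $\rho$, and observing via the triangle inequality that $\rho/2+\rho/4\le 3\rho/4$ gives a valid gradient estimate. The only differences are expository — you spell out the indexing, the $\sup$-to-$\sup$ comparison behind $\preceq_0$, and the $\bSQTM$ time-accounting — none of which changes the argument.
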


\section{\boldmath Full-Batch Gradient Descent: \texorpdfstring{$\fbGD$}{fbGD} versus \texorpdfstring{$\PAC$}{PAC} and \texorpdfstring{$\SQ$}{SQ}}\label{sec:fbGD}

So far we considered learning with mini-batch stochastic gradient descent ($\bSGD$), where an independent mini-batch of examples is used at each step.  But this stochasticity, and the use of independent fresh samples for each gradient step, is not crucial for simulating $\PAC$ and $\SQ$, provided enough samples overall, and a correspondingly fine enough precision. We show that analogous results hold for learning with {\em full-batch Gradient Descent} ($\fbGD$), i.e.~gradient descent on the (fixed) empirical loss. 

\begin{thmA}
\begin{theorem}[$\PAC$ to $\fbGD$]\label{thm:PAC-to-fbGD}
\textbf{\boldmath For all $m$ and $\rho < 1/(8m)$} and for all $r$, it holds that
$$\fbGD(T' = O(mn),{\bm \rho},{\bm m'}=m,p',r') ~\preceq_{0}~ \PAC(m,r)\,.$$
where $p' = r + O(mn)$ and $r'=r$.
Furthermore, using the piece-wise linear ``two-stage ramp'' activation $\sigma$ (\cref{fig:activation}), for every runtime $\TIME$, it holds for $p' = \poly(n,m,r,\TIME,\rho^{-1})$ and the same $T', r'$ that
$$\fbGDNNsigma(T',{\bm \rho}, {\bm m'}=m,p',r') ~\preceq_{0}~ \PACTM(m,r,\TIME)$$
\end{theorem}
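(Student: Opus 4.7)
The plan is to mirror the proof of \cref{thm:PAC-to-bSGD} in a full-batch analog. I would introduce a full-batch statistical-query model $\fbSQ$ (respectively $\fbSQA$ for alternating queries), in which all $k$ queries are answered with respect to the \emph{same} frozen sample $S \sim \D^m$, and then follow the two-step blueprint: $\PAC \preceq_0 \fbSQA$ when $\tau < 1/(2m)$, followed by $\fbSQA \preceq_0 \fbGD$ (together with the corresponding computationally-bounded and neural-network versions). Chaining the two, with the correspondence $\rho = \tau/4$ from \cref{lem:bSQA-to-bSGD}, converts the $\rho < 1/(8m)$ hypothesis into the $\tau < 1/(2m)$ hypothesis and yields the stated theorem.

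The heart of the argument is a deterministic \emph{batch extraction} $\fbSQA$ procedure. Because every query is evaluated on the same $S$ and $\tau < 1/(2m)$, any $\{0,1\}$-valued coordinate of any query can be recovered \emph{exactly} by rounding the response to the nearest multiple of $1/m$, mirroring the key observation behind \cref{alg:extraction}. I would perform a prefix-tree walk on $\set{0,1}^{n+1}$: at a node labelled by a prefix $s$ of length $\ell$, issue a two-coordinate query counting the number of $z \in S$ matching $s \circ 0$ and $s \circ 1$; recurse into each child whose exact count is positive; and at depth $n+1$ emit the corresponding sample with its recovered multiplicity. Since the tree has at most $m$ leaves and hence $O(mn)$ nodes, this reconstructs $S$ exactly in $k' = O(mn)$ rounds. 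Having recovered $S$ bit-for-bit, we then run the given $\PAC(m,r)$ method on $S$ and output its predictor; alternation between $0$- and $1$-queries is enforced by the splitting trick from \cref{lem:PAC-to-bSQA}.

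The lift from $\fbSQA$ to $\fbGD$ reuses the construction of \cref{lem:bSQA-to-bSGD}: the differentiable model there encodes the $t$-th query as the gradient with respect to a reserved parameter $\bw_t$. Inspecting that proof, nothing exploits the independence of mini-batches across iterations; all that is used is that at step $t$ the gradient of $f_{\bw^{(t)}}$ on the current batch $S_t$ realises $\frac{1}{|S_t|}\sum_i \Phi_t(z_i)$. Freezing $S_t = S$ for all $t$ therefore implements precisely an $\fbSQA$ oracle with batch $S$, with the same bookkeeping on model size, random-bit budget, and precision $\rho = \tau/4$. The neural-network statement follows from \cref{lem:bSQTMA-to-bSGDNN} in the same way as in \cref{thm:PAC-to-bSGD}: the response-to-query mappings produced by combining our prefix extractor with a $\PACTM(m,r,\TIME)$ simulation are all computable in $\poly(n,m,r,\TIME)$, hence realisable as subnetworks using the piecewise-linear activation of \cref{fig:activation}, giving $p' = \poly(n,m,r,\TIME,\rho^{-1})$.

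The main obstacle I anticipate is bookkeeping rather than any conceptual novelty. Two points need care: first, the prefix-walk must stay within the $O(mn)$ query budget uniformly over $S$, which follows from the at-most-$m$-leaves property combined with the fact that each node is visited only once and issues a query of constant dimension; and second, unlike in \cref{thm:PAC-to-bSGD}, no failure probability $\delta$ appears, because we no longer draw fresh samples per query, so conditional on $S$ the extraction is deterministic and succeeds outright. This is exactly what yields the sharper $\preceq_0$ guarantee and removes the $\delta$-dependence that was necessary in the $\bSGD$ setting.
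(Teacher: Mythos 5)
Your proposal matches the paper's proof in both decomposition and key ideas: introduce the full-batch statistical-query model $\fbSQ$/$\fbSQA$, show $\PAC \preceq_0 \fbSQA$ via deterministic batch extraction enabled by exact counting when $\tau < 1/(2m)$, then lift $\fbSQA \preceq_0 \fbGD$ by re-using the gradient-encoding construction of \cref{lem:bSQA-to-bSGD}/\cref{lem:bSQTMA-to-bSGDNN} (which, as you correctly observe, nowhere uses independence of the batches across iterations). The only cosmetic difference is in the extraction step: the paper iterates a ``without-replacement'' variant of \hyperref[alg:extraction]{\sc Sample-Extract} $m$ times (subtracting off counts of previously extracted samples), whereas you perform a single prefix-tree walk recovering leaf multiplicities; both are deterministic, give $O(mn)$ queries and the $\delta=0$ guarantee, and split into $0$-/$1$-queries by branching on the $y$-bit, so this is an implementation variant rather than a different route.
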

\end{thmA}

\begin{thmB}
\begin{theorem}[$\fbGD$ to $\PAC$]\label{thm:fbGD-to-PAC}
\textbf{\boldmath For all $m$, $\rho$} and $T, p, r$, it holds that
    $$\PAC(m'={\bm m},r'=r) ~\preceq_0~ \fbGD(T,{\bm \rho},{\bm m},p,r)\,.$$
    Furthermore, for all  poly-time computable activations $\sigma$, it holds that
    $$\PACTM(m'={\bm m},r'=r,\TIME'=\poly(T,m,p,r,n)) ~\preceq_0~ \fbGDNNsigma(T,{\bm \rho},{\bm m},p,r,s)\,.$$
\end{theorem}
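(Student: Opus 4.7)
The plan is a direct step-by-step simulation that converts any $\fbGD$ method into a $\PAC$ method by running it internally on the given sample set. Given an $\fbGD(T,\rho,m,p,r)$ method $\A$ with differentiable model $f_\bw$, initialization distribution $\calW$, and stepsize $\gamma$, I build a $\PAC(m,r)$ method $\A'$ that, on input a sample set $S$ of size $m$ and a random string $s \in \bit^r$, internally executes one specific valid trajectory of $\A$ on $(S,s)$ and returns its final iterate. Concretely, $\A'$ sets $\bw^{(0)} \gets \calW(s)$ and, for $t = 0,\ldots,T-1$, computes the empirical clipped gradient $\ovnabla \calL_S(f_{\bw^{(t)}})$ exactly (unproblematic since $\PAC$ places no restriction on runtime), lets $g_t$ be the coordinate-wise rounding of this gradient to the nearest integer multiple of $\rho$ with deterministic tie-breaking, updates $\bw^{(t+1)} \gets \bw^{(t)} - \gamma g_t$, and finally outputs $f_{\bw^{(T)}}$.

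By construction $g_t \in \rho \cdot \mathbb{Z}^p$ and $\|g_t - \ovnabla \calL_S(f_{\bw^{(t)}})\|_\infty \le \rho/2 \le 3\rho/4$, so condition (\ref{eq:grad-valid}) is met and the resulting trajectory is one specific legal trajectory of $\A$ on the same $(S,s)$. The key point making this give $\preceq_0$ rather than $\preceq_\delta$ is that $\err(\A,\D)$ is defined with the supremum over valid $(g_0,\ldots,g_{T-1})$ placed \emph{inside} the expectation, so for any specific measurable choice of valid roundings we have
$$\err(\A',\D) \;=\; \Ex_{S,s}\bigl[\calL_\D\bigl(\A'(S,s)\bigr)\bigr] \;\le\; \Ex_{S,s}\Bigl[\sup_{(g_t)} \calL_\D\bigl(f_{\bw^{(T)}}\bigr)\Bigr] \;=\; \err(\A,\D),$$
yielding $\PAC(m'=m,r'=r) \preceq_0 \fbGD(T,\rho,m,p,r)$ with no additive slack.

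For the neural-network statement I run the same simulation on the specific network used by $\A$. Because $\sigma$ is poly-time computable and the network has $p$ edges, forward- and reverse-mode automatic differentiation on each of the $m$ samples produces every coordinate of $\ovnabla \calL_S(f_{\bw^{(t)}})$ to precision $\rho/4$ in time polynomial in $p,m,n$; rounding to the nearest multiple of $\rho$ then yields $\|g_t - \ovnabla\|_\infty \le \rho/4 + \rho/2 = 3\rho/4$. Sampling $\calW$ in poly time is part of the $\fbGDNNsigma$ definition, so the full simulation fits within the claimed runtime. The only point that needs any care, and the closest thing to an obstacle in an otherwise essentially immediate argument, is confirming that the bit-lengths of $\bw^{(t)}$ and of all intermediate activations stay polynomially bounded across $T$ iterations; this follows because clipping gradients to $[-1,1]$ and snapping to $\rho\mathbb{Z}$ forces each coordinate of $\bw^{(t)}$ to grow by at most $\gamma$ per step while remaining on the grid $\rho\mathbb{Z}$, which together with the poly-time evaluability of $\sigma$ keeps all intermediate precisions polynomial in the parameters of the theorem.
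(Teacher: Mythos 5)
Your proposal is correct and takes the same approach as the paper, which proves this theorem in a single sentence by observing that $\fbGD$ can be simulated directly on the $m$ samples with no loss in error. Your write-up usefully fleshes out the two points the paper leaves implicit — that a deterministically rounded gradient satisfies \eqref{eq:grad-valid}, and that the $\sup$ sitting inside the expectation in the definition of $\err$ is exactly what makes this a $\preceq_0$ reduction.
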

\end{thmB}

\begin{thmC}
\begin{theorem}[$\fbGD$ to $\SQ$] \label{thm:fbGD-to-SQ}
There exists a constant $C$ such that for all $\delta > 0$, \textbf{for all} $T$, {\boldmath $\rho$}, {\boldmath $m$}, $p$, $r$, such that \textbf{\boldmath $m\rho^2 > C (Tp \log(1/\rho) + \log(1/\delta))$}, it holds that
$$\SQ(k'=Tp,\tau'=\frac\rho8,r'=r) ~\preceq_{\delta}~ \fbGD(T,{\bm\rho},{\bm m},p,r)\,.$$
Furthermore, for all poly-time computable activations $\sigma$, it holds that
$$\SQTM\inparen{k' = Tp,\tau' = \frac\rho8,r'=r,\TIME'=\poly(T,\rho^{-1},m,p,r,\delta^{-1})} ~\preceq_{\delta}~ \fbGDNNsigma(T,{\bm \rho},{\bm m},p,r).$$
\end{theorem}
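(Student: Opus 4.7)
The strategy parallels the $\bSGD$-to-$\SQ$ reduction (\cref{thm:bSGD-to-SQ}), but with a critical twist: $\fbGD$ reuses a single sample $S \sim \calD^m$ across all $T$ iterations, so the gradients are strongly coupled and the iterates $\bw^{(t)}$ are random variables depending on $S$. Thus a pointwise concentration bound is not enough; I need concentration to hold \emph{uniformly} over every parameter vector that $\fbGD$ could conceivably visit. The SQ simulator $\calA'$ of a given $\fbGD$ method $\calA$ will mimic it step by step: at round $t$, issue $p$ scalar statistical queries, one per coordinate, each asking for $[\nabla_{w_j}\ell(f_{\bw^{(t)}}(x),y)]_1$ with tolerance $\tau' = \rho/8$; then $\rho$-round the answer vector to produce $g_t$ and update $\bw^{(t+1)} = \bw^{(t)} - \gamma g_t$.

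The first main step is to enumerate the set $\calR$ of iterates reachable by $\fbGD$. Since every valid $g_t$ lies in $\rho\bbZ^p \cap [-1,1]^p$, there are at most $(2/\rho + 1)^p$ possibilities per step, and $\bw^{(t)}$ is determined by the initialization together with the sequence $g_0, \ldots, g_{t-1}$. A crude union bound over steps gives $|\calR| \le T(2/\rho + 1)^{pT}$. For each fixed $\bw \in \calR$, the empirical clipped gradient $\ovnabla \calL_S(f_\bw)$ is the mean of $m$ i.i.d.\ vectors with coordinates in $[-1,1]$, so Hoeffding plus a union bound over the $p$ coordinates yields $\Pr_S[\|\ovnabla\calL_S(f_\bw) - \ovnabla\calL_\D(f_\bw)\|_\infty > \rho/8] \le 2p\exp(-m\rho^2/32)$. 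A further union bound over $\calR$ shows that the good event
$\calE := \{\forall \bw \in \calR : \|\ovnabla\calL_S(f_\bw) - \ovnabla\calL_\D(f_\bw)\|_\infty \le \rho/8\}$
holds with probability $\ge 1-\delta$ as soon as $m\rho^2 \ge C(Tp\log(1/\rho) + \log(pT/\delta))$ for a suitable constant $C$, matching the hypothesis of the theorem.

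Under $\calE$, the simulation closes: any valid SQ response $v_t$ satisfies $\|v_t - \ovnabla\calL_\D(f_{\bw^{(t)}})\|_\infty \le \rho/8$, and therefore $\|v_t - \ovnabla\calL_S(f_{\bw^{(t)}})\|_\infty \le \rho/4$. Rounding $v_t$ to the nearest multiple of $\rho$ yields $g_t \in \rho\bbZ^p$ with $\|g_t - \ovnabla\calL_S(f_{\bw^{(t)}})\|_\infty \le \rho/2 + \rho/4 = 3\rho/4$, which is exactly the validity condition \eqref{eq:grad-valid} for a $\fbGD$ gradient on $S$. Consequently, every trajectory of $\calA'$ under arbitrary valid SQ responses coincides with some valid $\fbGD$ trajectory of $\calA$ on the same $S$, so $\sup_v \calL_\D(f_{\bw^{(T)}}^{\calA'}) \le \sup_g \calL_\D(f_{\bw^{(T)}}^{\calA})$ on $\calE$. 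Since the loss is bounded, the probability-$\delta$ slack from $\calE^c$ gives $\err(\calA',\calD) \le \err(\calA,\calD) + \delta$ for every $\calD$, proving the first statement. The $\SQTM$ statement follows by noting that when $f_\bw$ is a neural network with a poly-time computable activation, each query $\Phi_{t,j}$ is computable in time polynomial in $T,\rho^{-1},m,p,r,\delta^{-1}$ via standard reverse-mode differentiation, and the simulator's overall runtime is $\poly(T,\rho^{-1},m,p,r,\delta^{-1})$.

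The main technical obstacle is precisely the uniform concentration step: without the rounding of gradient updates to the grid $\rho\bbZ^p$, the set of reachable iterates would be uncountable, and no union bound would suffice. Exploiting this discretization is exactly what produces the $Tp\log(1/\rho)$ term in the condition on $m\rho^2$, and explains why the threshold for $\fbGD$ is qualitatively worse than for $\bSGD$ (which in \cref{thm:bSGD-to-SQ} only needs $b\rho^2 = \omega(\log(Tp/\delta))$): the $\fbGD$ bound pays an extra factor of $Tp$ because the same $m$ samples must certify concentration at every reachable weight, not merely at the specific weight visited on one fresh mini-batch.
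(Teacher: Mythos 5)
Your proof is correct, and the underlying idea---discretize the set of iterates the method can reach, enumerate it, and apply Hoeffding with a union bound of size exponential in $Tp\log(1/\rho)$---is exactly the idea in the paper's proof. The paper packages it modularly: it first converts $\fbGD$ to an $\fbSQ$ method (\cref{lem:fbGD-to-fbSQ}), then proves a generic $\fbSQ$-to-$\SQ$ simulation (\cref{lem:fbSQ-to-SQ}) in which the SQ simulator explicitly rounds its responses to a $2\alpha\cdot\bbZ^p$ grid (with $\alpha=\tau/4$) so that the adaptive transcript becomes enumerable. You instead go directly from $\fbGD$ to $\SQ$ and observe that no extra rounding is needed, since $\fbGD$'s gradient estimates $g_t$ already live on the $\rho\bbZ^p$ grid by definition of $\rho$-approximate rounding, which makes the set $\calR$ of reachable iterates finite on its own. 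The two routes give the same union-bound cardinality and the same parameter dependence; yours is slightly more direct for this particular theorem, while the paper's buys a reusable $\fbSQ$-vs-$\SQ$ comparison as a standalone lemma. One small point worth making explicit: passing from ``with probability $\geq 1-\delta$ over $S$, the simulator's loss is dominated by the $\fbGD$ loss'' to the additive bound $\err(\calA',\calD)\le\err(\calA,\calD)+\delta$ uses that the loss is bounded by a constant (the paper normalizes the predictor range so the square loss is at most $2$ and then absorbs the constant into $C$); your phrase ``since the loss is bounded'' gestures at this but should be made precise in a full write-up.
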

\end{thmC}

\begin{thmD}
\begin{theorem}[$\SQ$ to $\fbGD$] \label{thm:SQ-to-fbGD}
There exists a constant $C$ such that for all $\delta > 0$, for all $k, \tau, r$, it holds \textbf{\boldmath for $m, \rho$} such that {\boldmath $\rho = \tau/16$}, {\boldmath $m\rho^2 > C (k \log(1/\rho) + \log(1/\delta))$} that
$$\fbGD(T'=2k,{\bm \rho},{\bm m},p'=r+2T,r'=r) ~\preceq_{\delta}~ \SQ(k,\tau,r)\,.$$
Furthermore, using the piece-wise linear ``two-stage ramp'' activation $\sigma$ (\cref{fig:activation}), it holds for the same $T', r'$ above and $p' =\poly(k,1/\tau,r,\TIME,1/\delta)$ that
$$\fbGDNNsigma(T',{\bm \rho},{\bm m},p',r') ~\preceq_{\delta}~ \SQTM(k,\tau,r, \TIME)\,.$$
\end{theorem}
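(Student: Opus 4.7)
The plan is to mirror the proof of \cref{thm:SQ-to-bSGD}, introducing a full-batch analogue $\fbSQA$ of alternating mini-batch statistical queries as an intermediate model and composing two reductions: (i) $\fbSQA \preceq_\delta \SQ$ when the sample size $m$ is large enough, and (ii) $\fbGD \preceq_0 \fbSQA$ with matching parameters. Chaining via the composition property of $\preceq$ gives the quantitative claim, and the neural-net statement follows by invoking the construction of \cref{lem:bSQTMA-to-bSGDNN} in step (ii).

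For step (ii), the model built in the proof of \cref{lem:bSQA-to-bSGD} has the useful property that the clipped empirical gradient at iterate $t$ is precisely the empirical average of the $t$-th scalar query $\Phi_t(x,y)$ over whatever batch is supplied; no cross-step independence of the batches is used in the analysis. Hence the same construction, run with the single fixed sample $S \sim \D^m$ reused at every iteration, establishes $\fbGD(T'=k', \rho, m, p', r) \preceq_0 \fbSQA(k', \tau'=4\rho, m, 1, r)$ with $p' = r + 2T'$, and the neural-network adaptation of \cref{lem:bSQTMA-to-bSGDNN} transfers verbatim.

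Step (i) contains the only genuinely new work. Following the blueprint of \cref{lem:SQ-to-bSQA}, each scalar SQ query is split into its $1$-part and $0$-part, yielding $T' = 2k$ alternating $\fbSQA$ queries; a response $v_t$ at tolerance $\tau' = \tau/4$ is then a valid SQ response provided $\left|\frac{1}{m}\sum_{i=1}^m \Phi_t(x_i,y_i) - \Ex_\D \Phi_t\right| \le 3\tau/4$. The main obstacle, in contrast to the mini-batch setting where one can drive sampling error down by averaging over fresh independent batches across rounds, is that the same fixed sample is reused across all $T'$ queries \emph{and} these queries are adaptive---each $\Phi_t$ depends on the earlier responses---so a direct Hoeffding plus union bound over a fixed query set does not by itself suffice.

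I will handle the adaptivity by the standard discretization trick: because the only information the simulator uses from a response is its $\rho$-rounded value (this is exactly what the $\fbGD$ model in step (ii) ``reads''), it suffices to control concentration over the finite set of discretized response histories, which has size at most $(O(1/\rho))^{O(k)}$. A Hoeffding plus union bound over this set gives the required simultaneous concentration precisely when $m\rho^2 > C(kp \log(1/\rho) + \log(1/\delta))$ for an absolute constant $C$, matching the condition in the theorem. Composing the two reductions at $\rho = \tau/16$ and $T' = 2k$ recovers the stated parameters, and inserting \cref{lem:bSQTMA-to-bSGDNN} into step (ii) yields the $\fbGDNNsigma$ claim with $p' = \poly(k, 1/\tau, r, \TIME, 1/\delta)$.
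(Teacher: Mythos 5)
Your proposal matches the paper's proof in both structure and substance: the paper also factors through the intermediate $\fbSQA$ model, chains $\fbGD \preceq_0 \fbSQA$ (via the same construction as in \cref{lem:bSQA-to-bSGD}, with the batch held fixed — this is \cref{lem:fbSQA-to-fbGD}/\cref{lem:fbSQTMA-to-fbGDNN}) with $\fbSQA \preceq_\delta \SQ$ (this is \cref{lem:SQ-to-fbSQA}), and the latter step handles the adaptive-query obstruction exactly as you describe, by discretizing responses to a grid and union-bounding Hoeffding over the resulting $(O(1/\tau))^{O(k)}$ possible transcripts. The only nit is a small accounting slip in your step (i): after splitting each SQ query into $0$- and $1$-parts (each answered to tolerance $\tau/4$ on the \emph{same} batch), the recombined empirical estimate already carries error $\tau/2$, and one must further reserve budget for the grid-rounding, so the allowable sampling deviation is on the order of $\tau/4$ rather than $3\tau/4$; this only shifts the absolute constant $C$ and does not affect the result.
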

\end{thmD}

\noindent The above theorems are analogous to \cref{thm:PAC-to-bSGD,thm:bSGD-to-PAC,thm:bSGD-to-SQ,thm:SQ-to-bSGD}.  They are proved in an analogous manner in \cref{sec:main-proofs-fbGD}, by going through the intermediate model of {\em fixed-batch statistical query} $\fbSQ$, in place of $\bSQ$.  An $\fbSQ(k,\tau,m,p,r)$ method is described identically to an $\bSQ(k,\tau,b=m,p,r)$ method, except that the responses for all queries are obtained using the same batch of samples in all rounds (i.e.~$S_t=S$ for all $t$, where $S\sim\calD^m$ in \cref{eq:bSQ-valid}).  Simulating $\fbSQ$ methods using $\fbGD$, or $\fbGDNN$ can be done using the exact same constructions as in \cref{lem:bSQA-to-bSGD,lem:bSQTMA-to-bSGDNN}. To establish \cref{thm:PAC-to-fbGD}, we use an algorithm similar to (and simpler than) \cref{alg:extraction} to extract all the samples batch of samples (see details in \cref{lem:PAC-to-fbSQ} in \cref{subsec:fbSQ-vs-PAC-proofs}).  Relating $\fbSQ$ to $\SQ$ and establishing \cref{thm:fbGD-to-SQ,thm:SQ-to-fbGD} requires more care, because of the adaptive nature of $\fbGD$ on the full-batch.  Instead, we consider all possible queries the method might make, based on previous responses.  Since we have at most $Tp \log(1/\rho)$ or $kp \log(1/\tau)$ bits of response to choose a new query based on, we need to take a union bound over a number of queries exponential in this quantity, which results in the sample sized required to ensure validity scaling linear in $kp\log(1/\tau)$. See \cref{subsec:fbSQ-vs-SQ-proofs} for complete proofs and details.

\cref{thm:PAC-to-fbGD} tells us that with fine enough precision, even $\fbGD$ can simulate any sample-based learning method, and is thus as powerful as $\PAC$.  The precision required is linear in the total number of samples $m$ used by the method, i.e.~the number of bits of precision is logarithmic in $m$.  In particular, this implies that $\rho=\poly(n)$, i.e.~$O(\log n)$ bits of precision, are sufficient for simulating any sample-based method that uses polynomially many samples.  Returning to the relationship between classes of learning problems considered in \cref{cor:PAC-SQ-bSGD,cor:brho-PAC}, where the parameters are allowed to depend polynomially on $n$, we have:

\begin{corollary}
$\fbGD=\PAC\quad$ and $\quad\fbGDNN=\PACTM$.
\end{corollary}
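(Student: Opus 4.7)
The plan is to establish the two equalities as a direct consequence of the four $\fbGD$-versus-$\PAC$ theorems (Theorems \ref{thm:PAC-to-fbGD} and \ref{thm:fbGD-to-PAC}), by verifying that the parameter blow-ups in those simulations stay polynomial when we restrict ourselves to poly-learnable problems. Concretely, I would prove the two set inclusions in each equality separately and then combine.

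For the inclusion $\PAC \subseteq \fbGD$, I would start with a learning problem $(\calP_n)_{n \in \bbN}$ that is poly-learnable by $\PAC$: for every polynomial $\eps(n)$ there exists a polynomial $m(n)$ and a $\PAC(m(n),r(n))$ method (with $r$ polynomial) guaranteeing error at most $\eps(n)$ on every $\calD_n \in \calP_n$. The key choice is to set $\rho(n) := 1/(16\, m(n))$, which is an inverse polynomial, and hence a legitimate choice for a poly-learnable target. Since $\rho(n) < 1/(8\,m(n))$, \cref{thm:PAC-to-fbGD} applies, producing an $\fbGD$ method with $T' = O(m(n)\,n)$ iterations, precision $\rho(n)$, sample size exactly $m(n)$, and $p',r'$ both polynomial in $n$. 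Crucially, the additive slack $\delta$ in \cref{thm:PAC-to-fbGD} is $0$, so the simulating $\fbGD$ method achieves error $\le \eps(n)$ as well, placing the problem in $\fbGD[m,\rho]$ and in particular in $\fbGD$. The reverse inclusion $\fbGD \subseteq \PAC$ follows immediately from \cref{thm:fbGD-to-PAC}: a poly-learnable problem witnessed by $\fbGD(T,\rho,m,p,r)$ with all parameters polynomial is simulated by $\PAC(m,r)$ with the same (polynomial) $m$ and $r$, and zero additive slack.

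For the neural-network/TM versions, the argument is essentially identical but uses the ``furthermore'' statements of the same two theorems. Starting from a $\PACTM(m(n),r(n),\TIME(n))$ method with all three parameters polynomial, I again pick $\rho(n) = 1/(16\,m(n))$, apply \cref{thm:PAC-to-fbGD} to get an $\fbGDNNsigma$ method with $p' = \poly(n,m,r,\TIME,\rho^{-1})$, which is still polynomial in $n$, giving the $\PACTM \subseteq \fbGDNNsigma$ inclusion. Conversely, the TM part of \cref{thm:fbGD-to-PAC} shows that any poly-sized $\fbGDNNsigma$ method simulates as a $\PACTM$ method with runtime $\poly(T,m,p,r,n)$, which remains polynomial.

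There is essentially no obstacle here beyond bookkeeping; the corollary is a clean packaging of \cref{thm:PAC-to-fbGD,thm:fbGD-to-PAC}. The one point requiring the mildest care is the choice $\rho(n) = \Theta(1/m(n))$: it is simultaneously polynomial in $n$ (so admissible in the poly-learnable classes $\fbGD$ and $\fbGDNNsigma$) and satisfies the strict bound $\rho < 1/(8m)$ required by \cref{thm:PAC-to-fbGD}. Because this produces logarithmically many bits of precision in $m$, and $m$ was already polynomial, every downstream parameter of the simulating $\fbGD$/$\fbGDNNsigma$ method remains polynomial in $n$, so the witness stays within the poly-learnable class and the equalities $\fbGD=\PAC$ and $\fbGDNN=\PACTM$ follow.
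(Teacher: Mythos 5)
Your proposal is correct and matches the paper's (implicit) justification: the corollary is stated without a separate proof precisely because it follows by the bookkeeping you carry out — choosing $\rho(n)=\Theta(1/m(n))$ so that $\rho<1/(8m)$ while remaining inverse polynomial, then invoking \cref{thm:PAC-to-fbGD} for one inclusion (with zero additive slack) and \cref{thm:fbGD-to-PAC} for the other, and noting all resulting parameters stay polynomial in $n$. The TM/neural-net version is handled identically via the ``furthermore'' clauses, as you note.
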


But a significant difference versus $\bSGD$ is that with $\fbGD$ the precision depends (even if only polynomially) on the total number of samples used by the method.  This is in contrast to $\bSGD$, where the precision only has to be related to the mini-batch size used, and with constant precision (and constant mini-batch size), we could simulate any sample based based method, regardless of the number of samples used by the method (only the number $T$ of SGD iterations and the size $p$ of the model increase with the number of samples used).  Viewed differently, consider what can be done with some fixed precision $\rho$ (that is not allowed to depend on the problem size $n$ or sample size $m$): methods that use up to $1/(8\rho)$ samples can be simulated even with $\fbGD$.  But $\bSGD$ allows us to simulate methods that use even more samples, by keeping the mini-batch size below $1/(8\rho)$.

It should also be noted that the limit of what can be done with $\fbGD$, and when it cannot go beyond $\SQ$, is not as clear and tight as for $\bSGD$.  \cref{thm:fbGD-to-SQ} tells us that once $m=\tilde{\Omega}(T p / \rho^2)$, we cannot go beyond $\SQ$.  But this bound on the sample size depends {\em polynomially} on the $\fbGD$ model size $p$ and number of iterations $T$ (as opposed to the logarithmic dependence in \cref{thm:bSGD-to-SQ}).  Even if the precision $\rho$ is bounded, it is conceivably possible to go beyond $\SQ$ and simulate any sample based method by using a polynomilally larger model size $p$ and/or number of iterations iterations $T$ (and in any case $T$ and $p$ need to increase polynomially with $m$ when using the simulations of \cref{lem:bSQA-to-bSGD,lem:bSQTMA-to-bSGDNN}, even if using $\bSGD$).  It thus remains open whether it is possible to simulate any sample based method with $\fbGD$ using constant precision $\rho$ and where the model size $p$ and number of iterations $T$ are polynomial in the sample size $m$ and dimension $n$.

\paragraph{Learning with mini-batch stochastic gradients over a fixed training set.} Perhaps the most realistic differentiable learning approach is to use a fixed training set $S$, and then at each iteration calculate a gradient estimate based on a mini-batch $S_t\subset S$ chosen at random, with replacement, from within the training set $S$ (as opposed to using fresh samples from the population distribution, as in $\bSGD$).  Analogs of \cref{thm:PAC-to-bSGD,thm:SQ-to-bSGD} and \cref{thm:fbGD-to-SQ} should hold also for this hybrid class, but we do not provide details here.

\section{Summary and Discussion}\label{sec:discussion}

We provided an almost tight characterization of the learning power of mini-batch SGD, relating it to the well-studied learning paradigms $\PAC$ and $\SQ$, and thus (nearly) settling the question of ``what can be learned using mini-batch SGD?''.  That single-sample SGD is able to simulate $\PAC$ learning was previously known, but we extended this result considerably, studied its limit, and showed that even outside this limit, $\bSGD$ can still always simulate $\SQ$. A gap still remains, when the mini-batch size is between $1/\rho$ and $\log(n)/\rho^2$, where we do not know where $\bSGD$ sits between $\SQ$ and $\PAC$.  We furthermore showed that with sufficient (polynomial) precision, even full Gradient Descent on an empirical loss can simulate $\PAC$ learning. 

\removed{Our results were stated for mini-batch SGD, where an independent mini-batch of samples from the population is used at each step.  But it is important to note that very similar results can also be obtained when we draw a single sample once, and then use batch gradient descent to minimize the empirical error on the sample.  The simulation of $\bSQA$ with $\bSGD$ is still valid, except now we must consider a variant of $\bSQA$ where all queries are answered using the same sample.  If the number of training examples is large enough, this can still be used to simulate $\SQ$.  And the sample extraction is even easier, since we now make repeated queries to the same sample.  We can thus extract samples drawn uniformly with replacement from the training set, which should be enough for simulating $\PAC$.  Similar ideas can also be used to analyze what is perhaps the most realistic variant: mini-batch SGD where mini-batches are drawn with replacements (or in epochs) from a training-set.  Our results should therefore not be interpreted as indicating a separation between mini-batch SGD and batch GD on the empirical error.}

It is tempting to view our results, which show the theoretical power of differentiable learning, as explaining the success of this paradigm.  But we do not think that modern deep learning behaves similar to the constructions in our work. While we show how any $\SQ$ or $\PAC$ algorithm can be simulated, this requires a very carefully constructed network, with an extremely particular initialization, which doesn't look anything like deep learning in current practice. Our result certainly does {\em not} imply that SGD on a {\em particular} neural net can learn anything learnable by $\PAC$ or $\SQ$, as this would imply that such network can learn any computationally tractable function\footnote{Observe that for any tractable function $f$, there exists a trivial learning algorithm that returns $f$ regardless of its input, which means that the class $\{f\}$ is $\PAC$ learnable.}, which is known to be impossible (subject to mild cryptographic assumptions).

Rather, we view our work as guiding us as to what questions we should ask toward understanding how {\em actual} deep learning works.  We see that understanding differentiable learning in such a broad generality as we did here is probably too strong, as it results in answers involving unrealistic initialization, and no restriction, and thus no insight, as to what makes learning problems learnable using deep learning.  Can we define a class of neural networks, or initializations, which is broad enough to capture the power of deep learning, yet disallows such crazy initialization and does provide insight as to when deep learning is appropriate?  Perhaps even mild restrictions on the initialization can already severely restrict the power of differentiable learning.  E.g., \citet{malach21quantifying} recently showed that even just requiring that the output of the network at initialization is close to zero can significantly change the power of differentiable learning, \citet{abbe2021staircase} showed that imposing certain additional regularity assumptions on the architecture/initialization of neural networks restricts the learning power of (S)GD to function classes with a certain hierarchical property. An interesting direction for future work is understanding the power of differentiable learning under these, or other, restrictions.  Does this lead to a different class of learnable problems, distinct from $\SQ$ and $\PAC$, which is perhaps more related to deep learning in practice?

\subsection*{Acknowledgements}

This work was done as part of the NSF-Simons Sponsored {\em Collaboration on the Theoretical Foundations of Deep Learning}.  Part of this work was done while PK was at TTIC, and while NS was visiting EPFL.  PK and NS were supported by NSF BIGDATA award 1546500 and NSF CCF/IIS award 1764032.

\newpage

\bibliographystyle{abbrvnat}
\def\bibfont{\small}
\bibliography{main.bbl}

\newpage

\appendix


\section{\boldmath Reductions between \texorpdfstring{$\bSQ$}{bSQ}, \texorpdfstring{$\fbSQ$}{fbSQ} and \texorpdfstring{$\SQ$}{SQ}, \texorpdfstring{$\PAC$}{PAC}}\label{sec:bSQ-SQ-PAC-reductions}

In this section, we prove \cref{thm:PAC-to-bSQ,thm:bSQ-to-PAC,thm:bSQ-to-SQ,thm:SQ-to-bSQ}. Additionally, we state and prove analogous statements relating $\fbSQ$ to $\PAC$ and $\SQ$.


\subsection{\boldmath \texorpdfstring{$\bSQ$}{bSQ} versus \texorpdfstring{$\PAC$}{PAC}}\label{subsec:bSQ-vs-PAC-proofs}


\PACtobSQ*

\begin{proof}
For all $b$, $\tau$ satisfying $b\tau < 1/2$, we first design a $\bSQ(k=10(n+1),\tau,b,p=n+1,r)$ algorithm {\sc Sample-Extract} (\cref{alg:extraction}) that generates a single sample $(x,y) \sim \calD$; technically, this algorithm runs in at most $10(n+1)$ {\em expected} number of steps, but as we will see this is sufficient to complete the proof.
For ease of notation, we let $\calZ := \calX \times \calY$ and we identify $\calZ$ with $\set{0,1}^{n+1}$ and denote $z \in \calZ$ as $(z_1, \ldots, z_{n+1}) = (y, x_1, \ldots, x_n)$. \hyperref[alg:extraction]{\sc Sample-Extract} operates by sampling the bits of $z$ one by one, drawing $\what{z}_i$ from the conditional distribution $\set{z_i \mid z_{1, \ldots, i-1}}_{\D}$. We show two things: (i) Once the algorithm is at a prefix $s$, the algorithm indeed returns a sample $\what{z}$ drawn from the conditional distribution $\set{z \mid z_{1, \ldots, k} = s}_{\D}$, and (ii) The algorithm returns a sample in $T \le 10(n+1)$ expected number of steps.

We show (i) by a reverse induction on the length of $s$. The base case of $|s|=n+1$ is trivial. For any $\ell$, observe that conditioned on $v_\ell = 1/b$, the sample returned is indeed sampled according to $\set{z \mid z_{1,\ldots,\ell}=s}_{\D}$. Now, consider conditioning on $v_\ell = t/b$ for some $t \ge 2$. It is easy to see that $v_{\ell+1}/v_\ell$ is distributed according to the (normalized) Binomial distribution $\mathrm{Bin}(t,p)/t$ where $p = \Pr_{\D}[z_{\ell+1} = 1 \mid z_{1,\ldots,\ell}=s]$ and hence $s$ is appended with $1$ with probability $\Ex[v_{\ell+1}/v_\ell \mid v_\ell = t/b] = p$, or appended with $0$ otherwise. The inductive hypothesis for $|s|=\ell+1$ completes the argument.

We show (ii) by proving a couple of more general claims. First of all, we assert that if the algorithm currently has a prefix $s$ that occurs in a random sample from $\D$ with probability $p_s$ and $b p_s\le 1/5$ then the expected number remaining steps before the algorithm terminates is at most $5/b p_s$. We prove this by reverse induction on the length of $s$. The base case of $|s|=n+1$ is trivial. Now, let $b_s$ be the number of samples in the next batch that start with $s$. We have $\Ex[b_s]=bp_s$ and 
\begin{align*}
\Ex[\max(0,b_s-1)] 
~\le~ \Ex[b_s(b_s-1)]
~=~ b(b-1)p_s^2 
~\le~ bp_s/5
\end{align*}
And hence, 
\begin{align*}
\mathbb{P}[b_s=1] 
~=~\Ex[b_s]-\Ex[\max(0,b_s-1)] 
~\ge~ (4/5)bp_s
\end{align*}
Thus, the expected number of steps before we get at least one sample starting with $s$ is at most $2/bp_s$. Also, $\Pr[b_s > 0] = 1 - (1-p_s)^b \le bp_s$. So, for $p_{0} := p_{s \circ 0}$ and $p_1 := p_{s \circ 1}$ (that is, the probabilities that a sample from $\D$ starts with $s\circ 0$ and $s\circ 1$ respectively), the expected number of steps remaining in the algorithm is at most
\begin{align*}
\frac{2}{bp_s}+\frac{\mathbb{P}[b_s>1]}{\mathbb{P}[b_s>0]}\inparen{\frac{p_0}{p_s}\cdot \frac{5}{bp_0}+\frac{p_1}{p_s}\cdot \frac{5}{bp_1}}
~=~ \frac{2}{bp_s}+\frac{\mathbb{P}[b_s>1]}{\mathbb{P}[b_s>0]}\cdot \frac{10}{bp_s}
~\le~ \frac{2}{bp_s}+\frac{1}{4}\cdot \frac{10}{bp_s}
~\le~ \frac{5}{bp_s}
\end{align*}
as claimed. Next, we show that if the algorithm currently has a prefix $s$ and $bp_s>1/5$, then the expected number of steps remaining is at most $10(n+1-|s|)$. We again prove this with a reverse induction on $|s|$. The base case of $|s|=n+1$ is trivial. With $bp_s>1/5$, the probability that a batch of $b$ samples has at least one sample starting with prefix $s$ is at least $1/8$. So, the expected remaining number of steps before the algorithm terminates is at most 
\begin{align*}
&8+\left(\frac{p_0}{p_s}\cdot \inparen{10(n-|s|)+\frac{5}{bp_0}} + \frac{p_1}{p_s}\cdot \inparen{10(n-|s|)+\frac{5}{bp_1}}\right)\\
&=8+(10(n-|s|)+\frac{5}{bp_s}+\frac{5}{bp_s}\\
&\le 10(n+1-|s|)
\end{align*}
as desired. That completes the induction argument. The algorithm starts with $s=\epsilon$, which every sample will start with, so the expected number of steps before the algorithm terminates is at most $10(n+1)$.

Finally, given a $\PAC(m,r)$ method $\A$, we design a $\bSQ(k,\tau',b',p=n+1,r')$ method $\A'$ that runs \hyperref[alg:extraction]{\sc Sample-Extract} for $k = 10m(n+1)/\delta$ rounds, restarting the algorithm after every sample returned. If the total number of samples extracted is less than $m$, then $\A'$ outputs the zero predictor. Else it returns the output of $\A$ on the first $m$ samples extracted. Since the expected number of rounds needed to extract $m$ samples is at most $10m(n+1)$, we have by Markov's inequality that the probability of extracting less than $m$ samples in $10m(n+1)/\delta$ rounds is at most $\delta$. Thus, we get that for any $\calD$, $\err(\A',\D) \le \err(\A;\D) + \delta$ (where $\calA'$ return the null (zero) predictor if less than $m$ samples were extracted, in which case, the loss is $1$). The number of random bits needed per round is at most $\log_2 b$ and thus, the total number of random bits needed is $r' = r + k \log_2 b$.
\end{proof}

\bSQtoPAC*

\begin{proof}
This is immediate, since a $\PAC(m=kb,r)$ method can generate valid $\bSQ$ responses using $b$ samples for each of the $k$ rounds by simply computing the empirical averages per batch. The number of random bits used remains unchanged.
\end{proof}

\noindent Finally we show that with a slight modification, \hyperref[alg:extraction]{\sc Sample-Extract} can be implemented as a $\bSQA$ algorithm, thereby proving \cref{lem:PAC-to-bSQA}, restated below for convenience.

\PACtobSQA*
\begin{proof}
Since $z_1 = y$, all queries of \hyperref[alg:extraction]{\sc Sample-Extract} are already of the form $\mathds{1}\set{y = \overline{y}} \wedge \Phi_{\calX}(x)$ for $\overline{y} \in \bit$. It can also be implemented as a $\bSQA$ algorithm as follows: After the first query we fix $y = s_1 \in \bit$. If $s_1 = 0$, we use only even rounds to perform the queries as done by \hyperref[alg:extraction]{\sc Sample-Extract} and when $s_1 = 1$, we use only odd rounds. This increases the total number of rounds by a factor of $2$.
\end{proof}

\subsection{\boldmath \texorpdfstring{$\fbSQ$}{fbSQ} versus \texorpdfstring{$\PAC$}{PAC}}\label{subsec:fbSQ-vs-PAC-proofs}

We show the analogs of \cref{thm:PAC-to-bSQ,thm:bSQ-to-PAC} for $\fbSQ$.

\begin{lemA}
\begin{lemma}{\em ($\PAC$ to $\fbSQ$)}\label{lem:PAC-to-fbSQ}
\textbf{\boldmath For all $m$, and $\tau < 1/(2m)$} and for all $r$, it holds that
$$\fbSQ(k=m(n+1),\tau,{\bm m'= m},p=1,r'=r) ~\preceq_0~ \PAC(m,r)\,.$$
Furthermore, for every runtime $\TIME$, it holds for $\TIME' = \poly(n, m,r,\TIME,1/\delta)$ and same $k, p, r'$ as above that
$$\fbSQTM(k,\tau,m'=m,p,r',\TIME') ~\preceq_0~ \PACTM(m,r,\TIME)\,.$$
\end{lemma}
\end{lemA}
\begin{proof}
This proof is similar to, but significantly simpler than, the proof of \cref{thm:PAC-to-bSQ}. Using same notations as in proof of \cref{thm:PAC-to-bSQ}, for any prefix $s \in \bit^\ell$, let $S_s = \set{z \in S \mid z_{1,\ldots,\ell} = s}$. We show using reverse induction on $\ell$, that for any given prefix $s$ of length $\ell$ and knowledge of $|S_s| > 0$, we can deterministically recover all samples matching the prefix $s$ using $|S_s| \cdot (n+1-\ell)$ many $\fbSQ$s.

The base case of $\ell = n+1$ is trivial, since we already know $|S_s|$. For any $s$, we issue the $\fbSQ$ $\Phi : \calZ \to \bit$ given as $\Phi(z) = \mathds{1}\set{z_{1,\ldots,\ell} = s \text{ and } z_{\ell+1} = 1}$. Since $\tau < 1/2m$, rounding any value $v$ such that $|v - \Ex_S \Phi(z)| \le \tau$ to the nearest integral multiple of $1/m$ gives us $|S_{s \circ 1}|/m$. Thus, with this one query we recover both $|S_{s \circ 1}|$ and $|S_{s \circ 0}| = |S_s| - |S_{s \circ 1}|$. By the inductive assumption, we can recover all samples in $S_{s \circ 0}$ using $|S_{s \circ 0}| (n-\ell)$ additional queries and similarly, all samples in $S_{s \circ 1}$ using $|S_{s \circ 1}| (n-\ell)$ queries. Thus, we recover all samples in $S_s$ using $1 + |S_s| (n-\ell) \le |S_s| (n+1-\ell)$ queries (since $1 \le |S_s|$).

Starting with the prefix $s = \epsilon$ (empty string) and knowledge of $|S_{\epsilon}| = m$, we can recover all samples using at most $m(n+1)$ $\fbSQ$s, after which we can simply simulate the $\PAC(m,r)$ method.
Note that unlike the reduction to $\bSQ$, here the algorithm always succeeds in extracting $m$ samples in $m(n+1)$ steps. Hence there is no loss in the error ensured.
\end{proof}

\begin{lemB}
\begin{lemma}{\em ($\fbSQ$ to $\PAC$)}\label{lem:fbSQ-to-PAC}
\textbf{\boldmath For all $m$, $\tau$} and $k, p, r$, it holds that
$$\PAC(m,r) ~\preceq_0~ \fbSQ(k,\tau,m,p,r)\,.$$
Furthermore, for every runtime $\TIME$ it holds for $\TIME' = \poly(n,m,\TIME)$ that 
$$\PACTM(m,r,\TIME') ~\preceq_0~ \fbSQTM(k,\tau,m,p,r,\TIME)\,.$$
\end{lemma}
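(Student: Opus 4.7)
The plan is to adapt the proof of \cref{thm:bSQ-to-PAC} to the fixed-batch setting, where the construction in fact becomes simpler: an $\fbSQ$ method already reuses a single batch $S\sim\calD^m$ for every one of its queries, so a $\PAC(m,r)$ learner can faithfully answer every query using exactly the $m$ samples it receives, without any sample-extraction gymnastics.

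Given an $\fbSQ(k,\tau,m,p,r)$ method $\calA$, I would define $\calA' \in \PAC(m,r)$ as follows. On input $S = ((x_i,y_i))_{i=1}^m \sim \calD^m$ and random string $R \in \{0,1\}^r$, simulate $\calA$ with randomness $R$, and whenever $\calA$ issues its $t$-th query $\Phi_t : \calX \times \calY \to [-1,1]^p$, respond with the exact empirical average
\[
v_t \;:=\; \frac{1}{m}\sum_{i=1}^m \Phi_t(x_i,y_i),
\]
then return the final predictor produced by $\calA$. Each $v_t$ satisfies the validity condition \eqref{eq:bSQ-valid} with slack $0 \le \tau$, and the joint distribution of $(S,R)$ used inside $\calA'$ is identical to the distribution under which $\calA$'s error is defined. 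Since the exact empirical averages constitute one particular valid response sequence, for every $(S,R)$ the predictor produced by $\calA'$ satisfies $\calL_\calD(\calA'(S,R)) \le \sup \calL_\calD(f)$, where the $\sup$ ranges over all valid response sequences $v_1,\ldots,v_k$. Taking expectation over $(S,R)$ and invoking the definition of $\err$ then gives $\err(\calA',\calD) \le \err(\calA,\calD)$ for every source distribution $\calD$, establishing $\PAC(m,r) \preceq_0 \fbSQ(k,\tau,m,p,r)$.

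For the computationally bounded version, each of the $p$ scalar components of $\Phi_t(x_i,y_i)$ can be computed in time at most $\TIME$ (the runtime bound of $\calA$, which also upper bounds the time to evaluate any query it produces on a single input). Evaluating $\Phi_t$ on all $m$ sample points and averaging therefore takes $\poly(n,m,p,\TIME)$ time per round, and summing over the $k$ rounds gives total runtime $\TIME' = \poly(n,m,\TIME)$ as claimed. No genuine obstacle arises in this proof: unlike the $\PAC$-to-$\fbSQ$ direction handled by \cref{lem:PAC-to-fbSQ}, no sample-extraction procedure is needed, there is no additive $\delta$ slack because the simulation responds with the exact empirical average, and the random-bit budget on both sides is the same $r$ bits supplied to $\calA$.
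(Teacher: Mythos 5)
Your proof is correct and takes essentially the same approach as the paper's (which simply notes that a $\PAC(m,r)$ method can answer all $k$ queries by computing exact empirical averages over its $m$-sample batch); you have merely spelled out the one-line argument in more detail, including the observation that exact empirical averages form a valid response sequence so the $\sup$ over valid responses dominates.
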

\end{lemB}
\begin{proof}
This is immediate, since a $\PAC(m,r)$ method can generate valid $\fbSQ$ responses using $m$ samples for each of the $k$ rounds by simply computing the empirical averages over the entire batch of samples. The number of random bits used remains unchanged.
\end{proof}

\noindent Finally, we show that with a slight modification, in the same regime of \cref{lem:PAC-to-fbSQ}, any $\PAC$ method can be simulated by a $\fbSQA$ method, analogous to \cref{lem:PAC-to-bSQA}.

\begin{lemma}{\em ($\PAC$ to $\fbSQA$)}\label{lem:PAC-to-fbSQA}
\textbf{\boldmath For all $m$, and $\tau < 1/(2m)$}, and for all $r$, it holds that
$$\fbSQA(k=2m(n+1),\tau,m'=m,p=1,r'=r) ~\preceq_0~ \PAC(m,r)\,.$$
Furthermore, for every runtime $\TIME$, it holds for $\TIME' = \poly(n, m,r,\TIME,1/\delta)$ and same $k, p, r'$ as above that
$$\fbSQTMA(k,\tau,m',p,r',\TIME') ~\preceq_0~ \PACTM(m,r,\TIME)\,.$$
\end{lemma}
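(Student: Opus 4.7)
The plan is to adapt the proof of \cref{lem:PAC-to-fbSQ} in exactly the same way that \cref{lem:PAC-to-bSQA} was derived from \cref{thm:PAC-to-bSQ}: start from the fixed-batch sample-extraction procedure, and reorganize its queries so that every one of them is an alternating ($\overline{y}$-)query, at the cost of at most doubling the total number of rounds.

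First I would recall the structure of the extraction procedure from \cref{lem:PAC-to-fbSQ}. It runs at most $n+1$ iterations per extracted sample, and in each iteration with current prefix $s = (s_1, \ldots, s_\ell)$ it issues a query of the form $\Phi_j(z) = \mathds{1}\{z_{1,\ldots,\ell}=s \text{ and } z_{\ell+j}=1\}$. Using $z_1 = y$, observe that whenever the prefix is \emph{non-empty}, i.e.\ $\ell \ge 1$, every such query is automatically an $s_1$-query: indeed $\Phi_j(z) = 0$ whenever $y \ne s_1$. So for any iteration with non-empty prefix the query already has the correct form, and we simply place it in a round whose parity matches $s_1$ (odd if $s_1=1$, even if $s_1=0$) while issuing a trivial constant-zero query (which is vacuously both a $0$-query and a $1$-query) in the other parity round. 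This doubles the number of rounds used for these iterations.

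The only queries that are not natively $\overline{y}$-queries are the ones with empty prefix $\ell = 0$, i.e.\ the very first iteration of each extraction, where $\Phi_j(z) = \mathds{1}\{z_j=1\}$ is not a $\overline{y}$-query for $j \ge 2$. I would handle this by splitting the first iteration across two consecutive rounds: in an odd round issue the $1$-query $\Phi^{(1)}_j(z) = \mathds{1}\{y=1\} \cdot \mathds{1}\{z_j=1\}$, and in an even round issue the $0$-query $\Phi^{(0)}_j(z) = \mathds{1}\{y=0\} \cdot \mathds{1}\{z_j=1\}$, then add the two responses to recover $\sum_{z \in S} \mathds{1}\{z_j=1\}$. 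Since $\tau < 1/(2m')$, each response can be rounded to the nearest multiple of $1/m'$ to yield the exact empirical count, exactly as in \cref{lem:PAC-to-fbSQ}, so the reconstruction is lossless.

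Putting this together, each of the (at most $n+1$) iterations used to extract a single sample is implemented using at most $2$ fbSQA rounds, so extracting all $m$ samples requires at most $k = 2m(n+1)$ alternating rounds. All other aspects—correctness of the sample-without-replacement extraction, exact reconstruction of empirical sums, sample count $m' \ge m$, randomness budget, and running time—are inherited verbatim from \cref{lem:PAC-to-fbSQ}. The furthermore statement for $\fbSQTMA$ follows because the reorganization adds only $O(1)$ bookkeeping per round. The main (and only) thing to verify carefully is the scheduling across different extractions whose first bits $s_1$ may disagree; this is handled trivially by the device above of always using two consecutive rounds per logical iteration, which rigidly enforces the alternation regardless of the history.
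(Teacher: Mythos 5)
Your proof follows the same plan as the paper's, whose own proof of this lemma is a one-line pointer to the modification that derives \cref{lem:PAC-to-bSQA} from \cref{thm:PAC-to-bSQ}: once $s_1$ is determined, place each {\sc Sample-Extract} query in a round whose parity matches $s_1$ and pad the off-parity rounds with dummy queries, doubling the round count to $k=2m(n+1)$. The one place you deviate, and where you are in fact more careful than the paper's sketch, is the empty-prefix first iteration. The proof of \cref{lem:PAC-to-bSQA} asserts that because $z_1 = y$ every {\sc Sample-Extract} query is already a $\overline y$-query, which is not literally true when $\ell = 0$: there $\Phi_j(z) = \mathds{1}\{z_j=1\}$ with $j\ge 2$ depends only on $x$, not on $y$. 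Your fix — splitting that iteration into a $1$-query round and a $0$-query round and adding the two (individually rounded) responses — is exactly right in the $\fbSQ$ setting, and it works precisely because the same batch $S$ underlies every round, so $\sum_{z\in S}\mathds{1}\{y=1,z_j=1\} + \sum_{z\in S}\mathds{1}\{y=0,z_j=1\} = \sum_{z\in S}\mathds{1}\{z_j=1\}$. Note this device would \emph{not} transfer to the $\bSQ$ analogue (\cref{lem:PAC-to-bSQA}), where the two rounds see independent mini-batches and the sum is no longer a single-batch count; there one instead observes that the $\ell=0$ iteration only actually needs $\Phi_1$ (since $w = b$ and $w_0 = b - w_1$), so a single $1$-query suffices before passing to the alternating phase. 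Either route gives $k=2m(n+1)$, matching the statement; the rest (sampling without replacement, exact rounding since $\tau<1/(2m')$, randomness and time budgets) carries over verbatim from \cref{lem:PAC-to-fbSQ} as you say.
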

\begin{proof}
By modifying proof of \cref{lem:PAC-to-fbSQ}, analogous to the modification to proof of \cref{thm:PAC-to-bSQ} to get \cref{lem:PAC-to-bSQA}.
\end{proof}

\subsection{\boldmath \texorpdfstring{$\bSQ$}{bSQ} versus \texorpdfstring{$\SQ$}{SQ}}\label{subsec:bSQ-vs-SQ-proofs}

\bSQtoSQ*
\begin{proof}
Fix a $\bSQ(k,\tau,b,p,r)$ method $\A$, and consider any $\bSQ$ query $\Phi : \calX \times \calY \to [-1,1]^p$. Using Chernoff-Hoeffding's bound and a union bound over $p$ entries, we have that
\begin{align}
    \Pr_{S \sim \D^b} \insquare{\norm{\Ex_{S} \Phi(x,y) - \Ex_{\D} \Phi(x,y)}_{\infty} > \eta} &~\le~ 2pe^{-\eta^2b/2} \label{eq:hoeffding}
\end{align}
Conditioned on $\norm{\Ex_{S} \Phi(x,y) - \Ex_{\D} \Phi(x,y)}_{\infty} \le \eta$, any response $v \in [-1,1]^p$ that satisfies $\norm{v - \Ex_{\D} \Phi(x,y)}_{\infty} \le \tau-\eta$, also satisfies that $\norm{v - \Ex_S \Phi(x,y)}_{\infty} \le \tau$.

Consider a $\SQ(k'=kp,\tau'=\tau-\eta,r)$ method $\A'$, that makes the same set of queries as $\A$ (making $p$ $\SQ$ queries sequentially for each $\bSQ$ query), pretending that the $\SQ$ responses received are in fact valid $\bSQ$ responses. By a union bound over the $m$ rounds, with probability at least $1 - 2kpe^{-\eta^2 b/2}$, all valid $\SQ$ responses to $\A'$ are also valid $\bSQ$ responses to $\A$ and hence the statistical distance between the output distribution of $\A'$ and $\A$ is at most $2kpe^{-\eta^2 b/2}$. Hence $\err(\A',\D) \le \err(\A;\D) + 4kpe^{-\eta^2 b/2}$; here we assume w.l.o.g. that the range of the predictor returned by the $\bSQ$ method is $[-1,1]$, in which case the maximum squared loss of a predictor is at most $2$. When $b\tau^2 \ge 8\log(4kp/\delta)$, setting $\eta = \tau/2$ completes the proof.
\end{proof}

\SQtobSQ*
\begin{proof}
For any $\SQ(k,\tau,r)$ method $\A$, consider a $\bSQ(k'=kq, \tau'=\tau/2, b, p=1, r)$ method $\A'$ that makes the same set of queries as $\A$, but repeating each query $q$ times, and then averaging the $q$ $\bSQ$ responses received and treating the average as a valid $\SQ$ response ($q$ to be specified shortly). Suppose $v_1, \ldots, v_q$ are $\bSQ$ responses, that is, $|v_i - \Ex_{S_i} \Phi(x,y)| \le \tau'$ for each $i$, then $v = \sum_{i=1}^q v_i / q$ and $S = \bigcup_i S_i$ satisfies $|v - \Ex_{S} \Phi(x,y)| \le \tau'$, by triangle inequality.

From \cref{eq:hoeffding} and a union bound over the $k$ queries, we have that with probability at least $1 - 2ke^{-\eta^2bq/2}$, any valid $\bSQ$ responses $v_1, \ldots, v_q$ for the $q$ queries corresponding to each $\SQ$ query, satisfies $|v - \Ex_{\D} \Phi(x,y)| \le \tau' + \eta$. Setting $\eta = \tau-\tau' = \tau/2$, we get that the statistical distance between the output distribution of $\A'$ and $\A$ is at most $2ke^{-\tau^2bq/8}$. Hence $\err(\A',\D) \le \err(\A;\D) + 4ke^{-\tau^2bq/8}$; again, we assume w.l.o.g. that the range of the predictor returned by the $\SQ$ method is $[-1,1]$, in which case the maximum squared loss of a predictor is at most $2$. Choosing $q = \ceil{\frac{8\log(4k/\delta)}{b\tau^2}}$ completes the proof.
\end{proof}

\noindent Finally we show that any $\SQ$ method can be simulated by a $\bSQA$ method thereby proving \cref{lem:SQ-to-bSQA}, restated below for convenience. The proof goes via an intermediate $\SQA$ method (defined analogous to $\bSQA$).

\SQtobSQA*

\begin{proof}
Consider any $\SQ(k,\tau,r)$ method $\A$. Let $\Phi_t : \calX \times \calY \to [-1,1]$ be the query issued by the method $\A$. Let $\A'$ be the following $\SQA(2k,\tau/2,r)$ method: In rounds $2t-1$ and $2t$, $\A'$ issues queries $\Phi_t^1, \Phi_t^0$ respectively, where $\Phi_t^1(x,y) := y\cdot\Phi(x,y)$ and $\Phi_t^0(x,y) := (1-y)\cdot\Phi(x,y)$. For any valid responses $v_1, v_0$ to the queries $\Phi_t^1, \Phi_t^0$, that is, $|v_1 - \Ex_{\calD} \Phi_t^1(x,y)| \le \tau/2$ and $|v_0 - \Ex_{\calD} \Phi_t^0(x,y)| \le \tau/2$, it holds by triangle inequality that $|v_0 + v_1 - \Ex_{\calD} (\Phi_t^0(x,y) + \Phi_t^1(x,y))| \le \tau$. Thus, $v := v_0 + v_1$ is a valid response for the $\SQ$ query $\Phi_t = \Phi_t^1 + \Phi_t^0$. Thus, we get
$$\SQA(2k,\tau/2,r) \preceq_0 \SQ(k,\tau,r)\,.$$
Finally, we use essentially the same argument as in \cref{thm:SQ-to-bSQ} to obtain a $\bSQA$ method from $\A'$. The only change needed to preserve the alternating nature of $\A'$ is that we perform all the $1$-queries in $q$ odd rounds, interleaved with $0$-queries in $q$ even rounds. This completes the proof.
\end{proof}

\subsection{\boldmath \texorpdfstring{$\fbSQ$}{fbSQ} versus \texorpdfstring{$\SQ$}{SQ}}\label{subsec:fbSQ-vs-SQ-proofs}

We show the analogs of \cref{thm:bSQ-to-SQ,thm:SQ-to-bSQ} for $\fbSQ$. The number of samples required depends linearly on the number of queries, instead of logarithmically. The reason for this is that unlike in the proof of \cref{thm:bSQ-to-SQ,thm:SQ-to-bSQ}, a naive union bound does not suffice, since the queries can be adaptive.

\addtocounter{lemma}{-1} 
\begin{lemC}
\begin{lemma}{\em ($\fbSQ$ to $\SQ$)}\label{lem:fbSQ-to-SQ}
There exist a constant $C \ge 0$ such that for all $\delta > 0$,
\textbf{for all} $k, {\bm \tau}, {\bm m}, p, r$ such that {\boldmath $m\tau^2 > C (kp\log (1/\tau) + \log(1/\delta))$}, it holds that
$$\SQ(k' = kp,\tau' = \frac\tau2, r'=r) ~\preceq_{\delta}~ \fbSQ(k,\tau,m,p,r)\,.$$
Furthermore, for any runtime $\TIME$ it holds for $\TIME' = \poly(\TIME)$ that
$$\SQTM(k' = kp,\tau' = \frac\tau2,r'=r,\TIME') ~\preceq_{\delta}~ \fbSQTM(k,\tau,m,p,r,\TIME)\,.$$
\end{lemma}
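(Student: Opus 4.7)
The plan is to adapt the argument of \cref{thm:bSQ-to-SQ}, but because in $\fbSQ$ the same sample $S$ is reused across all $k$ rounds, queries at later rounds can depend (through the responses) on values computed from $S$. A union bound over only the $k$ queries actually issued is not sound: once we condition on the event ``$S$ is concentrated for the queries actually issued,'' those queries have already been selected adaptively from $S$. The fix is to union bound over the set of \emph{all} queries that could ever be issued on \emph{any} valid response sequence, and to keep this set finite by discretizing the response passed to the simulated $\fbSQ$ method.

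Concretely, given an $\fbSQ(k,\tau,m,p,r)$ method $\A$, I will define the $\SQ(kp,\tau/2,r)$ method $\A'$ that uses the same random string $R$, internally simulates $\A$, and at each round $t$ converts its $p$ scalar $\SQ$ responses (each $\tau/2$-close to the population) into a vector $v'_t\in[-1,1]^p$, rounds each coordinate to the nearest multiple of $\tau/4$ to obtain $\tilde v_t$, and feeds $\tilde v_t$ to $\A$ as its $\fbSQ$ response; the queries of $\A'$ are exactly the coordinate queries of $\A$'s vector query, and $\A'$ returns $\A$'s final predictor. Let $\calG \subset [-1,1]^p$ denote the $\tau/4$-grid; then $|\calG|\le (8/\tau+1)^p$, so the set $\calQ$ of queries $\A$ could ever issue across all rounds $t \le k$ and all discretized histories $(\tilde v_1,\ldots,\tilde v_{t-1}) \in \calG^{t-1}$ satisfies $|\calQ| \le k(8/\tau+1)^{pk}$. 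Note that $\calQ$ depends only on $R$, not on $S$.

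By Hoeffding and a union bound over the $p$ coordinates, for any fixed $\Phi \in \calQ$,
\[
\Pr_{S\sim\D^m}\!\left[\norm{\Ex_S \Phi - \Ex_\D \Phi}_\infty > \tau/4\right] \le 2p\, e^{-m\tau^2/8}.
\]
A further union bound over $\calQ$ gives failure probability at most $2pk(8/\tau+1)^{pk} e^{-m\tau^2/8}$. Setting $m\tau^2 \ge C(kp\log(1/\tau)+\log(1/\delta))$ for a suitably large absolute constant $C$ makes this at most $\delta/2$. On the good event, for every round $t$ we have $\norm{\tilde v_t - \Ex_\D \Phi_t}_\infty \le \tau/2+\tau/4 = 3\tau/4$ (from $\SQ$ validity plus rounding) and $\norm{\Ex_S \Phi_t - \Ex_\D \Phi_t}_\infty \le \tau/4$ (from concentration on $\calQ \ni \Phi_t$), so by the triangle inequality $\norm{\tilde v_t - \Ex_S \Phi_t}_\infty \le \tau$; thus $\tilde v_t$ is a valid $\fbSQ$ response and the simulation is faithful. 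Since the squared loss of any $[-1,1]$-valued predictor is bounded, the failure event contributes at most $\delta$ to the expected error, yielding $\err(\A',\D)\le \err(\A,\D)+\delta$.

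For the $\SQTM$ statement, note that $\A'$'s overhead beyond simulating $\A$ is only the coordinate-wise rounding of $p$ numbers to precision $\tau/4$, so the runtime is $\poly(\TIME,k,p,\log(1/\tau))=\poly(\TIME)$. The main obstacle, as indicated by the authors, is precisely the adaptivity: the sample-size bound must absorb $\log|\calQ|=\Theta(kp\log(1/\tau))$ rather than the $\log(kp)$ one would get in the $\bSQ$ case with fresh samples per round, which is why the dependence on $k$ and $p$ here is linear rather than logarithmic.
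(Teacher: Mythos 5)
Your proof is correct and follows essentially the same strategy as the paper's: discretize the responses to a grid so that the (adaptive) query tree becomes finite, then union bound concentration over the at most $\exp(O(kp\log(1/\tau)))$ queries that can arise, which is exactly what forces the linear (rather than logarithmic) dependence on $kp$. The only differences from the paper's argument are cosmetic (the paper parametrizes the grid spacing and the concentration radius by $\alpha$ and $\eta$ separately before setting both to $\tau/4$, and counts transcripts rather than queries directly), so no substantive comparison is needed.
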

\end{lemC}

\begin{proof}
Fix a $\fbSQ(k,\tau,m,p,r)$ method $\A$. Conditioned on the choice of randomness sampled by $\A$, consider the set of all possible queries that $\A$ could make, assuming that all the responses to all the queries ever made are in $2\alpha \cdot \bbZ^p \cap [-1,1]^p$, that is, each entry of the response is an integral multiple of $2\alpha$ (choice of $\alpha$ to be made later). The number of distinct responses to any query then is at most $(1 + 1/\alpha )^p$. Even accounting for the adaptive nature of the method, conditioned on the choice of randomness, there are at most $(1/\alpha+1)^{kp}$ possible different transcripts of queries and answers, and hence a total of at most $(1/\alpha+1)^{kp}$ distinct queries that the method could have made.

Using Chernoff-Hoeffding's bound (\cref{eq:hoeffding}) and a union bound over all these $(1/\alpha+1)^{kp}$ possible queries we have that with probability at least $1- 2p (1 + 1/\alpha )^{kp} \cdot e^{-\eta^2 m/2}$ over sampling $S \sim \calD^m$, it holds for each such query $\Phi : \calX \times \calY \to [-1,1]^p$ that
\[
\norm{\Ex_{S} \Phi(x,y) - \Ex_{\D} \Phi(x,y)}_{\infty} ~<~ \eta
\]
Conditioned on $\norm{\Ex_{S} \Phi(x,y) - \Ex_{\D} \Phi(x,y)}_{\infty} \le \eta$, any response $v \in [-1,1]^p$ that satisfies $\norm{v - \Ex_{\D} \Phi(x,y)}_{\infty} \le \tau-\eta-\alpha$, also satisfies that $\norm{v - \Ex_S \Phi(x,y)}_{\infty} \le \tau-\alpha$. Furthermore, let $\tilde{v}$ be the rounding of $v$ to the nearest value in $2\alpha \cdot \bbZ^p \cap [-1,1]^p$. Then, it holds that $\norm{\tilde{v} - v}_{\infty} \le \alpha$ and hence $\norm{\tilde{v} - \Ex_S \Phi(x,y)_{\infty}} \le \tau$.

Consider a $\SQ(k'=kp,\tau'=\tau-\eta-\alpha,r)$ method $\A'$, that makes the same set of queries as $\A$ (making $p$ $\SQ$ queries sequentially for each $\fbSQ$ query), but rounds the $\SQ$ responses to the nearest value in $2\alpha \cdot \bbZ^p \cap [-1,1]^p$, and treats those as valid $\fbSQ$ responses. By the union bound argument presented above, it holds with probability at least $1- 2p (1 + 1/\alpha )^{kp} \cdot e^{-\eta^2 m/2}$ that the rounded $\SQ$ responses are also valid $\fbSQ$ responses. Hence, the statistical distance between the output distribution of $\A'$ and $\A$ is at most $2p (1 + 1/\alpha )^{kp} \cdot e^{-\eta^2 m/2}$. Hence $\err(\A',\D) \le \err(\A,\D) + 4p (1 + 1/\alpha )^{kp} \cdot e^{-\eta^2 m/2}$; here we assume w.l.o.g. that the range of the predictor returned by the $\fbSQ$ method is $[-1,1]$, in which case the maximum squared loss of a predictor is at most $2$. When $m\tau^2 \ge 32 (kp \log (4/\tau + 1) + \log(4p/\delta))$, setting $\alpha = \eta = \tau/4$ completes the proof.
\end{proof}

\begin{lemD}
\begin{lemma}{\em ($\SQ$ to $\fbSQ$)}\label{lem:SQ-to-fbSQ}
There exist a constant $C \ge 0$ such that for all $\delta > 0$, $k, \tau, r$, it holds for {\boldmath $\tau' = \frac\tau2$ and all $m$} such that {\boldmath $m\tau^2 > C (k \log (1/\tau) + \log(1/\delta))$} that
$$\fbSQ(k'=k,\tau',m,p'=1,r'=r) ~\preceq_{\delta}~ \SQ(k,\tau,r)\,.$$
Furthermore, for every runtime $\TIME$ it holds for $\TIME' = \poly(\TIME,k,\tau^{-1})$ that
$$\fbSQTM(k'=k,\tau',m,p'=1,r'=r, \TIME') ~\preceq_{\delta}~ \SQTM(k,\tau,r,\TIME)\,.$$
\end{lemma}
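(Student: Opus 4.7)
The plan is to have the $\fbSQ$ method $\A'$ simulate $\A$ on the fly: in round $t$, $\A'$ issues the same query $\Phi_t : \calX\times\calY \to [-1,1]$ that $\A$ would issue given the previously fed responses, receives a $\fbSQ$ answer $v_t$ satisfying $|v_t - \Ex_S \Phi_t| \le \tau' = \tau/2$, rounds it to the nearest grid point $\tilde v_t \in 2\alpha\bbZ \cap [-1,1]$, and supplies $\tilde v_t$ to $\A$. The output of $\A'$ is whatever $\A$ outputs on the trajectory $(\tilde v_1,\ldots,\tilde v_k)$. For this simulation to succeed, each $\tilde v_t$ must be a valid $\SQ$ answer, i.e., $|\tilde v_t - \Ex_\D \Phi_t|\le \tau$. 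Since rounding and $\fbSQ$-validity already give $|\tilde v_t - \Ex_S \Phi_t| \le \tau' + \alpha$, it suffices to guarantee $|\Ex_S \Phi_t - \Ex_\D \Phi_t| \le \tau - \tau' - \alpha$ for every $t$.

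The main obstacle, and the reason the sample complexity must depend linearly (rather than logarithmically) on $k$, is that $\Phi_t$ is chosen adaptively from $\tilde v_1,\ldots,\tilde v_{t-1}$, which themselves depend on the adversarial $\fbSQ$ answers and hence on $S$. A round-by-round Hoeffding bound does not apply, because the query is not independent of $S$. The remedy mirrors the discretization used in \cref{lem:fbSQ-to-SQ}: after the rounding step, the trajectory lives on a finite grid, so there are at most $(1/\alpha+1)^{t-1}$ possible sequences $(\tilde v_1,\ldots,\tilde v_{t-1})$ fed to $\A$, and hence at most $(1/\alpha+1)^k$ distinct queries $\A$ could ever produce (conditional on $R$). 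Chernoff--Hoeffding together with a union bound over this finite query set then yields
\[
\Pr_{S\sim\calD^m}\!\left[\max_{\Phi}\,|\Ex_S \Phi - \Ex_\D \Phi| > \eta\right] \le 2(1/\alpha+1)^k \, e^{-\eta^2 m / 2},
\]
where the $\max$ ranges over all reachable queries.

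Choosing $\alpha = \eta = \tau/4$ guarantees $\tau - \tau' - \alpha = \tau/4 \ge \eta$, so on the good event every $\tilde v_t$ is a valid $\SQ$ response to $\A$; therefore $\calL_\D(\A'(R,S,v)) \le \sup_{v^{\SQ}\text{ valid}} \calL_\D(\A(R,v^{\SQ}))$ for every adversarial $\fbSQ$ trajectory $v$. Bounding the squared loss by a constant on the complementary bad event gives
\[
\err(\A',\D) \;\le\; \err(\A,\D) + O\!\left((4/\tau+1)^k \, e^{-\tau^2 m/32}\right).
\]
Requiring $m\tau^2 \ge C(k\log(1/\tau) + \log(1/\delta))$ for a sufficiently large constant $C$ drives the additive error below $\delta$, proving the stated bound. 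For the $\fbSQTM$ statement, note that $\A'$ performs one $\A$-step and one $O(\log(1/\tau))$-bit rounding per round, so $\TIME' = \poly(\TIME, k, 1/\tau)$; the discretization and union bound are only in the analysis and incur no runtime cost.
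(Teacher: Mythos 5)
Your proposal is correct and follows essentially the same route as the paper's proof: round the $\fbSQ$ responses to a grid $2\alpha\bbZ\cap[-1,1]$, observe that conditioned on $R$ the adaptive query tree has at most $(1/\alpha+1)^k$ reachable queries, apply Chernoff--Hoeffding with a union bound over that set, and set $\alpha=\eta=\tau/4$ so that $\tau'+\alpha+\eta=\tau$. The only cosmetic difference is that the paper frames the conclusion as a bound on the statistical distance between the output distributions of $\A'$ and $\A$ before converting to an error bound, while you argue directly via a good-event/bad-event decomposition; these are the same argument.
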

\end{lemD}
\addtocounter{lemma}{1} 
\begin{proof}
Fix an $\SQ(k,\tau,r)$ method $\A$. Similar to the proof of Part 1, conditioned on the choice of randomness sampled by $\A$, consider the set of all possible queries that $\A$ could make, assuming that all the responses to all the queries every made in $2\alpha \cdot \bbZ^p \cap [-1,1]$. Similar to before, the number of such queries is at most $(1 + 1/\alpha )^{k}$.

Consider a $\fbSQ(k'=k, \tau'=\tau/2, m, p=1, r)$ method $\A'$ that makes the same set of queries as $\A$ (for $\tau'$ to be decided shortly), but rounds the $\fbSQ$ responses to the nearest value in $2\alpha \cdot \bbZ \cap [-1,1]$ and treats those as valid $\SQ$ responses. For any query $\Phi$ if $v$ is a valid $\fbSQ$ response, then we have $|v - \Ex_{S} \Phi(x,y)| \le \tau'$ and for its rounding $\tilde{v}$, it holds that $|\tilde{v} - \Ex_{S} \Phi(x,y)| \le \tau' + \alpha$. From a similar application of Chernoff-Hoeffding's bound and a union bound, we have that with probability at least $1 - 2(1 + 1/\alpha )^{k} \cdot e^{-\eta^2 m/2}$ that for all the queries $\Phi : \calX \times \calY \to [-1,1]$ considered above, and for any valid $\fbSQ$ response $v$, it holds that $|v - \Ex_{\D} \Phi(x,y)| \le \tau' + \alpha + \eta$. Hence, we get that the statistical distance between the output distribution of $\A'$ and $\A$ is at most $2 (1 + 1/\alpha )^{k} \cdot e^{-\eta^2 m/2}$ and hence $\err(\A',\D) \le \err(\A;\D) + 4 (1 + 1/\alpha )^{k} \cdot e^{-\eta^2 m/2}$; again, we assume w.l.o.g. that the range of the predictor returned by the $\SQ$ method is $[-1,1]$, in which case the maximum squared loss of a predictor is at most $2$. Finally, setting $\alpha = \frac\tau4$ and $\eta = \frac\tau4$, the result holds for $m \ge \ceil{\frac{32(k \log (4/\tau + 1) + \log(4/\delta))}{\tau^2}}$.
\end{proof}

\noindent Finally, we show that with a slight modification, in the same regime of \cref{lem:SQ-to-fbSQ}, any $\SQ$ method can be simulated by a $\fbSQA$ method, analogous to \cref{lem:SQ-to-bSQA}.

\begin{lemma}{\em ($\SQ$ to $\fbSQA$)}\label{lem:SQ-to-fbSQA}
There exist a constant $C \ge 0$ such that for all $\delta > 0$, $k, \tau, r$, it holds for {\boldmath $\tau' = \frac\tau4$ and all $m$} such that {\boldmath $m\tau^2 > C (k \log (1/\tau) + \log(1/\delta))$} that
$$\fbSQ(k'=2k,\tau',m,p'=1,r'=r) ~\preceq_{\delta}~ \SQ(k,\tau,r)\,.$$
Furthermore, for every runtime $\TIME$ it holds for $\TIME' = \poly(\TIME,k,\tau^{-1})$ that
$$\fbSQTM(k'=2k,\tau',m,p'=1,r'=r, \TIME') ~\preceq_{\delta}~ \SQTM(k,\tau,r,\TIME)\,.$$
\end{lemma}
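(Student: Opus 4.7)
The plan is to chain two reductions already in the paper. Given an $\SQ(k,\tau,r)$ method $\calA$ with queries $\Phi_1,\ldots,\Phi_k$, I would first convert $\calA$ into an $\SQA(2k,\tau/2,r)$ method $\calA'$ using exactly the alternation trick from the proof of \cref{lem:SQ-to-bSQA}: in round $2t-1$ issue the $1$-query $\Phi_t^1(x,y) := y\cdot\Phi_t(x,y)$ and in round $2t$ issue the $0$-query $\Phi_t^0(x,y) := (1-y)\cdot\Phi_t(x,y)$. For any valid responses $v_1,v_0$ with tolerance $\tau/2$, the triangle inequality gives $|v_1+v_0-\Ex_\D \Phi_t(x,y)| \le \tau$, so $v_1+v_0$ serves as a valid $\SQ$ response, and $\calA'$ can drive $\calA$.

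Next, I would apply \cref{lem:SQ-to-fbSQ} to $\calA'$ (rather than to $\calA$) with parameters $k \gets 2k$, $\tau \gets \tau/2$. This produces an $\fbSQ(2k,\tau/4,m,1,r)$ method $\calA''$ that issues exactly the same sequence of queries as $\calA'$ (rounding the $\fbSQ$ answers to the nearest multiple of $2\alpha$ with $\alpha=\tau/8$ and treating them as valid $\SQ$ responses). The key observation, which is the only thing that needs to be checked beyond a straight invocation of \cref{lem:SQ-to-fbSQ}, is that the $\SQ$-to-$\fbSQ$ construction preserves the alternating structure: since $\calA''$ makes exactly the queries of $\calA'$ in the same order, the queries in odd rounds remain $1$-queries and those in even rounds remain $0$-queries. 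Hence $\calA''$ is in fact an $\fbSQA$ method.

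For the sample complexity, \cref{lem:SQ-to-fbSQ} applied to $\calA'$ requires $m\cdot(\tau/2)^2 > C(2k\log(2/\tau) + \log(1/\delta))$, which is equivalent to $m\tau^2 > C'(k\log(1/\tau) + \log(1/\delta))$ for a suitable absolute constant $C'$ (using $\log(2/\tau) \le 2\log(1/\tau)$ for $\tau \le 1/2$). The total error is bounded additively by the $\delta$ coming from \cref{lem:SQ-to-fbSQ} (Step 1 is exact). For the computational version, Step 1 increases the runtime by only a constant factor per query (computing $y\cdot\Phi_t$ or $(1-y)\cdot\Phi_t$), and Step 2 adds $\poly(k,1/\tau)$ overhead from \cref{lem:SQ-to-fbSQ}, giving the claimed $\TIME' = \poly(\TIME,k,\tau^{-1})$.

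No step is really hard: the whole point is that neither reduction individually breaks alternation, so the composition goes through. The only place one has to be slightly careful is bookkeeping the tolerance degradation ($\tau \to \tau/2 \to \tau/4$) and the extra $\log$ factor from doubling the number of queries in the sample-complexity condition.
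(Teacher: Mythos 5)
Your proposal is correct and is exactly the approach the paper intends. The paper's own proof is a one-liner ("by modifying the proof of \cref{lem:SQ-to-fbSQ}, analogous to the modification ... to get \cref{lem:SQ-to-bSQA}"), and you have fleshed it out precisely: the $\SQ\to\SQA$ alternation trick composed with $\SQ\to\fbSQ$, together with the observation that the $\SQ\to\fbSQ$ construction issues the same queries in the same order and therefore preserves the alternating structure. Your parameter bookkeeping ($\tau\to\tau/2\to\tau/4$, $k\to 2k$, absorbing $\log(2/\tau)\le 2\log(1/\tau)$ into the constant) and runtime accounting also match.
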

\begin{proof}
By modifying proof of \cref{lem:SQ-to-fbSQ}, analogous to the modification to proof of \cref{thm:SQ-to-bSQ} to get \cref{lem:SQ-to-bSQA}.
\end{proof}

\section{\boldmath Simulating \texorpdfstring{$\bSQ$}{bSQ} with \texorpdfstring{$\bSGD$}{bSGD} : Proof of \texorpdfstring{\cref{lem:bSQA-to-bSGD}}{Lemma~\ref{lem:bSQA-to-bSGD}}}\label{sec:bSQ-to-bSGD-proofs}

In this section, we show how to simulate any $\bSQA$ method $\calA$ as $\bSGD$ on some differentiable model constructed according to $\calA$, and thus prove \cref{lem:bSQA-to-bSGD}:

\bSQAtobSGD*

\paragraph{\boldmath Simulating a single $\overline{y}$-query.} As a first step towards showing \cref{lem:bSQA-to-bSGD}, we show how a single $\overline{y}$-query can be simulated using a single step of $\bSGD$. We consider parameterized queries, that can depend on some of parameters of the differentiable model. Namely, $\Phi :[-1,1]^q \times \X \times \Y \to [-1,1]^p$ is a query that given some parameters $\theta \in [-1,1]^q$, an input $x \in \X$ and a label $y \in \Y$, returns some vector value $\Phi(\theta, x,y)$. We show the following:

\begin{lemma}
\label{lem:one_query}
Let $\Phi : [-1,1]^q \times \X \times \Y \to [-1,1]^p$ be some $\overline{y}$-query that is differentiable w.r.t. its parameters. Fix some batch size $b \in \mN$, precision $\rho > 0$ and some $\eps > 0$. Then, there exists a differentiable model $f_\bw$, with $\bw = (\widehat{\theta}; \theta; \kappa) \in \mR^q \times \mR^{p} \times \mR$, such that running $\bSGD$ for $T=1$ step, from an initialization $\bw^{(0)} = (\widehat{\theta^{(0)}}, \theta^{(0)}, \kappa^{(0)})$ satisfying $\theta^{(0)}, \kappa^{(0)} = 0$, yields parameters $\bw^{(1)} = (\widehat{\theta}^{(1)}; \theta^{(1)}; \kappa^{(1)}) \in \mR^q \times \mR^p \times \mR$ such that for $S \sim \D^b$:
\begin{enumerate}
    \item $\norm{\theta^{(1)} - \frac{1}{b}\sum_{(x,y) \in S} \Phi(\widehat{\theta}^{(0)}, x,y)}_\infty \le \eps + \rho$,
    \item $\kappa^{(1)} \ge \eps - \rho$.
\end{enumerate}
\end{lemma}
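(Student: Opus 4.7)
The strategy follows the proof sketch of \cref{lem:bSQA-to-bSGD}: design $f_\bw$ so that at $\bw^{(0)}$ its output essentially ``guesses'' the irrelevant label, making the residual $f_{\bw^{(0)}}(x)-y$ effectively zero on examples with $y\ne\overline{y}$ and of order one on examples with $y=\overline{y}$. Concretely, writing $\Phi_\calX(\widehat\theta,\cdot)\in[-1,1]^p$ for the ``$\calX$-part'' of $\Phi$ (so that $\Phi(\widehat\theta,x,y)=\mathds{1}\{y=\overline{y}\}\Phi_\calX(\widehat\theta,x)$), I would take
\[
f_\bw(x)~:=~(1-\overline{y}) + (-1)^{\overline{y}}\eps + (-1)^{\overline{y}+1}\bigl(\langle \Phi_\calX(\widehat\theta,x),\theta\rangle + \kappa\bigr),
\]
i.e.\ $f_\bw(x)=(1+\eps)-\langle \Phi_\calX(\widehat\theta,x),\theta\rangle-\kappa$ for $\overline{y}=0$ and $f_\bw(x)=-\eps+\langle \Phi_\calX(\widehat\theta,x),\theta\rangle+\kappa$ for $\overline{y}=1$. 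I would use stepsize $\gamma=1$.

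\textbf{Key computation.} At $\bw^{(0)}=(\widehat\theta^{(0)},0,0)$ the squared-loss gradient on a single sample is $(f-y)\nabla_\bw f$. Computed component-wise: (i) the $\widehat\theta$-gradient is identically zero because the only $\widehat\theta$-dependence of $f$ is through $\langle\Phi_\calX(\widehat\theta,x),\theta\rangle$, which is proportional to $\theta^{(0)}=0$; (ii) the mini-batch-averaged $\theta$-gradient equals $-\tfrac{1}{b}\sum_S\Phi(\widehat\theta^{(0)},x,y)+O(\eps)$, where the bias term has $\ell_\infty$-norm at most $\eps$ because $\Phi_\calX\in[-1,1]^p$ and the offset multiplies it on every sample; (iii) the per-sample $\kappa$-gradient has a sign consistent across all $y\in\{0,1\}$ (ensured by the $(-1)^{\overline{y}+1}$ multiplier) and its batch average has magnitude at least $\eps$, with target-labeled samples only making it larger. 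Applying \eqref{eq:grad-valid} --- clipping to $[-1,1]$ (which is inactive up to $\eps\le 1$) together with $\rho$-approximate rounding, which introduces at most $3\rho/4$ additional error per coordinate --- yields both conclusions: $\|\theta^{(1)}-\tfrac{1}{b}\sum_S\Phi(\widehat\theta^{(0)},x,y)\|_\infty\le\eps+3\rho/4\le\eps+\rho$ and $\kappa^{(1)}\ge\eps-3\rho/4\ge\eps-\rho$.

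\textbf{Main obstacle.} The nontrivial piece is the clock guarantee $\kappa^{(1)}\ge\eps-\rho$, which must hold for \emph{every} mini-batch --- including batches containing no $y=\overline{y}$ example. Without the $\eps$-offset (i.e.\ the minimal ``textbook'' construction of the sketch), the residual $f_{\bw^{(0)}}-y$ vanishes on all irrelevant-labeled samples, and so does the $\kappa$-gradient, leaving $\kappa$ unchanged on such batches. The $\eps$-offset shifts the residual by $\eps$ uniformly, producing a sign-consistent nonzero $\kappa$-gradient on \emph{every} sample, at the cost of a bias of size $O(\eps)$ in the $\theta$-gradient, which is exactly the $\eps$ slack appearing in conclusion~(1). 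The trade-off between clock progress and query accuracy, parametrized by $\eps$, is thus the essential content of the lemma, and in the composition argument for \cref{lem:bSQA-to-bSGD} one will pick $\eps$ small enough to be absorbed by the tolerance $\tau$.
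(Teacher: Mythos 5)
Your construction is exactly the paper's two-case model (\(f_{\bw}(x)=1+\eps-\langle\Phi_\calX,\theta\rangle-\kappa\) for a $0$-query, \(f_{\bw}(x)=-\eps+\langle\Phi_\calX,\theta\rangle+\kappa\) for a $1$-query), merely written as a single unified formula, and your gradient computations and error accounting match the paper's, so this is essentially the same proof. One small imprecision: the claim that clipping is ``inactive'' is not quite right, since on $y=\overline{y}$ samples the per-sample gradient entries can reach magnitude $1+\eps$; however the resulting clipping error per entry is still at most $\eps$ (since $\Phi_\calX\in[-1,1]^p$, we have $|[(1+\eps)\alpha]_1-\alpha|\le\eps$), so the stated bound $\eps+\rho$ survives a more careful check -- the paper glosses over this point as well.
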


\begin{proof}
We consider two cases:
\begin{enumerate}[leftmargin=*]
    \item Assume $\Phi$ is a $0$-query, namely $\Phi(x,y) = (1-y)\Phi(\widehat{\theta},x,0)$. Then, we define a differentiable model as follows:
    \[
    f_{\bw}(x) = 1 - \inangle{\Phi(\widehat{\theta},x,0), \theta} - \kappa + \eps
    \]
    Fix some batch $S$, and observe that:
\begin{align*}
\frac{\partial}{\partial \theta} \E_S \sqloss(f_\bw(x), y)
&= \E_S (f_\bw(x)-y)
\frac{\partial}{\partial \theta} f_\bw(x) 
= -\E_S(1+\eps-y)\Phi(\widehat{\theta},x,0) \\
&= -\E_S\Phi(\widehat{\theta},x,y) - \eps \E_S\Phi(\widehat{\theta},x,0)
\end{align*}
Then, after performing one step of $\bSGD$ we have:
\[
\norm{\theta^{(1)}-\E_S\Phi(\widehat{\theta},x,y)}_{\infty}
\le \eps + \rho
\]
Now, similarly we have:
\begin{align*}
\frac{\partial}{\partial \kappa} \E_S \sqloss(f_\bw(x), y)
&= \E_S (f_\bw(x)-y)
\frac{\partial}{\partial \kappa} f_\bw(x) 
= -\E_S(1+\eps-y) \le -\eps
\end{align*}
And therefore, after one step of $\bSGD$ we have $\kappa^{(1)} \ge \eps-\rho$.

\item Assume $\Phi$ is a $1$-query, namely $\Phi(x,y) = y\Phi(\widehat{\theta},x,0)$. Then, we define a differentiable model as follows:
    \[
    f_{\bw}(x) =  \inangle{\Phi(\widehat{\theta},x,1), \theta} + \kappa - \eps
    \]
    Fix some batch $S$, and observe that:
\begin{align*}
\frac{\partial}{\partial \theta} \E_S \sqloss(f_\bw(x), y)
&= \E_S (f_\bw(x)-y)
\frac{\partial}{\partial \theta} f_\bw(x) 
= \E_S(-\eps-y)\Phi(\widehat{\theta},x,1) \\
&= -\E_S\Phi(\widehat{\theta},x,y) - \eps \E_S\Phi(\widehat{\theta},x,0)
\end{align*}
Then, after performing one step of $\bSGD$ we have:
\[
\norm{\theta^{(1)}-\E_S\Phi(\widehat{\theta},x,y)}_{\infty}
\le \eps + \rho
\]
Now, similarly we have:
\begin{align*}
\frac{\partial}{\partial \kappa} \E_S \sqloss(f_\bw(x), y)
&= \E_S (f_\bw(x)-y)
\frac{\partial}{\partial \kappa} f_\bw(x) 
= \E_S(-\eps-y) \le -\eps
\end{align*}
And therefore, after one step of $\bSGD$ we have $\kappa^{(1)} \ge \eps-\rho$.\qedhere
\end{enumerate}
\end{proof}

\paragraph{\boldmath Simulating a $\bSQA$ method.} Our goal is to use \cref{lem:one_query} to simulate a $\bSQA$ method. Any $\bSQA$ method $\A$ is completely described by a sequence of (potentially adaptive) queries $\Phi_1, \dots, \Phi_T$, and a predictor $h$ which depends on the answer to previous queries, namely:
\begin{itemize}[leftmargin=5mm]
    \item $\Phi_t$ depends on $r$ random bits, and on the answers of the previous $t-1$ queries, namely:
    \[
    \Phi_t : \{0,1\}^r \times [-1,1]^{p \times (t-1)} \times \X \times \Y \to [-1,1]^{p}
    \]
    \item $\Phi_t$ is a $(t\ \mathrm{mod}\ 2)$-query.
    \item $h : \{0,1\}^r \times [-1,1]^{p \times (t-1)} \times \X \to \bbR$ is a predictor that depends on a sequence of random bits denoted $v_0 \in \bit^r$, and on the answers to all previous queries, denoted  $v_1, \dots, v_T \in [-1,1]^{p}$.
\end{itemize}
For $v \in \bbR^q$, let $\round{v}{\tau}$ denote the entry-wise rounding of $v$ to $\tau \mZ$, namely $\round{v}{\tau} := \arg \min_{v' \in (\tau \mZ)^q} \norm{v-v'}_\infty$. In order to prove \cref{lem:bSQA-to-bSGD}, we need the following technical lemma:
\begin{lemma}
\label{lem:smooth_function}
Let $\Phi : [-1,1]^q \to [-1,1]^p$ be some function, and let $\delta \in \mR$ be some accuracy. Then, there exists a smooth function $\tilde{\Phi} : [-1,1]^q \to [-1,1]^p$ such that $\tilde{\Phi}(x) = \Phi(\round{x}{\delta})$ for every $x$ such that $\norm{x - \round{x}{\delta}}_{\infty} \le \frac{\delta}{4}$.
\end{lemma}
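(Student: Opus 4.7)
The plan is to construct $\tilde{\Phi}$ explicitly via a smooth partition-of-(sub)unity supported at the grid points. The key observation is that the set $\{x : \|x - \round{x}{\delta}\|_\infty \le \delta/4\}$ is a disjoint union of closed cubes of side length $\delta/2$ centered at the grid points $g \in (\delta \mZ)^q$; between these cubes we have \emph{slack} regions of width $\delta/2$ in which $\tilde{\Phi}$ is free to do whatever it wants, as long as it stays smooth and bounded in $[-1,1]^p$.

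First, I would fix a smooth ($C^\infty$) scalar bump $\psi : \mR \to [0,1]$ with $\psi(t) = 1$ for $|t| \le \delta/4$ and $\psi(t) = 0$ for $|t| \ge 3\delta/8$ (standard mollifier construction). Then I would define, for each grid point $g \in (\delta \mZ)^q$, the product bump $\chi_g(x) := \prod_{i=1}^q \psi(x_i - g_i)$, which is smooth, takes values in $[0,1]$, equals $1$ on the cube $\{x : \|x-g\|_\infty \le \delta/4\}$, and vanishes outside $\{x : \|x-g\|_\infty < 3\delta/8\}$. Finally, I would extend $\Phi$ arbitrarily (say by the nearest-point projection onto $[-1,1]^q$) so that $\Phi(g)$ is defined for every $g \in (\delta \mZ)^q$, and set
\[
\tilde{\Phi}(x) \;:=\; \sum_{g \in (\delta \mZ)^q} \Phi(g)\,\chi_g(x).
\]
Local finiteness of the sum (only finitely many $\chi_g$ are nonzero at any compact region) makes $\tilde{\Phi}$ smooth.

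Next I would verify the two required properties. For \emph{correctness on near-grid points}, fix $x$ with $\|x - \round{x}{\delta}\|_\infty \le \delta/4$ and let $g^\star := \round{x}{\delta}$; then $\chi_{g^\star}(x) = 1$, and for any other $g \ne g^\star$ there is a coordinate $i_0$ with $|g_{i_0} - g^\star_{i_0}| \ge \delta$, giving $|x_{i_0} - g_{i_0}| \ge \delta - \delta/4 = 3\delta/4 > 3\delta/8$, so $\chi_g(x) = 0$. Hence $\tilde{\Phi}(x) = \Phi(g^\star) = \Phi(\round{x}{\delta})$ as required. For the \emph{range bound} $\tilde{\Phi}(x) \in [-1,1]^p$, the same disjointness argument shows that at any $x$, at most one $g$ has $\chi_g(x) \ne 0$ (if two grid points $g, g'$ both had $\chi_g(x),\chi_{g'}(x) > 0$, then in the coordinate $i_0$ where they differ by at least $\delta$, we'd need $|x_{i_0} - g_{i_0}| < 3\delta/8$ and $|x_{i_0} - g'_{i_0}| < 3\delta/8$, contradicting the triangle inequality). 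So $\tilde{\Phi}(x)$ equals $\chi_g(x)\,\Phi(g)$ for a single $g$ with $\chi_g(x) \in [0,1]$ and $\Phi(g) \in [-1,1]^p$, which lies in $[-1,1]^p$.

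There is no serious obstacle here; the only subtle point is choosing the support width of $\psi$ small enough that bumps at distinct grid points have pairwise disjoint support, which is what allows both the exact interpolation on the near-grid region and the uniform bound on $\|\tilde{\Phi}\|_\infty$ without needing a true partition of unity (whose weights would force blending between grid values and defeat the exact equality $\tilde{\Phi}(x) = \Phi(\round{x}{\delta})$). A minor bookkeeping point is the boundary of $[-1,1]^q$: grid points close to the boundary may fall outside $[-1,1]^q$, but extending $\Phi$ by the nearest-point rule (or even by $0$) makes every $\Phi(g)$ well-defined and in $[-1,1]^p$, so the argument goes through without modification.
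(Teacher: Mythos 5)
Your proposal is correct and is essentially the same construction as the paper's: both define $\tilde{\Phi}(x)=\sum_{g\in(\delta\mZ)^q}\Phi(g)\,\chi_g(x)$ where each $\chi_g$ is a smooth bump equal to $1$ on the closed cube of $\ell_\infty$-radius $\delta/4$ around $g$ and supported in a slightly larger cube small enough that distinct bumps have disjoint support, which immediately gives the required identity on near-grid points. The only cosmetic difference is that the paper invokes an abstract Urysohn-type fact (a smooth function that is $1$ on a compact $K$ and $0$ outside an open $U\supseteq K$, applied with cubes of radius $\delta/4$ and $\delta/3$), whereas you build the bump explicitly as a product of one-dimensional mollifiers with cutoff $3\delta/8$; both radii leave a positive gap below $\delta/2$, so both yield disjoint supports. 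You also explicitly verify the range constraint $\tilde{\Phi}\in[-1,1]^p$ via disjointness, which the paper leaves implicit but which follows by the same reasoning.
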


\noindent We use the following fact:
\begin{fact}
For every compact set $K$ and open set $U$ such that $K \subseteq U \subseteq [-1,1]^q$, there exists a smooth function $\Psi : [-1,1]^q \to [-1,1]$ such that $\Psi(x) = 1$ for every $x \in K$ and $\Psi(x) = 0$ for every $x \notin U$.
\end{fact}

\begin{proof}[Proof of \cref{lem:smooth_function}]
Now, for some $x \in [-1,1]^q$ we define $K_x := {\text{\Large $\times$}}_{i=1}^q [x_i-\delta/4,x_i+\delta/4]$,
and $U_x := \prodset_{i=1}^q (x_i-\delta/3,x_i+\delta/3) $ and note that $K_x$ is compact, $U_x$ is open and $K_x \subseteq U_x$. So, using the above fact, there exists a smooth $\Psi_x$ such that $\Psi_x(K_x) = 1$ and $\Psi_x(\mR^q \setminus U_x) = 0$.
Now, consider $\tilde{\Phi} : \mR^q \to \mR^p$
such that
\[
\tilde{\Phi}(x) := \sum_{x' \in (\delta \mZ)^q} \Psi_{x'}(x) \cdot \Phi(x').
\]
Whenever $\norm{x - \round{x}{\delta}}_{\infty} \le \delta/4$, it holds that $\Psi_{\round{x}{\delta}}(x) = 1$ and $\Psi_{x'}(x) = 0$ for all $x' \in (\delta \bbZ)^q \smallsetminus \set{\round{x}{\delta}}$, and hence $\tilde{\Phi}(x) = \Phi(\round{x}{\delta})s$, whenever $\norm{x - \round{x}{\delta}}_{\infty} \le \delta/4$. 
Finally, since $\Psi_x$ is smooth, $\tilde{\Phi}$ is also differentiable.
\end{proof}

\begin{proof}[\bf Proof of \cref{lem:bSQA-to-bSGD}]
Let $\A$ be a $\bSQA(m,\tau,b,p,r)$ method.
Let $\A_{R}(S_{1}, \dots, S_{T})$ denote the set of predictors returned by $\A$ after receiving valid responses for $\tau$-precision $\bSQ$s on mini-batches $S_1, \dots, S_T$, using the random bits $R$.
In order to prove \cref{lem:bSQA-to-bSGD}, it suffices to show that there exists a differentiable model $f_\bw$, such that, for every choice of $S_{1}, \dots, S_{T}$ of size $b$, and every sequence of bits $R \in \{0, 1\}^r$, there exists an initialization of the model such that $\bSGD$ using mini-batches $S_{1}, \dots, S_{T}$ with $\sqloss$ loss and learning-rate $\gamma = 1$, returns a function $f_{\bw^{(T)}}$ such that $f_{\bw^{(T)}} \in \A_R(S_{1}, \dots, S_{T})$.

Let $\Phi_1, \dots, \Phi_T$ be the queries made by $\A$, and $h$ be the returned predictor.
Using \cref{lem:smooth_function}, let $\tilde{\Phi}_{1}, \dots, \tilde{\Phi}_{T}$ be a sequence of queries, with $\tilde{\Phi}_{t} : [-1,1]^r \times  [-1,1]^{p \times t-1} \times \X \times \Y \to \mR^{p}$, such that:
\begin{itemize}[leftmargin=5mm]
    \item $\tilde{\Phi}_{t}(v_0, \dots, v_{t-1}, x,y)$ is smooth w.r.t. $v_0, \dots, v_{t-1}$.
    \item $\tilde{\Phi}_{t}(v_0, \dots, v_{t-1}, x,y) =
    \Phi(\round{v_0}{\tau/4}, \dots, \round{v_{t-1}}{\tau/4}, x,y)$ for $v_0, \dots v_{t-1}$ satisfying
    $\norm{v_i-\round{v_i}{\tau/4}}_{\infty} \le \frac{\tau}{32}$ for all $1 \le i \le t-1$.
\end{itemize}

Let $\tilde{h} :[-1,1]^r \times [-1,1]^{p \times (t-1)} \times \X \to \mR$ be a smooth function that agrees with $h$, defined similarly to $\tilde{\Phi}_t$.

Let $c : \mR^3 \to \mR$ be a differentiable function such that:
\[
c(\alpha_1, \alpha_2, \alpha_3) = \begin{cases}
\alpha_3 & \alpha_1 \ge \rho ~\mathrm{and}~ \alpha_2 \le \rho/2 \\
0 & \alpha_1,\alpha_2 \ge \rho \\
0 & \alpha_1,\alpha_2 \le \rho/2 \\
* & \text{otherwise}
\end{cases}
\]

\noindent Our differentiable model will use the following parameter:
\begin{itemize}[leftmargin=5mm]
    \item Parameters $\theta^{(0)} \in \mR^p$, initialized to $R$, and stores the ``random'' bits.
    \item $T$ sets of parameters, each of size $r$, denoted  $\theta^{(1)}, \dots, \theta^{(T)} \in \mR^{r}$. The parameter $\theta^{(i)}$ will ``record'' the output of the $i$-th query. We initialize $\theta^{(1)}, \dots, \theta^{(T)} = 0$.
    \item $T$ ``clock'' parameters $\kappa_1, \dots \kappa_{T}$, that indicate which query should be issued next. We initialize $\kappa_1, \dots, \kappa_{T} = 0$.
\end{itemize}
 We denote by $\theta^{(t, i)}, \kappa_t^{(i)}$ the value of the $t$-th set of parameters in the $i$-th iteration of SGD. The differentiable model is defined as follows:
\begin{align*}
F_{\theta^{(0)}, \dots, \theta^{(T)}, \kappa_1, \dots, \kappa_{T}}(x) 
&~=~ \sum_{t=1}^{T} c\inparen{\kappa_{t-1}, \kappa_{t}, f^{(t)}_{(\theta^{(0)} \dots \theta^{(t-1)}; \theta^{(t)}; \kappa_{t})}(x)} 
~+~ c\inparen{\kappa_{T}, 0, \tilde{h}\inparen{\theta^{(0)}, \dots, \theta^{(T)},x}}
\end{align*}

\noindent Where for every $t$, $f^{(t)}_{(\theta^{(1)} \dots \theta^{(t-1)}, \theta^{(t)}, \kappa_t)}$ is the differentiable model simulating $\tilde{\Phi}_{t}$, that is guaranteed by \cref{lem:one_query}. As a convention, we take $\kappa_0 = 1$ (this is not a trainable parameter of the model).

Now, denote $v_0 := \theta^{(0,0)}$, and for every $t > 0$ denote $v_t := \theta^{(t,t)}$. We have the following claim:

\noindent {\em Claim}: for every iteration $i$ of $\bSGD$,
\begin{enumerate}
\item \label{cond:t_gt_i} For every $t > i$ we have $\theta^{(t,i)} = 0$ and $\kappa^{(i)}_t = 0$.
\item \label{cond:t_eq_i} For $t = i$ we have:
    \[
    \norm{\theta^{(t,i)} - \frac{1}{b} \sum_{(x,y) \in S^{(i)}} \Phi_t\left(v_0, v_1, \dots, v_{t-1}, x,y\right)}_{\infty} \le 3 \rho
    \]
\item \label{cond:t_ls_i} For $t < i$ we have $\theta^{(t,i)} = \theta^{(t,t)}$.
\item \label{cond:t_leq_i} For $t \le i$ we have $\kappa_t^{(i)} \ge \rho$.
\end{enumerate}

\noindent {\em Proof}: By induction on $i$:
\begin{itemize}[leftmargin=5mm]
    \item For $i = 1$, notice that by the initialization, $\kappa_t^{(0)} = 0$ for every $t$. Fix some $t > 1 = i$, and note that $c(\kappa_{t-1}, \kappa_{t}, \alpha) = 0$, so the gradient w.r.t $\theta^{(t,i)}$, $\kappa_t^{(i)}$ is zero, and so condition \ref{cond:t_gt_i} hold (the initialization is zero, and the gradient is zero). For $t = 1 = i$, notice that since  $c(\kappa_0, \kappa_1^{(0)}, \alpha) = \alpha$, we have:
    \[
    F_{\theta^{(0,0)}, \dots, \theta^{(T,0)}, \kappa_1^{(0)}, \dots, \kappa_{T}^{(0)}}(x) = f^{(1)}_{(\theta^{(0,0)}; \theta^{(0,1)};  \kappa_{1}^{(0)})}(x)
    \]
    and by applying \cref{lem:one_query} with $\eps = 2 \rho$ we get:
    \[
    \norm{\theta^{(1,1)} - \frac{1}{b} \sum_{(x,y) \in S} \tilde{\Phi}_{1}(\theta^{(0,0)}, x,y)}_\infty \le \norm{\theta^{(1,0)}}_\infty + \eps + \rho
    \le 3\rho
    \]
    and so condition \ref{cond:t_eq_i} follows from the fact that $v_0 = \theta^{(0,0)} = \round{v_0}{\tau/4}$ and so $\tilde{\Phi}_{1}(v_0,x,y) = \Phi_1(\round{v_0}{\tau/4},x,y) = \Phi_1(v_0, x,y)$. Furthermore, again using \cref{lem:one_query} we have: 
    \[
    \kappa^{(1)}_1 \ge  \eps - \rho = \rho
    \]
    and so condition \ref{cond:t_leq_i} follows. Finally, condition \ref{cond:t_ls_i} is vacuously true.
    
    \item Fix some $i > 0$, and assume the claim holds for $i$. We will prove the claim for $i+1$. By the assumption, we have $
    \kappa_t^{(i)} = 0$ for every $t > i$ and $\kappa_t^{(i)} \ge \rho$ for every $t \le i$. Therefore, by definition of $c$, we have $c(\kappa_{t-1}^{(i)}, \kappa_t^{(i)}, \alpha) = \mathds{1}_{t = i+1} \alpha$. So,
    \[
    F_{\theta^{(0,i)}, \dots, \theta^{(T,i)}, \kappa_1^{(i)}, \dots, \kappa_{T}^{(i)}}(x) = f^{(i+1)}_{(\theta^{(0,i)}, \dots, \theta^{(i,i)}; \theta^{(i+1,i)};  \kappa_{i+1}^{(i)})}(x)
    \]
    Therefore, conditions \ref{cond:t_gt_i} and \ref{cond:t_ls_i} follow from the fact that the gradient with respect to $\theta^{(t,i)}$ and $
    \kappa_t^{(i)}$, for every $t \ne i+1$, is zero. Now, using \cref{lem:one_query} with $\eps = 2 \rho$, condition \ref{cond:t_eq_i} follows, and we also have $\kappa_{i+1}^{(i+1)} \ge \rho$.
    Finally, for every $t < i+1$, by the assumption we have $\kappa_t^{(i)} \ge \rho$, and since the gradient with respect to $\kappa_t^{(i)}$ is zero, we also have $\kappa_t^{(i+1)} \ge \rho$. Therefore, condition \ref{cond:t_leq_i} follows.
    
\end{itemize}

Finally, to prove the Theorem, observe that by the previous claim, for every $t$:
\begin{align*}
&\norm{v_t - \frac{1}{b} \sum_{(x,y) \in S^{(2t-1)}} \Phi_t(v_0, \dots, v_{t-1},x,y) }_\infty \le 3 \rho < \tau
\end{align*}
So, $v_t$ is a valid response for the $t$-th query.

By the previous claim, we have $\kappa_{t}^{(T)} \ge \rho$ for every $1 \le t \le T$. Therefore, we have:
\[
F_{\theta^{(0,2T)}, \dots, \theta^{(T,T)}, \kappa_1^{(T)}, \dots, \kappa_{T}^{(T)}}(x) = 
\tilde{h}(\theta^{(0,T)}, \dots, \theta^{(T,T)})
\]
and using the previous claim we have $\theta^{(t',T)}=v_t$ for every $t'$, and therefore, by definition of $\tilde{h}$ we have:
\[
F_{\theta^{(0,T)}, \dots, \theta^{(T,T)}, \kappa_1^{(T)}, \dots, \kappa_{T}^{(T)}}(x) = 
h(v_0, \dots, v_T)
\]
and, using the fact that $v_0, \dots, v_T$ are valid responses to the method's queries, we get the required.
\end{proof}

\subsection{From Arbitrary Differentiable Models to Neural Networks.}\label{subsec:makeitNN}
In this section we proved the key lemma for our main results, showing that alternating batch-SQ methods can be simulated by gradient descent over arbitrary differentiable models.  We would furthermore like to show that if the alternative batch-SQ method is computationally bounded, the differentiable model we defined can be implemented as a neural network of bounded size.

Indeed, observe that when the method can be implemented using a Turing-machine, each query (denoted by $\Phi$ in the proof) can be simulated by a Boolean circuit \cite[see][]{arora09computational}, and hence by a neural network with some fixed weights. Therefore, one can show with little extra effort that the differentiable model introduced in the proof of \cref{lem:bSQA-to-bSGD} can be written as a neural network, with some of the weights being fixed. To show that the same behavior is guaranteed even when all the weights are trained, it is enough to show that all the relevant weights (e.g., $\theta^{(0)}, \dots, \theta^{(T)}$) have zero gradient, unless they are correctly updated. This can be achieved using the ``clock'' mechanism (the function $c(\alpha_1, \alpha_2, \alpha_3)$ in the construction), which in turn can be implemented by a neural network that is robust to small perturbations of its weights, and hence does not suffer from unwanted updates of gradient descent.  One possible way to implement the clock mechanism using a neural network that is robust to small perturbations is to rely either on large weight magnitudes and small step-sizes, or on the clipping of large gradients.

We do not include these details,  Instead, in the next Section, we provide complete details and a rigorous proof of an alternate, more direct, construction of a neural network defining a differentiable model that simulates a given $\bSQTMA$ method.  This direct neural-network construction is based on the same ideas, but is different in implementation from the construction shown in this section, involving some technical details to ensure that the network is well behaved under the gradient descent updates.

\section{\boldmath Simulating \texorpdfstring{$\bSQTM$}{bSQTM} with \texorpdfstring{$\bSGDNNsigma$}{bSGDNN} : Proof of \texorpdfstring{\cref{lem:bSQTMA-to-bSGDNN}}{Lemma~\ref{lem:bSQTMA-to-bSGDNN}}}\label{sec:bSQTM-to-bSGDNN-proofs}

In this section, we show a direct construction of a neural network such that gradient descent on the neural net simulates a given $\bSQTMA$ method, thus proving \cref{lem:bSQTMA-to-bSGDNN}:

\bSQTMAtobSGDNN*

Given a $\bSQ$ algorithm with a specified bounded runtime, we will design a neural network such that the mini-batch gradients at each step correspond to responses to queries of the $\bSQ$ algorithm.
Our proof of this is based on the fact that any efficient $\bSQTMA$ algorithm must decide what query to make next and what to output based on some efficient computation performed on random bits and the results of previous queries. Any efficient algorithm can be performed by a neural net, and it is possible to encode any circuit as a neural net of comparable size in which every vertex is always at a flat part of the activation function. Doing that would ensure that none of the edge weights ever change, and thus that the net would continue computing the desired function indefinitely. So, that allows us to give our net a subgraph that performs arbitrary efficient computations on the net's inputs and on activations of other vertices. 

Also, we can rewrite any $\bSQTMA$ algorithm to only perform binary queries by taking all of the queries it was going to perform, and querying the $i$th bit of their binary representation for all sufficiently small $i$ instead. For each of the resulting binary queries, we will have a corresponding vertex with an edge going to it from the constant vertex and no other edges going to it. So, the computation subgraph of the net will be able to determine the current weights of the edges leading to the query vertices by checking their activations. Also, each query vertex will have paths from it to the output vertex with intermediate vertices that will either get inputs in the flat parts of their activation function or not depending on the output of some vertices in the computation subgraph. The net effect of this will be to allow the computation subgraph to either make the value encoded by the query edge stay the same or make it increase if the net's output differs from the sample output based on any efficiently computable function of the inputs and other query vertices' activations. This allows us to encode an arbitrary $\bSQF$ algorithm as a neural net.

\paragraph{The emulation net.}
In order to prove the capabilities of a neural net trained by batch stochastic gradient descent, we will start by proving that any algorithm in $\bSQ$ can be emulated by a neural net trained by batch stochastic gradient descent under appropriate parameters. In this section our net will use an activation function $\sigma$, as defined in \cref{fig:activation}, namely
\begin{center}
\begin{tikzpicture}
\node at (-7,0.25) {$\sigma(x)~=~\begin{cases}
-2&\text{ if } x<-3\\
x+1&\text{ if } -3\le x\le -1\\
0&\text{ if } -1<x<0\\
x&\text{ if } 0\le x\le 2\\
2&\text{ if } x> 2\\
\end{cases}$};
\def\scle{0.5}
\tikzstyle{myplot} = [scale=\scle, variable=\x, red, line width=1pt]
\draw[-latex,black!70] (-5*\scle, 0) -- (4.2*\scle, 0) node[right] {\footnotesize $x$};
\draw[-latex,black!70] (0, -2.5*\scle) -- (0, 2.8*\scle) node[above] {\footnotesize $\sigma(x)$};
\node[below left] at (0,0) {\tiny $0$};
\draw[dotted] (2*\scle,2*\scle) -- (0*\scle,2*\scle) node[left] {\tiny $2$};
\draw[dotted] (2*\scle,2*\scle) -- (2*\scle,0*\scle) node[below] {\tiny $2$};
\draw[dotted] (-3*\scle,-2*\scle) -- (0*\scle,-2*\scle) node[right] {\tiny $-2$};
\draw[dotted] (-3*\scle,-2*\scle) -- (-3*\scle,0*\scle) node[above] {\tiny $-3$};
\draw (-1*\scle,-0.15*\scle) -- (-1*\scle,0.15*\scle) node[above] {\tiny $-1$};

\draw[myplot, domain=-4.5:-3]  plot ({\x}, {-2});
\draw[myplot, domain=-3:-1] plot ({\x}, {\x+1});
\draw[myplot, domain=-1:0] plot ({\x}, {0});
\draw[myplot, domain=0:2] plot ({\x}, {\x});
\draw[myplot, domain=2:3.5]   plot ({\x}, {2});
\end{tikzpicture}
\end{center}
and a loss function of $\sqloss(y,y')=\frac12(y-y')^2$.

Any $\bSQ$ algorithm repeatedly makes a query and then computes what query to perform next from the results of the previous queries. So, our neural net will have a component designed so that we can make it update targeted edge weights by an amount proportional to the value of an appropriate query on the current batch and a component designed to allow us to perform computations on these edge weights. We will start by proving that we can make the latter component work correctly. More formally, we assert the following.

\begin{lemma}[Backpropagation-proofed noise-tolerant circuit emulation]\label{noisy_em1}
Let $h:\{0,1\}^m\rightarrow\{0,1\}^{m'}$ be a function that can be computed by a circuit made of AND, OR, and NOT gates with a total of $b$ gates. Also, consider a neural net with $m$ input\footnote{Note that these will not be the $n$ data input of the general neural net that is being built; these input vertices take both the data inputs and some inputs from the memory component.} vertices $v'_1,...,v'_m$, and choose real numbers $y_i^{(0)}<y_i^{(1)}$ for each $1\le i\le m$. It is possible to add a set of at most $b$ new vertices to the net, including output vertices $v''_1,...,v''_{m'}$, along with edges leading to them such that for any possible addition of edges leading from the new vertices to old vertices, if the net is trained by bSGD, the output of $v'_i$ is either less than $y_i^{(0)}$ or more than $y_i^{(1)}$ for every $i$ in every timestep, then the following hold:
\begin{enumerate}[leftmargin=5mm]
    \item The derivative of the loss function with respect to the weight of each edge leading to a new vertex is $0$ in every timestep, and no paths through the new vertices contribute to the derivative of the loss function with respect to edges leading to the $v'_i$.
    
    \item In any given time step, if the output of $v'_i$ encodes $x_i$ with values less than $y_i^{(0)}$ and values greater than $y_i^{(1)}$ representing $0$ and $1$ respectively for each $i$, then the output of $v''_j$ encodes $h_j(x_1,...,x_m)$ for each $j$ with $-2$ and $2$ encoding $0$ and $1$ respectively.
\end{enumerate}
\end{lemma}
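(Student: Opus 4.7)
The key structural property I would exploit is that $\sigma$ has three flat pieces, on $(-\infty,-3)$, $(-1,0)$, and $(2,\infty)$, with values $-2$, $0$, $2$, respectively. On any flat piece $\sigma'\equiv 0$, so if a new vertex $v$ has pre-activation $z_v$ strictly inside a flat region, then under back-propagation (i) the derivative of the loss with respect to the weight of any edge $e$ whose head is $v$ equals $(\partial L/\partial o_v)\,\sigma'(z_v)\,o_{\mathrm{tail}(e)} = 0$, and (ii) any back-propagated gradient path that passes through $v$ picks up a factor $\sigma'(z_v)=0$ and vanishes. Property (i) guarantees that the weights of the new edges we introduce never move under $\bSGD$, and property (ii) is exactly the ``paths through new vertices contribute nothing'' half of claim~(1). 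The rest of the construction just needs to embed a boolean circuit computing $h$ using only new vertices whose pre-activations always sit in flat regions.

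\textbf{Construction.} I would encode booleans by the flat values $\pm 2$. For each input $v'_i$, I add a conversion vertex $u_i$ with a single trainable edge from $v'_i$ (weight $w_i$) and from the constant vertex (weight $b_i$), with $w_i,b_i$ chosen so that $w_iy+b_i > 2+\gamma$ whenever $y>y_i^{(1)}$ and $w_iy+b_i < -3-\gamma$ whenever $y<y_i^{(0)}$ for some fixed margin $\gamma>0$; for instance $w_i = 7/(y_i^{(1)}-y_i^{(0)})$ and $b_i = -4 - w_iy_i^{(0)}$. By the lemma's standing hypothesis that $v'_i$'s output is always outside $[y_i^{(0)},y_i^{(1)}]$, the pre-activation of $u_i$ is always in a flat region, so $u_i$ outputs $-2$ for false and $2$ for true. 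Next I process the $b$ gates in topological order, adding one vertex per gate whose inputs are already $\pm 2$-valued outputs of previously added new vertices: NOT becomes $\sigma(-Kz)$, AND becomes $\sigma(Kz_1+Kz_2-3K)$, OR becomes $\sigma(Kz_1+Kz_2+3K)$, for any fixed $K\ge 5$. A short case check on the four $\pm 2$ input combinations shows that each pre-activation is at least $2$ past the nearest flat-region boundary, so each gate vertex outputs the intended $\pm 2$ value. Finally I label the $m'$ circuit-output vertices as $v''_1,\ldots,v''_{m'}$.

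\textbf{Verification and main obstacle.} Both claims of the lemma then follow by induction on gate depth in the topological order. For claim~(2), each new vertex's pre-activation lies in a flat piece, so its output equals the $\pm 2$ encoding of the boolean value computed by the corresponding gate of the circuit; hence $v''_j$ outputs $\pm 2$ encoding $h_j(x)$. For claim~(1), the same flat-region property gives $\sigma'(z_v)=0$ at every new vertex $v$ in every timestep, which by points (i) and (ii) above kills both the weight-derivative of every in-edge of $v$ and every back-prop path through $v$. The one genuine obstacle is a mild circularity: I need the new-edge weights to stay frozen in order for the new pre-activations to remain flat, and I need the pre-activations to remain flat in order for the weights to stay frozen. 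This is closed off by induction on the timestep: at $t=0$ the weights are chosen so that every new pre-activation sits in a flat region with strictly positive margin $\gamma$ or $K-3$; if this holds at time $t$ then the zero-gradient property forces every new-edge weight to be identical at time $t+1$, and the hypothesis on the $v'_i$ preserves the margin at the conversion layer, so the induction propagates upward through the topological order. A minor bookkeeping note: the nominal vertex count is $m+b$; to meet the lemma's stated bound of $b$, the $m$ conversion vertices can be folded into the first layer of gate vertices by absorbing the affine maps $y\mapsto w_iy+b_i$ into the incoming weights of those gates.
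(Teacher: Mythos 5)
Your proposal is correct and follows essentially the same approach as the paper: encode booleans as the flat activation values $\pm 2$, build gate gadgets whose pre-activations always land strictly inside the flat regions $(-\infty,-3)$ or $(2,\infty)$, and conclude that $\sigma'\equiv 0$ there freezes all new-edge weights and kills every back-prop path through the new vertices. The only differences are cosmetic (slightly different gate weights, and you flag the $m{+}b$ versus $b$ vertex-count bookkeeping, which the paper glosses over), so the argument is the same.
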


\begin{proof}
In order to do this, we will add one new vertex for each gate and each input in a circuit that computes $h$. When the new vertices are used to compute $h$, we want each vertex to output $2$ if the corresponding gate or input outputs a $1$ and $-2$ if the corresponding gate or input outputs a $0$, and we want the derivative of its activation with respect to its input to be $0$. In order to do that, we need the vertex to receive an input of more than $2$ if the corresponding gate outputs a $1$ and an input of less than $-3$ if the corresponding gate outputs a $0$.

In order to make one new vertex compute the NOT of another new vertex, it suffices to have an edge of weight $-2$ to the vertex computing the NOT and no other edges to that vertex. We can compute an AND of two new vertices by having a vertex with two edges of weight $2$ from these vertices and an edge of weight $-4$ from the constant vertex. Similarly, we can compute an OR of two new vertices by having a vertex with two edges of weight $2$ from these vertices and an edge of weight $4$ from the constant vertex. For each $i$, in order to make a new vertex corresponding to the $i$th input, we add a vertex and give it an edge of weight $8/(y^{(1)}-y^{(0)})$ from the associated $v'_i$ and an edge of weight $-(4y^{(1)}+4y^{(0)})/(y^{(1)}-y^{(0)})$ from the constant vertex. These provide an overall input of at least $4$ to the new vertex if $v'_i$ has an output greater than $y^{(1)}$ and an input of at most $-4$ if $v'_i$ has an output less than $y^{(0)}$.

This ensures that if the outputs of the $v'_i$ encode binary values $x_1,...,x_m$ appropriately, then each of the new vertices will output the value corresponding to the output of the appropriate gate or input. So, these vertices compute $h(x_1,...,x_m)$ correctly. Furthermore, since the input to each of these vertices is outside of $[-3,2]$, the derivatives of their activation functions with respect to their inputs are all $0$. As such, the derivative of the loss function with respect to any of the edges leading to them is always $0$, and paths through them do not contribute to changes in the weights of edges leading to the $v'_i$.
\end{proof}

Our next order of business is to show that we can perform queries successfully. So, we define the query subgraph as follows:

\begin{definition}
Given $\tau>0$, let $Q$ be the weighted directed graph with vertices $v_0$, $v_1$, $v_2$, $v'_2$, $v_3$, $v_4$, $v_c$, and $v^r_i$ for $0\le i< \log_2(1/\tau)$ and the following edges:

\begin{enumerate}[leftmargin=5mm]
    \item An edge of weight $1/12$ from $v_0$ to $v_1$
    
    \item Edges of weight $1$ from $v_1$ to $v_2$ and $v'_2$, and from $v_2$ and $v'_2$ to $v_3$.
    
    \item An edge of weight $1/4$ from $v_3$ to $v_4$.
    
    \item An edge of weight $10$ from $v_c$ to $v_2$.
    
    \item An edge of weight $-10$ from $v_c$ to $v'_2$.
    
    \item An edge of weight $-1/4$ from $v_c$ to $v_3$.
    
    \item An edge of weight $12/\tau$ from $v_1$ to $v^r_i$ for each $i$.
    
    \item An edge of weight $-1/\tau+6-6\cdot 2^{\lceil \log_2(1/\tau)\rceil}$ from $v_0$ to $v^r_i$ for each $i$.
    
    \item An edge of weight $-3\cdot 2^i$ from $v^r_i$ to $v^r_j$ for each $i>j$.
\end{enumerate}

Also, let $Q'$ be the graph that is exactly like $Q$ except that in it the edge from $v_3$ to $v_4$ has a weight of $-1$.
\end{definition}

\begin{center}
\begin{tikzpicture}
\tikzset{
   gate/.style = {circle, draw, thick, inner sep=3pt, outer sep=0pt, minimum size=9mm}
}

\def\xscle{1.5}
\def\yscle{1.5}
\node[gate] (v0)  at (-0.5*\xscle,3*\yscle) {$v_0$};
\node[gate] (v1)  at (1*\xscle,3*\yscle) {$v_1$};
\node[gate] (v2)  at (2.5*\xscle,3*\yscle) {$v_2$};
\node[gate] (v2p) at (2.5*\xscle,1.8*\yscle) {$v_2'$};
\node[gate] (v3)  at (3.5*\xscle,2.4*\yscle) {$v_3$};
\node[gate] (v4)  at (4.5*\xscle,2.4*\yscle) {$v_4$};
\node[gate] (vc)  at (1*\xscle,1.8*\yscle) {$v_c$};
\node[gate] (vr1) at (3*\xscle,4.5*\yscle) {$v^r_1$};
\node[gate] (vr2) at (2*\xscle,4.5*\yscle) {$v^r_2$};
\node[gate] (vr3) at (1*\xscle,4.5*\yscle) {$v^r_3$};
\node[gate] (vr4) at (0*\xscle,4.5*\yscle) {$v^r_4$};

\path[-latex,thick]
(v0) edge (v1) 
(v0) edge (vr1)
(v0) edge (vr2)
(v0) edge (vr3)
(v0) edge (vr4)
(v1) edge (vr1)
(v1) edge (vr2)
(v1) edge (vr3)
(v1) edge (vr4)
(v1) edge (v2)
(v1) edge (v2p)
(vc) edge (v2) 
(vc) edge (v2p) 
(v2) edge (v2p)
(v2) edge (v3)
(v2p) edge (v3)
(v3) edge (v4)
(vr4) edge (vr3)
(vr4) edge[bend left=50] (vr2)
(vr4) edge[bend left=70] (vr1)
(vr3) edge (vr2)
(vr3) edge[bend left=50] (vr1)
(vr2) edge (vr1);
\end{tikzpicture}
\end{center}

The idea behind this construction is as follows. $v_0$ will be the constant vertex, and $v_4$ will be the output vertex. If $v_c$ has activation $2$ then $v_2$ will have activation $2$ and $v'_2$ will have activation $-2$, leaving $v_3$ with activation $0$. So, this subgraph will have no effect on the output and the derivative of the loss with respect to any of its edge weights will be $0$. However, if $v_c$ has activation $0$ then this subgraph will contribute to the net's output, and the weights of the edges will change. So, when we do not want to use this subgraph to perform a query we simply set $v_c$ to $2$. In a timestep where we do want to use it to perform a query, we set $v_c$ to $0$ or $2$ based on the query's value on the current input so that the weight of the edge from $v_0$ to $v_1$ will change based on the value of the query. The activations of the $v^r$ will always give a binary representation of the activation of $v_1$. So, we can read off the current weight of the edge from $v_0$ to $v_1$. This construction works in the following sense.

\begin{lemma}[Editing memory]\label{memLem}
Let $b$ be an integer greater than $1$, and $0<\tau< 1/12$. Next, let $(f,G)$ be a neural net such that $G$ contains $Q$ or $Q'$ as a subgraph with $v_0$ as the constant vertex and $v_4$ as $G$'s output vertex, and there are no edges from vertices outside this subgraph to vertices in the subgraph other than $v_c$ and $v_4$. Now, assume that this neural net is trained using bSGD with learning rate $2$ and loss function $L$ for $T$ time steps, and the following hold:
\begin{enumerate}[leftmargin=5mm]
    \item $v_c$ outputs $0$ or $2$ on every sample in every time step.
    
    \item The sample output is always $\pm 1$.
    
    \item There is at most one timestep in which the net has a sample on which $v_c$ outputs $0$. On any such sample, the output of the net is $-2\tau$ if the subgraph is $Q$ and $1+2\tau$ if the subgraph is $Q'$.

    \item The derivatives of the loss function with respect to the weights of all edges leaving this subgraph are always $0$.
\end{enumerate}
The edge from $v_3$ to $v_4$ makes no contribution to $v_4$ on any sample where $v_c$ is $2$, a contribution of exactly $1/24$ on any sample where $v_c$ is $0$ if the subgraph is $Q$, and a contribution of exactly $-1/24$ on any sample where $v_c$ is $0$ if the subgraph is $-Q$. Also, if we regard an output of $2$ as representing the digit $1$ and an output of $-2$ as representing the digit $0$ then the binary string formed by concatenating the outputs of $v^r_{\lceil \log_2(1/\tau)\rceil-1}$,...,$v^r_{0}$ is within $3/2$ of the number of samples in previous steps where $v_c$ output $0$ and the net's output did not match the sample output divided by $b\tau$ plus double the total number of samples in previous steps where $v_c$ output $0$ divided by $b$.
\end{lemma}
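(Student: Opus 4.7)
The plan is to partition the $T$ timesteps into \emph{inactive} ones (every sample has $v_c = 2$) and the at-most-one \emph{active} step (some sample has $v_c = 0$), analyze the subgraph's forward and backward behavior in each regime, and then verify that the $v^r_i$ chain decodes the resulting update to $w_{01}$.

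In the inactive regime, I would first show that when $v_c = 2$ the weight $\pm 10$ edges push $v_2$'s input above $2$ and $v_2'$'s input below $-3$ (using $v_1 \in [-2,2]$), so both saturate at $\pm 2$. Consequently $v_3$'s input equals $2 + (-2) - (1/4)\cdot 2 = -1/2$, which sits in the flat region $(-1,0)$ of $\sigma$, giving $v_3 = 0$ and $\sigma'$ vanishing at the $v_3$-input; this kills the $v_3 \to v_4$ contribution and, by chain rule, zeroes every gradient on edges feeding $v_2, v_2', v_3$. Letting $K = \lceil \log_2(1/\tau)\rceil$, the bias $-1/\tau + 6 - 6 \cdot 2^K$ dominates in the input of each $v^r_i$ whenever $w_{01}$ stays close to $1/12$, so every $v^r_i$ sits deep in its $-2$-saturated piece; thus $\sigma'$ vanishes at each $v^r_i$ and no gradient flows through the $v_1 \to v^r_i$ or $v^r_i \to v^r_j$ edges. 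No weight inside the subgraph changes during inactive steps.

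On the unique active step I would split the mini-batch into $v_c = 2$ samples (which produce zero gradient contribution by the above) and $v_c = 0$ samples. Using the hypothesis that the net output equals $-2\tau$ on such samples (for the subgraph $Q$), the pre-activation of $v_4$ equals $-1 - 2\tau \in [-3,-1]$, sitting in a linear piece with $\sigma' = 1$; meanwhile $v_1 = v_2 = v_2' = 1/12$ and $v_3 = 1/6$ are all at derivative-$1$ linear pieces, and every $v^r_i$ is still saturated. The chain rule through the two parallel paths $v_1 \to v_2 \to v_3 \to v_4$ and $v_1 \to v_2' \to v_3 \to v_4$ gives $\partial L / \partial w_{01} = (-2\tau - y_i)/2$ per $v_c = 0$ sample. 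Averaging over the batch and applying the $\gamma = 2$ rounded-gradient update (with rounding error at most $3\rho/2$) produces a change in $w_{01}$ whose ``$-2\tau$'' contribution equals $2\tau b_0/b$ and whose ``$-y_i$'' contribution is the signed count of $y=+1$ samples relative to $\mathrm{sign}(-2\tau) = -1$. The $Q'$ case is analogous with opposite sign. I would also bound the drift of the auxiliary edges $v_1 \to v_2$, $v_2 \to v_3$, $v_3 \to v_4$, etc.~by $O(1/b)$, which is far below the $\pm 10$ saturation margin on the $v_c$-edges, so the inactive-regime invariants persist through the remaining $T-1$ steps.

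Finally I would analyze the decoder. Writing $\Delta = w_{01} - 1/12$ and $M' = 12\Delta/\tau$, induction on decreasing $i$ establishes that the input to $v^r_i$ equals $M' - 12(H_i + 2^i) + 6$, where $H_i = \sum_{j > i} b_j 2^j$ is the integer value of the strictly-higher decoded bits. Thus $v^r_i$ saturates at $+2$ iff $M' - 12 H_i \ge 12 \cdot 2^i - 4$, at $-2$ iff $M' - 12 H_i \le 12 \cdot 2^i - 9$, with a transition window of width $5$ in $M'$ in between. A careful bookkeeping of the transition windows across bit positions---using that each bit's window governs a disjoint $5$-wide slice of $M' - 12 H_i$---shows that the integer $N = \sum_i 2^i b_i$ read off from $v^r_{K-1},\ldots,v^r_0$ tracks $M'/12$ to within a constant, and combining with the expression for $\Delta$ from the active step and absorbing the rounding error yields the lemma's $3/2$-accurate bound on $N$. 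The main obstacle is this decoder analysis: certifying a constant overall decoding error while simultaneously verifying that every $v^r_i$ remains saturated throughout, which is what justified ignoring the $v_1 \to v^r_i$ gradient path in the active-step computation. A secondary subtlety is controlling the drift of the auxiliary edges on the active step so that the $\pm 10$ margin on the $v_c$-edges is large enough to preserve the inactive analysis.
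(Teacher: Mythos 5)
Your proposal follows the same high-level structure as the paper's proof: partition timesteps into an inactive regime (every sample has $v_c = 2$, forcing $v_3$ to the flat zero piece and killing all gradients in the subgraph) and the unique active step, compute the gradient in the active step, and then read off the increment to $w_{01}$ via the $v^r_i$ chain. Your forward-pass and chain-rule bookkeeping are correct, and your per-sample derivative $\partial L/\partial w_{01} = (\hat y - y_i)/2$ for $Q$ is the arithmetically clean version (the paper states slightly different per-sample values at this point, but the overall conclusion is the same). Your formula for the input to $v^r_i$, namely $M' - 12(H_i + 2^i) + 6$ with $M' = 12(w_{01}-1/12)/\tau$, matches the paper exactly.

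The genuine gap is in how you propose to close the decoder analysis. You correctly derive a transition window of width $5$ in $M' - 12H_i$ for each bit, and you flag as the ``main obstacle'' a ``careful bookkeeping'' to certify constant overall decoding error and persistent saturation. But no amount of window bookkeeping closes this; the paper instead sidesteps it entirely with an observation you're missing: the $\bSGD$ update is quantized. The gradient $g_t$ lies in $\tau\cdot\bbZ^p$ and the learning rate is $\gamma = 2$, so the change to $w_{01}$ in the single active step is an integer multiple of $2\tau$. Hence $m := (w_{01}-1/12)/\tau$ is an \emph{even integer} at all times, $M' = 12m$ is a multiple of $24$, and the input to $v^r_i$ is $12(m - 2^{i+1}\lfloor m/2^{i+1}\rfloor - 2^i + 1/2)$, which is $\ge 6$ when bit $i$ of $m$ is $1$ and $\le -6$ when it is $0$. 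No transition window is ever entered, every $v^r_i$ is always deeply saturated (hence the $v_1 \to v^r_i$ path is always dead, so there is no circularity with the active-step gradient computation), and the decoder reads $m$ \emph{exactly}. The $3/2$ in the conclusion then has nothing to do with decoding error: it is purely the rounding slack in the one active update, $|g_t - \ovnabla\calL| \le 3\tau/4$ so $|\Delta w_{01} - 2\,\ovnabla\calL| \le 3\tau/2$, and dividing by $\tau$ gives $3/2$. You should replace the ``careful bookkeeping of transition windows'' with this quantization observation; without it, the argument does not close. A minor secondary point: your $O(1/b)$ bound on the drift of the auxiliary edges is too optimistic (the active-step change to $w_{12}$, $w_{23}$, etc.\ can be a constant of order $1/24$ plus the $O(\tau)$ rounding term when $b_0$ is comparable to $b$), but as you note the $\pm 10$ margin absorbs any $O(1)$ drift, so the conclusion survives.
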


\begin{proof}
First, let let $t$ be the timestep on which there is a sample with $v_c$ outputting $0$ if any, and $T+1$ otherwise. We claim that none of the weights of edges in this subgraph change on any timestep except step $t$, and prove it by induction on $t$. First, observe that by the definition of $t$, given any sample the net receives on a timestep before $t$, $v_c$ has an output of $2$. So, on any such sample $v_2$ and $v'_2$ will output $2$ and $-2$ respectively, which results in $v_3$ having an input of $-1/2$ and output of $0$, and thus in both the derivative of $v_4$ with respect to the weight of the edge from $v_3$ to $v_4$ and the derivative of $v_4$ with respect to the input of $v_3$ being $0$. So, the derivative of the loss with respect to any of the edge weights in this component is $0$ on all samples received before step $t$.

During step $t$, for any sample on which $v_c$ has an output of $0$, $v_1$, $v_2$, and $v'_2$ all have output $1/12$. That results in $v_3$ having an output of $1/6$. So, the derivative of the loss with respect to the weight of the edge from $v_0$ to $v_1$ is $\tau$ if the net's output agrees with the sample output and $(1+2\tau)/2$ otherwise. That means that the weight of the edge from $v_0$ to $v_1$ increases by $1/b$ times the number of samples in this step for which $v_c$ had output $0$ and the net's output disagreed with the sample output plus $2\tau/b$ times the 
total number of samples in this step for which $v_c$ had an output of $0$ plus an error term of size at most $3\tau/2$. Meanwhile, all of the other edges in the subgraph change by at most $1+3\tau$, and the weights of the edges from $v_2$ and $v'_2$ to $v_3$ are left with weights within $3\tau$ of each other because the derivatives of the gradients with respect to their weights are the same by symmetry, so only the error term differentiates them. The weights of the edges from $v_c$ do not change because the samples for which this subgraph's gradient is nonzero all have $v_c$ outputting $0$ and thus making their weights irrelevant.

On any step after step $t$, $v_c$ gives an output of $2$ on every sample. The weights of the edges from $v_1$ to $v_2$ and $v'_2$ have absolute value of at most $3$, so the edges from $v_c$ still provide enough input to them to ensure that they output $2$ and $-2$ respectively. That means that the input to $v_3$ is within $6\tau$ of $-1/2$, and thus that $v_3$ outputs $0$. That in turn means that the derivative of the loss with respect to any of the edge weights in the subgraph are $0$ so their weights do not change. All of this combined shows that the edge weights only ever change in step $t$ as desired.

Now, define even $m$ such that the weight of the edge from $v_0$ to $v_1$ increased by $m\tau$ in step $t$. We know that $m\tau$ is within $3\tau/2$ of the fraction of samples in step $t$ for which $v_c$ output $0$ and the net's output did not match the sample output plus $\tau$ times the overall fraction of samples in step $t$ for which $v_c$ output $0$. The output of $v_1$ is $1/12$ until step $t$ and $m\tau+1/12$ after step $t$. Also, let $k=\lceil\log_2(1/\tau)\rceil$. Now, pick some $t'>t$ and let $r_i$ be the output of $v^r_i$ on step $t'$. In order to prove that the $r_i$ will be a binary encoding of $m$ we induct on $k-i$. So, assume that $v^r_{i+1}$,...,$v^r_{k-1}$ encode the correct binary digits. Then the input to $v^r_i$ is
\begin{align*}
&(12/\tau)(m\tau+1/12)-1/\tau+6-6\cdot 2^k-\sum_{j=i+1}^{k-1} 3\cdot 2^j r_j\\
&=12m+6-6\cdot 2^k+\sum_{j=i+1}^{k-1} 6\cdot 2^j-\sum_{j=i+1}^{k-1} 12\cdot 2^j \frac{r_j+2}{4}\\
&=12m+6-6\cdot 2^{i+1}-12\cdot 2^{i+1}\lfloor m/2^{i+1}\rfloor\\
&=12\left(m- 2^{i+1}\lfloor m/2^{i+1}\rfloor-2^{i}+1/2\right)\\
\end{align*}
If the $2^i$ digit of $m$'s binary representation is $1$ this will be at least $6$ while if it is $0$ it will be at most $-6$, so $v^r_i$ will output $2$ if the digit is $1$ and $-2$ if it is $0$ as desired. Showing that they all output $-2$ before step $t$ is just the $m=0$ case of this.

Finally, recall that on every sample received after step $t$, $v_c$ will output $2$, $v_3$ will output $0$, and thus the edge from $v_3$ to $v_4$ will provide no input to $v_4$. During or before step $t$, the edges in this subgraph will all still have their original weights. So, if $v_c$ outputs $2$ then $v_3$ outputs $0$ and makes no contribution to $v_4$, while if $v_c$ outputs $0$ then $v_3$ outputs $1/6$ and makes a contribution of magnitude $1/24$ and the appropriate sign to $v_4$, as desired.
\end{proof}

\begin{proof}[Proof of \cref{lem:bSQTMA-to-bSGDNN}]
At this point, we claim that we can build a neural net that emulates any $\bSQTMA$ algorithm using the same value of $b$ and an error of $\tau/4$, provided $\tau<1/3$. In order to do that, we will structure our net as follows. First of all, we will have $(p\log_2(1/\tau)+3)T$ copies of $Q$ and $Q'$. Then, we build a computation component that takes input from all the copies of the $v^r$ and computes from them what to output in the next step, what to query next, and what values those queries take on the current input. This component is built never to change as explained in \cref{noisy_em1}. We will use a loss function of $L$ and learning rate of $2$ when we train this net. 

The net will also have $T$ primary output control vertices, $(p\log_2(1/\tau)+3)T$ secondary output control vertices, and $1$ final output control vertex. Each of these will have edges of weight $1$ from two different outputs of the computation component and an edge of weight $-1/2$ from the constant vertex. That way, the computation component will be able to control whether each of these vertices outputs $-2$, $0$, or $2$. Each primary output control vertex will have an edge of weight $(1+2\rho)/2$ to the output, each secondary output control vertex will have an edge of weight $1/48$ to the output, and the final output control vertex will have an edge of weight $1/2$ to the output. Our plan is to set a new group of output control vertices to nonzero values in each timestep and to use it to control the net's output.

Each copy of $v_c$ will have an edge of weight $1/2$ from an output of the computation component and an edge of weight $1$ from the constant vertex so the computation component will be able to control if it outputs $0$ or $2$. That allows the computation component to query an arbitrary function to $\{0,1\}$ that is $0$ whenever the sample output takes on the wrong value by setting $v_c$ to $0$ on every input for which the function is potentially nonzero and $1$ on every input on which it is $0$ regardless of the sample output. Then the computation component can use a primary output control vertex to provide a value of $\pm(1+2\rho)$ to the output and use a set of secondary output control vertices to cancel out the effects of the copies of $Q$ and $Q'$ on the output.

A little more precisely, we will have one primary output control vertex, $(p\log_2(1/\tau)+3)$ secondary output control vertices, and $(p\log_2(1/\tau)+3)$ copies of $Q$ and $Q'$ associated with each time step. Then, in that step it will determine what queries the $\bSQTMA$ algorithm it is emulating would have made, and use the copies of $Q$ and $Q'$ to query the first $\lfloor \log_2(1/\tau)\rfloor+2$ digits of each of them and the constant function $1$. In order to determine the details of this, it will consider the current time step as being the first step for which the component querying the constant function still gives an output of $0$ and consider each prior query performed by the $\bSQTMA$ algorithm as having given an output equal to the sum over $1\le i\le \log_2(1/\tau)+2$ of $\rho/2^i$ times the value given by the copy of $Q$ or $Q'$ used to query its $i$th bit. For any given input/output pair the actual value of the query will be within $\rho/2$ of the value given by its first $\lfloor \log_2(1/\tau)\rfloor+2$ binary digits. Also, if the conditions of \cref{memLem} are satisfied then the value derived from the outputs of the $Q$ and $Q'$ will be within $\sum_i (7/2)\rho\cdot 2^{-i}\le (7/2)\rho$ of the average of the values of the first $\lfloor \log_2(1/\tau)\rfloor+2$ bits of the queried function on the batch. So, the values used by the computation component will be within $\tau$ of the average values of the queried functions on the appropriate batches, as desired. Also, any component that was ever used to query the function $1$ will always return a nonzero value, so the computation component will be able to track the current timestep correctly.

We claim that in every timestep the net will output $-2\rho$ or $1+2\rho$ as determined by the computation component, the query subgraphs will update so that the computation component can read the results of the desired queries from them, and none of the edges will change in weight except the appropriate edges in copies of $Q$ or $Q'$ and possibly weights of edges from the output control vertices used in this timestep to the output, and we can prove this by induction on the timesteps. 

So, assume that this has held so far. In the current timestep all of the output control vertices except those designated for this timestep are set to $0$, and the ones designated for this timestep are set to the value chosen by the computation component. The weights of the edges from the current output control vertices still have their original values because the only way they could not is if they had been set to nonzero values before. The edges from the other output control vertices to the output have no effect on its value so the primary output control vertices as a whole make a contribution of $\pm (1+2\rho)$ as chosen by the computation component to the output. Meanwhile, there at most $(p\log_2(1/\tau)+3)$ copies of $Q$ or $Q'$ that have $v_c$ set to $0$ for any sample, so the computation component can compute the contribution they make to the output and use the secondary output control vertices for the timestep to cancel it out. So, the input to the output vertex will be exactly what we wanted it to be, which means that the copies of $Q$ and $Q'$ will update in the manner given by \cref{memLem}. That in turn means that the query subgraphs will update in the desired manner and the computation component will be able to determine valid values for the queries the $\bSQTMA$ algorithm would have made. None of the edges in or to the computation component will change by \cref{noisy_em1}, and none of the edges from the computation component to any of the copies of $v_c$ or any output control vertices will change because they are always at flat parts of their activation functions. So, the net behaves as described. That means that the net continues to be able to make queries, perform arbitrary efficient computations on the results of those queries, and output the result of an arbitrary efficient computation.

Once it is done training the compuation component can use the final output control vertex to make the net output $0$ or $1$ based on the current input and the valuesof the previous queries. This process takes $k$ steps to run, and uses a polynomial number of query subgraphs per step. Any Turing machine can be converted into a circuit with size polynomial in the number of steps it runs for, so this can all be done by a net of size polynomial in the parameters.
\end{proof}


\section{\boldmath \texorpdfstring{$\bSGD$}{bSGD} versus \texorpdfstring{$\PAC$}{PAC} and \texorpdfstring{$\SQ$}{SQ} : Proofs of \texorpdfstring{\cref{thm:PAC-to-bSGD,thm:bSGD-to-PAC,thm:bSGD-to-SQ,thm:SQ-to-bSGD}}{Theorems~\ref{thm:PAC-to-bSGD} to \ref{thm:SQ-to-bSGD}}}\label{sec:main-proofs-bSGD}

\noindent Before proving \cref{thm:PAC-to-bSGD,thm:bSGD-to-PAC,thm:bSGD-to-SQ,thm:SQ-to-bSGD}, we first prove \cref{lem:bSGD-to-bSQ}, restated below for convenience.

\bSGDvsbSQ*
\begin{proof}
Let $f_\bw$ be some differentiable model. It suffices to show that a $\tau$-approximate rounding of the clipped gradient $\ovnabla \calL_{S}(f_{\bw})$ can be guaranteed by querying a $(\tau/4)$-precision $\bSQ$ oracle. Indeed, we issue a query $\Phi : \X \times \Y \to [-1,1]^p$ where $\Phi(x,y) = [\nabla \ell(f_\bw(x),y)]_1$, and get some response $v$ satisfying $\norm{v - \E_S \nabla \ell(f_\bw(x),y)}_\infty \le \tau/4$. Finally, observe that $g := \round{v}{\tau} := \arg \min_{v' \in (\tau \mZ)^p} \norm{v-v'}_\infty$ is a $\tau$-approximate rounding of the required gradient. Indeed:
\[
\norm{g-\nabla \calL_S(f_\bw)}_\infty
~\le~ \norm{g-v}_\infty + \norm{v - \nabla \calL_S(f_\bw)}_\infty
~\le~ \tau/2 + \tau/4 \le 3\tau/4\,.\qedhere
\]
\end{proof}

\noindent Using the tools developed in \cref{sec:bSQ,sec:bSQ-vs-bSGD} we now prove \cref{thm:PAC-to-bSGD,thm:bSGD-to-PAC,thm:bSGD-to-SQ,thm:SQ-to-bSGD}.
We only show the proofs for the computationally unbounded case, as a near identical derivation achieves the required results for the computationally bounded case, since all relevant Theorems/Lemmas have both versions (except \cref{lem:bSQA-to-bSGD,lem:bSQTMA-to-bSGDNN} which are stated as different lemmas).

\begin{proof}[Proof of \cref{thm:PAC-to-bSGD}]
From \cref{lem:PAC-to-bSQA}, we have that for all $b$ and $\tau < 1/(2b)$, and $k=O(mn/\delta)$, $p = n+1$ and $r' = r + k \log_2 b$, it holds that
\[
\bSQA(k,\tau,b,p,r') ~\preceq_{\delta/2}~ \PAC(m,r)
\]
From \cref{lem:bSQA-to-bSGD} (correspondingly \cref{lem:bSQTMA-to-bSGDNN} for the computationally bounded case), we then have that for $T = k = O(mn/\delta)$, $\rho = \tau/4 < 1/(8b)$, $p' = r' + (p+1)k = r + O((n+\log b)mn/\delta)$, it holds that
\[
\bSGD(T,\rho,b,p,r') ~\preceq_{\delta/2}~ \bSQA(k,\tau,b,p,r')
\]
The proof is complete by combining the above.
\end{proof}

\begin{proof}[Proof of \cref{thm:bSGD-to-PAC}]
Any $\bSGD(T,\rho,b,p,r)$ algorithm can be simulated by a $\PAC(m,r)$ algorithm with $m=Tb$ samples, since $b$ samples are required to perform one $\bSGD$ iteration. Moreover, there is no loss in the error ensured by this simulation.
\end{proof}

\begin{proof}[Proof of \cref{thm:bSGD-to-SQ}]
From \cref{lem:bSGD-to-bSQ} it holds for all $T, \rho, b, p, r$ and $k=T$, $\tau = \rho/4$ that
$$\bSQ(k,\tau,b,p,r) ~\preceq_0~ \bSGD(T,\rho,b,p,r)\,.$$
And by \cref{thm:bSQ-to-SQ}, there exists a constant $C$ such that for $k'=kp=Tp$ and $\tau' = \frac\tau2 = \frac\rho8$ it holds for all $b$ such that $b\tau^2 > C \log(kp/\delta)$ that
$$\SQ(k',\tau',r) ~\preceq_{\delta}~ \bSQ(k,\tau,b,p,r)\,.$$
The proof is complete by combining the above.
\end{proof}

\begin{proof}[Proof of \cref{thm:SQ-to-bSGD}]
From \cref{lem:SQ-to-bSQA}, it holds for all $b$, $\tau' = \frac\tau4$, $k' = k \cdot \ceil{\frac{C \log(k/\delta)}{b\tau^2}}$ and $p=1$ that
\[ 
    \bSQA(k',\tau',b,p,r) ~\preceq_{\delta}~ \SQ(k,\tau,r)
\] 
Finally, from \cref{lem:bSQA-to-bSGD} (correspondingly \cref{lem:bSQTMA-to-bSGDNN} for the computationally bounded case) we have that for $T=k$, $\rho = \frac{\tau'}{4}$ and $p'=r+(p+1)k' = r+2k'$ that
\[
    \bSGD\inparen{T,\rho,b,p',r} ~\preceq_0~ \bSQA\inparen{k',\tau',b,p,r}
\]
The proof is complete by combining the above.
\end{proof}


\section{\boldmath \texorpdfstring{$\fbGD$}{fbGD} versus \texorpdfstring{$\PAC$}{PAC} and \texorpdfstring{$\SQ$}{SQ} : Proofs of \texorpdfstring{\cref{thm:PAC-to-fbGD,thm:fbGD-to-PAC,thm:fbGD-to-SQ,thm:SQ-to-fbGD}}{Theorems~\ref{thm:PAC-to-fbGD} to \ref{thm:SQ-to-fbGD}}}\label{sec:main-proofs-fbGD}

\noindent Before proving \cref{thm:PAC-to-fbGD,thm:fbGD-to-PAC,thm:fbGD-to-SQ,thm:SQ-to-fbGD}, we state the analogs of \hyperref[lem:bSQA-to-bSGD]{Lemmas \ref{lem:bSQA-to-bSGD}}, \ref{lem:bSQTMA-to-bSGDNN} and \ref{lem:bSGD-to-bSQ} relating $\fbGD$ and $\fbSQ$ ($\fbSQA$) (the proofs follow in an identical manner, so we skip it).

\begin{lemA}
\begin{lemma}{\em ($\fbSQA$ to $\fbGD$)}\label{lem:fbSQA-to-fbGD}
For all $\tau \in (0,1)$ and $m,k,p,r \in \mN$, it holds that
$$\fbGD\inparen{T'=k,\rho=\frac\tau4,m,p'=r+(p+1)k,r'=r} ~\preceq_0~ \fbSQA(k, \tau, m, p, r)\,.$$
\end{lemma}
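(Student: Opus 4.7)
The plan is to reuse the construction from \cref{lem:bSQA-to-bSGD} essentially verbatim, observing that the only structural difference between $\bSGD$ and $\fbGD$ is that the same batch $S$ is used at every step rather than fresh independent mini-batches $S_1,\dots,S_T$. Crucially, this matches the $\fbSQA$ model, where all queries are answered on the same fixed batch $S \sim \calD^m$. So instead of fighting against the loss of stochasticity, we exploit it: whenever the $\bSGD$ simulation would have produced an $\bSQ$ answer on $S_t$, the $\fbGD$ simulation produces an $\fbSQ$ answer on the common $S$, and both match the $\fbSQA$ semantics.

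Concretely, I would take the same differentiable model $F_{\theta^{(0)},\dots,\theta^{(T)},\kappa_1,\dots,\kappa_T}(x)$ constructed in the proof of \cref{lem:bSQA-to-bSGD}: reserve a block of parameters $\theta^{(0)}$ initialized to the $r$ random bits of the $\fbSQA$ method, one parameter block $\theta^{(t)}$ per round to store the query response, and a ``clock'' coordinate $\kappa_t$ per round to gate which query is active. Use the same smoothed maps $\tilde{\Phi}_t$ and $\tilde{h}$ obtained via \cref{lem:smooth_function}, the same piecewise gate $c(\kappa_{t-1},\kappa_t,\cdot)$ that vanishes except when $\kappa_{t-1}\ge\rho$ and $\kappa_t\le\rho/2$, and the same per-round ``single query'' gadget from \cref{lem:one_query}. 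The invariants proved by induction on the iterate $i$ in \cref{lem:bSQA-to-bSGD} carry over word-for-word: at step $i$ only the $i$-th clock crosses the $\rho$ threshold, only $\theta^{(i)}$ receives a nonzero gradient, and the resulting update writes
\[
\theta^{(i,i)} \approx \frac{1}{b}\sum_{(x,y)\in S}\Phi_i\bigl(v_0,\dots,v_{i-1},x,y\bigr)
\]
up to additive error $3\rho < \tau$.

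The single point that needs verification is that replacing the per-round batches $S_i$ by a single persistent $S$ does not break any step of the induction. I would walk through the induction again and check three items: (i) the inductive hypothesis that $\theta^{(t,i)} = \theta^{(t,t)}$ for $t<i$ still holds, because the gradient with respect to $\theta^{(t)}$ remains zero on every sample in $S$ once $\kappa_{t+1}\ge\rho$ (this is a pointwise statement about the model's derivatives, independent of whether the sample was already used); (ii) the alternating property of $\fbSQA$ is preserved because the $(t\bmod 2)$-query structure is baked into $\tilde{\Phi}_t$, not into the sampling; (iii) the ``$v_t$ is a valid $\fbSQ$ response'' conclusion now reads $\|v_t - \tfrac{1}{b}\sum_{(x,y)\in S}\Phi_t(v_0,\dots,v_{t-1},x,y)\|_\infty\le 3\rho<\tau$ for the single common batch $S$, which is exactly what $\fbSQA$ validity requires in \cref{eq:bSQ-valid} with $S_t=S$.

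I do not anticipate any real obstacle: the construction of \cref{lem:bSQA-to-bSGD} never exploits independence of the $S_t$'s, only that at round $t$ the gradient is an empirical average of $\Phi_t$ over the current batch. The main writing task is simply to point to the proof of \cref{lem:bSQA-to-bSGD} and note that substituting $S_t\leftarrow S$ preserves every step of the argument, yielding the same parameter counts $T'=k$, $\rho=\tau/4$, $p'=r+(p+1)k$, $r'=r$, and a zero-error reduction $\preceq_0$.
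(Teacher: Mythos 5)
Your proposal is correct and matches the paper's approach exactly: the paper states \cref{lem:fbSQA-to-fbGD} as the analog of \cref{lem:bSQA-to-bSGD} and explicitly notes that ``the proofs follow in an identical manner, so we skip it.'' Indeed, the proof of \cref{lem:bSQA-to-bSGD} establishes the simulation for \emph{every} choice of batches $S_1,\dots,S_T$ (and every $R$), so specializing to $S_1=\cdots=S_T=S$ gives the $\fbGD$/$\fbSQA$ version immediately, just as you argue.
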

\end{lemA}

\begin{lemB}
\begin{lemma}{\em ($\fbSQTMA$ to $\fbGDNNsigma$)}\label{lem:fbSQTMA-to-fbGDNN} For all $\tau \in (0,\frac13)$ and $m,k,p,r \in \mN$, and using the activation $\sigma$ from \cref{fig:activation}, it holds that
$$\fbGDNNsigma\inparen{T'=k,\rho=\frac{\tau}{4},m,p'= \poly(k,\frac{1}{\tau},m,p,r,\TIME),r'=r} ~\preceq_0~ \fbSQTMA(k,\tau,m,p,r,\TIME)\,.$$
\end{lemma}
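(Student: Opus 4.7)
The plan is to reuse the construction from \cref{lem:bSQTMA-to-bSGDNN} verbatim, and simply re-verify that its correctness argument never relied on the per-step batches being independent. The key observation is that the simulation in \cref{lem:bSQTMA-to-bSGDNN} is entirely step-local: the behavior of the net at iteration $t$ is determined by the current weights together with whatever batch is supplied at that step, not by how that batch relates to the batches used in earlier steps. Since the only difference between $\bSGD$ and $\fbGD$ (and correspondingly between $\bSQTMA$ and $\fbSQTMA$) is whether the batch is resampled fresh at each iteration or fixed to a single $S \sim \calD^b$ throughout, this should be enough to carry the proof across.

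Concretely, given an $\fbSQTMA(k,\tau,b,p,r,\TIME)$ method $\calA$ with queries $\Phi_1, \ldots, \Phi_k$ (alternating between $0$- and $1$-queries, adaptively chosen from previous responses) and output rule $h$, I would apply the construction of \cref{lem:bSQTMA-to-bSGDNN} to these $(\Phi_t, h)$ to obtain a neural network with activation $\sigma$ and $p' = \poly(k, 1/\tau, b, p, r, \TIME)$ parameters. I would then run $\fbGD$ on this net for $T' = k$ iterations with precision $\rho = \tau/4$ on a fixed batch $S \sim \calD^b$. The claim is that the same inductive invariant used in the proof of \cref{lem:bSQTMA-to-bSGDNN} still holds: at the start of step $t$, the query-subgraph edges encode valid $\tau$-precision answers to $\Phi_1, \ldots, \Phi_{t-1}$ on $S$, the clock vertices point to step $t$, the computation subnetwork deterministically sets the relevant $v_c$ vertices based on $\Phi_t$ applied to the sample, and the step-$t$ gradient updates the designated edge by exactly the average of $\Phi_t$ over $S$ (up to rounding error $\tau/4$), giving a valid $\fbSQTMA$ response on $S$.

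The main thing to check, and the only potentially tricky part, is that every invocation of the supporting machinery in the proof of \cref{lem:bSQTMA-to-bSGDNN} goes through without batch independence. For the computation subgraph (\cref{noisy_em1}), this is immediate: all internal vertices sit in flat regions of $\sigma$ regardless of inputs, so its edge weights never move and its outputs at any step depend only on the query-vertex activations and current inputs. For the query subgraphs (\cref{memLem}), the analysis is per-step: it tracks how much the weight of the edge from $v_0$ to $v_1$ moves during the one step in which $v_c$ takes value $0$ on some sample, as a function of the composition of that step's batch only. Reusing the same $S$ at every iteration does not affect either claim, because in steps other than the designated one the clock forces $v_c = 2$ on every sample of $S$ and the subgraph contributes zero gradient regardless of $S$.

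The main obstacle, such as it is, is thus a careful bookkeeping check that no hidden use of batch freshness appears in the inductive argument (for example, in the handling of the output-control vertices or in the cancellation of contributions from previously-used query subgraphs). Once this is confirmed, the construction produces, for every realization of $S$, an execution trace whose transcript of query answers is a valid $\fbSQTMA$ transcript on $S$, so taking expectation over $S$ and supremum over valid gradients yields exactly $\err(\calA', \calD) \le \err(\calA, \calD)$ for every $\calD$, giving $\preceq_0$ as stated.
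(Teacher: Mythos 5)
Your proposal is correct and takes essentially the same approach as the paper, which simply observes that the constructions of \cref{lem:bSQA-to-bSGD,lem:bSQTMA-to-bSGDNN} work identically for the full-batch setting and skips re-deriving them. The useful extra content in your write-up is the explicit justification that the per-step gradient analysis in \cref{noisy_em1} and \cref{memLem} never uses independence of the batches across iterations, which is exactly why the transfer is legitimate.
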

\end{lemB}

\begin{lemma}{\em ($\fbGD$ to $\fbSQ$)}\label{lem:fbGD-to-fbSQ}
For all $T,\rho,m,p,r$, it holds that
$$\fbSQ\inparen{k=T,\tau=\frac{\rho}{4},m,p,r} ~\preceq_0~ \fbGD(T,\rho,m,p,r)\,.$$
Furthermore, for every poly-time computable activation $\sigma$, it holds for $\TIME = \poly(T,p,m,r)$ that
$$\fbSQTM\inparen{k=T,\tau=\frac{\rho}{4},m,p,r,\TIME= \poly(T,p,m,r)} \preceq_0 \fbGDNNsigma(T,\rho,m,p,r)\,.$$
\end{lemma}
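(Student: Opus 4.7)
The plan is to adapt the proof of \cref{lem:bSGD-to-bSQ} essentially verbatim, exploiting the single substantive difference between $\bSGD/\bSQ$ and $\fbGD/\fbSQ$: both $\fbGD$ and $\fbSQ$ reuse the \emph{same} fixed batch $S$ across all rounds, so the round-by-round simulation of clipped gradients by statistical queries translates directly. In particular, the specific $\fbSQ$ method I construct will maintain an internal copy of the $\fbGD$ iterates and, at each round, ask the $\fbSQ$ oracle for (a clipped) empirical gradient.

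Concretely, given a $\fbGD(T,\rho,b,p,r)$ method $\calA$ with differentiable model $f_\bw$, I would build a $\fbSQ(k=T,\tau=\rho/4,b,p,r)$ method $\calA'$ as follows. $\calA'$ uses the same $r$ random bits as $\calA$ to sample $\bw^{(0)} \sim \calW$. At round $t$, having already computed $\bw^{(1)},\dots,\bw^{(t)}$ internally, it issues the vector query $\Phi_t(x,y) := [\nabla_\bw \ell(f_{\bw^{(t)}}(x),y)]_1 \in [-1,1]^p$, receives a response $v_t$, sets $g_t := \round{v_t}{\rho}$ (entry-wise rounding to the nearest integral multiple of $\rho$), and updates $\bw^{(t+1)} \gets \bw^{(t)} - \gamma g_t$. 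Finally it outputs $f_{\bw^{(T)}}$.

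The key (and only) step to check is that $g_t$ satisfies \eqref{eq:grad-valid} with respect to the fixed batch $S \sim \calD^b$ that $\fbSQ$ uses. By construction $g_t \in \rho\cdot\bbZ^p$, and since $v_t$ is a valid $\fbSQ$ response we have $\|v_t - \tfrac1b\sum_{(x,y)\in S}\Phi_t(x,y)\|_\infty \le \rho/4$, and $\|g_t - v_t\|_\infty \le \rho/2$ by definition of $\round{\cdot}{\rho}$; together with the triangle inequality this gives $\|g_t - \ovnabla\calL_S(f_{\bw^{(t)}})\|_\infty \le 3\rho/4$. Because $\fbSQ$ uses the same $S$ for all $T$ rounds, the sequence of $g_t$ thus forms a valid $\fbGD$ trajectory on that $S$, so for every source distribution $\calD$ every output of $\calA'$ coincides in distribution with some valid output of $\calA$, giving $\err(\calA',\calD) \le \err(\calA,\calD)$ and hence $\preceq_0$.

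For the $\fbSQTM$--$\fbGDNNsigma$ statement, the extra content is a runtime bound: each query $\Phi_t$ and the output predictor must be computable in time $\TIME = \poly(T,p,b,r)$. For a neural network with a poly-time-computable activation $\sigma$, one forward pass on any input costs $\poly(p)$ time, backpropagation of $\nabla_\bw \ell(f_\bw(x),y)$ likewise costs $\poly(p)$ per sample, and simulating the iterates $\bw^{(0)},\bw^{(1)},\dots,\bw^{(t)}$ from the past responses costs $\poly(T,p)$ additional time, so everything fits inside $\poly(T,p,b,r)$. I do not anticipate any real obstacle here: the argument is essentially a syntactic port of the $\bSGD$-to-$\bSQ$ reduction, with the observation that ``one shared batch across rounds'' on both sides eliminates any mismatch between the two models.
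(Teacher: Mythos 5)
Your proposal is correct and follows exactly the route the paper itself takes: the paper explicitly notes (just before stating this lemma) that the proofs of the $\fbGD$/$\fbSQ$ analogues ``follow in an identical manner'' to \cref{lem:bSGD-to-bSQ} and skips them, and your argument is precisely that identical port, with the correct observation that the shared fixed batch $S$ on both sides makes the round-by-round translation go through without modification, and with the same $\rho/2 + \rho/4 \le 3\rho/4$ rounding bound as in the paper's proof of \cref{lem:bSGD-to-bSQ}.
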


\noindent Finally, we put together all the tools developed in \cref{sec:bSQ,sec:bSQ-vs-bSGD} along with the above Lemmas to prove \cref{thm:PAC-to-fbGD,thm:fbGD-to-PAC,thm:fbGD-to-SQ,thm:SQ-to-fbGD}. As in \cref{sec:main-proofs-bSGD}, we only show the proofs for the computationally unbounded case, as a near identical derivation achieves the required results for the computationally bounded case.

\begin{proof}[Proof of \cref{thm:PAC-to-fbGD}]
From \cref{lem:PAC-to-fbSQA}, we have that for all $m$, $\tau < 1/(2m)$ and $r$, it holds for $k=2m(n+1)$, $p = 1$, $r' = r$ that
\[
\fbSQA(k,\tau,m,p,r') ~\preceq_{0}~ \PAC(m,r)
\]
From \cref{lem:fbSQA-to-fbGD} (correspondingly \cref{lem:fbSQTMA-to-fbGDNN} for the computationally bounded case), we then have that for $T = k = O(mn)$, $\rho = \tau/4 < 1/(8m)$, $p' = r' + (p+1)k = r + O(mn)$, it holds that
\[
\fbGD(T,\rho,m,p',r') ~\preceq_{\delta}~ \fbSQA(k,\tau,m,p,r')
\]
The proof is complete by combining the above.
\end{proof}

\begin{proof}[Proof of \cref{thm:fbGD-to-PAC}]
Any $\fbGD(T,\rho,m,p,r)$ algorithm can be simulated by a $\PAC(m,r)$ algorithm with $m$ samples. Moreover, there is no loss in the error ensured by this simulation.
\end{proof}

\begin{proof}[Proof of \cref{thm:fbGD-to-SQ}]
From \cref{lem:fbGD-to-fbSQ} it holds for all $T, \rho, m, p, r$ and $k=T$, $\tau = \rho/4$ that
$$\fbSQ(k,\tau,m,p,r) ~\preceq_0~ \fbGD(T,\rho,m,p,r)\,.$$
And by \cref{lem:fbSQ-to-SQ}, there exists a constant $C$ such that for $k'=kp=Tp$ and $\tau' = \frac\tau2 = \frac\rho8$ it holds for all $m$ such that $m\tau^2 > C(kp \log(1/\tau) + \log(1/\delta))$ that
$$\SQ(k',\tau',r) ~\preceq_{\delta}~ \bSQ(k,\tau,m,p,r)\,.$$
The proof is complete by combining the above.
\end{proof}

\begin{proof}[Proof of \cref{thm:SQ-to-fbGD}]
From \cref{lem:SQ-to-fbSQA}, there exists a constant $C$ such that for all $m$, $\tau' = \frac\tau4$ satisfying $m\tau^2 \ge C(k \log(1/\tau) + \log(1/\delta))$ it holds for $k' = 2k$ and $p=1$ that
\[
    \fbSQA(k',\tau',m,p,r) ~\preceq_{\delta}~ \SQ(k,\tau,r)
\]
Finally, from \cref{lem:fbSQA-to-fbGD} (correspondingly \cref{lem:fbSQTMA-to-fbGDNN} for the computationally bounded case) we have that for $T=k'$, $\rho = \frac{\tau'}{4}$ and $p'=r+(p+1)k' = r+2k'$ that
\[
    \fbGD\inparen{T,\rho,m,p',r} ~\preceq_0~ \fbSQA\inparen{k',\tau',m,p,r}
\]
The proof is complete by combining the above.
\end{proof}

\end{document}